%% file: main.tex
\documentclass{article}
\usepackage{graphicx} 

\input{imports}

\title{Statistical Inference and Learning for Shapley Additive Explanations (SHAP)}
\author[1]{Justin Whitehouse}
\author[1]{Ayush Sawarni}
\author[1]{Vasilis Syrgkanis\thanks{VS was supported by NSF Award IIS-2337916.}}

\affil[1]{Management Science and Engineering, Stanford University\par
\texttt{\{jwhiteho, ayushsaw, vsyrgk\}@stanford.edu}}

\date{\today}

\begin{document}
\maketitle
\input{abstract}

\input{intro_new}


\input{background}




\input{shap_powers}

\input{learning_revised}

\input{conclusion}

\bibliography{bib}
\bibliographystyle{plainnat}

\appendix

\input{appendix/related_work}
\input{appendix/shap_powers_proofs}
\input{appendix/smooth_clt}
\input{appendix/learning}
\input{appendix/experiments}
\end{document}

%% file: imports.tex
\usepackage[margin=1in]{geometry}
\usepackage{amsmath}
\usepackage{algorithm}
\usepackage[noend]{algpseudocode}
\usepackage{amsfonts}
\usepackage{amssymb}
\usepackage{bm}
\usepackage[numbers]{natbib}
\usepackage{enumitem}
\usepackage{bbm}
\usepackage{comment}
\usepackage{stmaryrd}
\usepackage{theoremref}
\usepackage{mathrsfs}
\usepackage{amsthm}

\usepackage{xcolor}

\renewcommand{\P}{\mathbb{P}}
\newcommand{\N}{\mathbb{N}}

\newcommand{\E}{\mathbb{E}}
\newcommand{\R}{\mathbb{R}}
\newcommand{\G}{\mathbb{G}}

\newcommand{\Rem}{\mathrm{Rem}}
\newcommand{\sech}{\mathrm{sech}}
\newcommand{\Bias}{\mathrm{Bias}}

\newcommand{\calF}{\mathcal{F}}
\newcommand{\sgn}{\mathrm{sgn}}
\newcommand{\calX}{\mathcal{X}}
\newcommand{\calV}{\mathcal{V}}
\newcommand{\calY}{\mathcal{Y}}
\newcommand{\calH}{\mathcal{H}}
\newcommand{\calG}{\mathcal{G}}

\newcommand{\calR}{\mathcal{R}}

\newcommand{\calC}{\mathcal{C}}

\newcommand{\calZ}{\mathcal{Z}}
\newcommand{\calA}{\mathcal{A}}
\newcommand{\calN}{\mathcal{N}}
\newcommand{\calS}{\mathcal{S}}
\newcommand{\calW}{\mathcal{W}}

\newcommand{\calU}{\mathcal{U}}

\newcommand{\Cov}{\mathrm{Cov}}

\newcommand{\Var}{\mathrm{Var}}

\newcommand{\1}{\mathbbm{1}}
\newcommand{\wh}[1]{\widehat{#1}}
\newcommand{\wt}[1]{\widetilde{#1}}
\newcommand{\wb}[1]{\overline{#1}}

\newcommand{\corr}{\mathrm{corr}}

\newcommand{\trunc}{\mathrm{trunc}}

\newcommand{\naive}{\mathrm{naive}}

\algrenewcommand\algorithmicrequire{\textbf{Input:}}
\algrenewcommand\algorithmicensure{\textbf{Output:}}

\newtheorem{theorem}{Theorem}
\numberwithin{theorem}{section}
\newtheorem{lemma}[theorem]{Lemma}

\newtheorem{prop}[theorem]{Proposition}

\newtheorem{ass}{Assumption}

\theoremstyle{definition}
\newtheorem{definition}[theorem]{Definition}

\theoremstyle{remark}
\newtheorem{remark}[theorem]{Remark}

\usepackage{hyperref}
\hypersetup{
	colorlinks=false,
	bookmarks=true,
	breaklinks=true,
	hidelinks=true,
	pdfpagemode=empty,
}
\usepackage{nameref}
\usepackage{authblk}
\usepackage[noabbrev, capitalize, nameinlink]{cleveref}

\usepackage[colorinlistoftodos,textsize=footnotesize]{todonotes}


%% file: abstract.tex
\begin{abstract}

The SHAP (short for Shapley additive explanation) framework has become an essential tool for attributing importance to variables in predictive tasks.
In model-agnostic settings, SHAP uses the concept of Shapley values from cooperative game theory to fairly allocate credit to the features in a vector $X \in \R^d$ based on their contribution to an outcome $Y$. 
While the explanations offered by SHAP are local by nature, learners often need global measures of feature importance in order to improve model explainability and perform feature selection. 
The most common approach for converting these local explanations into global ones  is to compute either the mean absolute SHAP or mean squared SHAP.
However, despite their ubiquity, there do not exist approaches for performing statistical inference on these quantities.

In this paper, we take a semi-parametric approach for calibrating confidence in estimates of the $p$th powers of Shapley additive explanations. We show that, by treating the SHAP curve as a nuisance function that must be estimated from data, one can reliably construct asymptotically normal estimates of the $p$th powers of SHAP. When $p \geq 2$, we show a de-biased estimator that combines U-statistics with Neyman orthogonal scores for functionals of nested regressions is asymptotically normal.
When $1 \leq p < 2$ (and the hence target parameter is not twice differentiable), we construct de-biased U-statistics for a smoothed alternative. In particular, we show how to carefully tune the temperature parameter of the smoothing function in order to obtain inference for the true, unsmoothed $p$th power. 
We complement these results by presenting a Neyman orthogonal loss that can be used to learn the SHAP curve via empirical risk minimization and discussing excess risk guarantees for commonly used function classes.

\end{abstract}

%% file: intro_new.tex
\section{Introduction}
\label{sec:intro}
Model-agnostic variable importance metrics (VIMs), which quantify the degree to which features in a vector $X \in \R^d$ contribute to an outcome $Y$, play an integral role in explainable machine learning (ML) and statistics.
Because model-agnostic VIMs attempt to explain the true, unknown data generating process, these metrics may be preferred to model-specific ones, which instead justify how features in $X \in \R^d$ explain the predictions of a model $\wh{\mu}(X)$. 
In medicine, doctors may leverage model-agnostic VIMs to decide which of a patient's attributes to use in deciding whether or not to administer a risky drug. Likewise, in online marketing, one may use such VIMs to explain to stakeholders which customer characters drive engagement with advertisements.
Given the importance of model-agnostic VIMs in feature selection, decision making, and hypothesis testing problems, it is essential to construct valid confidence intervals for these metrics. 

Of the countless variable importance metrics (either model-agnostic or model-specific) that have proposed~\citep{breiman2001random, fisher2019all, ribeiro2016should, bach2015pixel, van2006statistical, lei2018distribution,  zhang2020floodgate, wang2023total}, one of the most commonly leveraged in both industry and academic settings is the SHAP (Shapley additive explanation) framework~\citep{lundberg2017unified, lundberg2018consistent, lundberg2020local, lundberg2025shap, R-shapley}. 
This framework, which is based off of Shapley values from cooperative game theory~\citep{shapley1953value, roth1988introduction, winter2002shapley}, locally attributes credit to features in a way that satisfies four key axioms (which are noted in Section~\ref{sec:back}). In its model-agnostic formulation, given a random feature vector $X \in \R^d$, an outcome of interest $Y \in \R$, and a target feature $a \in [d]$, the SHAP curve $\phi_a(x)$ at a test point $x \in \R^d$ is defined as 
\begin{equation}
\label{eq:shap}
\phi_a(x) := \E_{X'}\Bigg[\underbrace{\frac{1}{d}\sum_{S \subset [d] : a \notin S}\binom{d - 1}{|S|}^{-1}\big\{\mu_0(x_{S \cup a}, X_{-S \cup a}') - \mu_0(x_{S}, X_{-S}')\big\}}_{= :\psi_a(x, X'; \mu_0)}\Bigg]
\end{equation}
where $\mu_0(x) := \E[Y \mid X = x]$ is the (unknown) regression function, the vector $(x_S , X_{-S})$ has $i$th coordinate $x_i$ if $i \in S$ and $X_i$ otherwise for $S \subset [d]$, and the expectation is over the distribution of covariates $X'$.  While the above formulation is complicated, the SHAP curve $\phi_a(x)$ can be understood as measuring the incremental contribution of a feature $a \in [d]$ to the regression $\mu_0(x)$ averaged over all feature permutations.

The SHAP framework, by nature, provides local explanations of feature importance. In many settings, learners may desire global measures of variable importance instead.
While there are many possible ways one could distill the SHAP curve into a global variable importance metric, the two most common approaches are to compute the mean absolute SHAP $\E|\phi_a(X)|$ and the mean squared SHAP $\E[\phi_a(X)^2]$~\citep{R-shapley, lundberg2025shap, keany2020borutashap,kraev2024shap, marcilio2020explanations, verhaeghe2023powershap, sebastian2024feature, hardt2021amazon, lundberg2018consistent, lundberg2020local, williamson2020efficient, dibaeinia2025interpretable, liu2022diagnosis, herbinger2024decomposing}.
In particular, these powers of SHAP are leveraged in algorithms for feature selection~\citep{marcilio2020explanations,dibaeinia2025interpretable,sebastian2024feature} and are commonly computed in open source software~\citep{R-shapley, lundberg2025shap, keany2020borutashap}. However, despite their ubiquity, there do not exists methods for constructing confidence intervals for $\E|\phi_a(X)|$ and $\E[\phi_a(X)^2]$. Further, if a learner tries to naively construct estimates of these powers by plugging in a model $\wh{\mu}$ predicting $\mu_0$ into $\psi_a(X, X'; \mu)$ outlined in Equation~\eqref{eq:shap} and take a sample average, first order biases will prevent the estimate from being asymptotically normal unless $\E\left[(\wh{\mu}(X) - \mu_0(X))^2\right]^{1/2} = o_\P(n^{-1/2})$. That is, the learner will need to estimate $\mu_0$ at unattainable, super-parametric rates. 

Given the importance of the SHAP curve $\phi_a(X)$, mean absolute SHAP, and mean squared SHAP in explainable ML, it is essential to develop estimators for these quantities. In this paper, we address this desideratum, showing that one can obtain asymptotically normal estimates for the $p$th power of the SHAP curve for $p \geq 1$ and high-fidelity estimates of the SHAP curve $\phi_a$ itself by combining elements from semi-parametric statistics, statistical learning theory, and analytical smoothing methods. 

\subsection{Our Contributions}
\label{sec:intro:contributions}
We present an estimator for  $\theta_p := \E|\phi_a(X)|^p$, the expected absolute $p$th power of the SHAP curve $\phi_a(X)$. 
Our estimator combines techniques related to de-biased inference on functionals of nested regressions, analytical smoothing of irregular functionals, and the theory of U-statistics to provide an asymptotically normal estimate of $\theta_p$.
We view the SHAP curve $\phi_a$ as a non-parametric nuisance that must be learned from data, and thus complement our main results by describing a de-biased loss that one can minimize over flexible classes of learners to estimate $\phi_a(X)$.
In more detail, our contributions are as follows.
\begin{enumerate}
    \item In Section~\ref{sec:shap:regular}, we describe an estimator for $\theta_p := \E|\phi_a(X)|^p$ in the setting $p \geq 2$.
    We start by describing a de-biased/Neyman orthogonal score for $\theta_p$ that depends on two independent samples $Z =(X, Y), Z'= (X', Y')$ from the data generating distribution. Our score takes the form
    \[
    m_p(Z, Z'; g) = |\phi(X)|^p + \underbrace{\gamma(X)\{\psi_a(X, X'; \mu) - \phi(X)\}}_{\text{De-biasing for $\phi_a$}} + \underbrace{\alpha(X')\{Y - \mu(X)\}}_{\text{De-biasing for $\mu_0$}},
    \]
    where $g = (\phi, \mu, \gamma, \alpha)$, $\phi(X)$ and $\mu(X)$ represent estimates of $\phi_a(X)$ and $\mu_0(X)$ respectively, and $\gamma(X)$, $\alpha(X)$ are estimates of ``correction'' terms, which we describe in detail later. 
    We then construct a U-statistic for $\theta_p$ by symmetrizing $m_p(Z,Z'; g)$ and prove asymptotic linearity (and hence normality).
    \item In Section~\ref{sec:shap:irregular}, we construct estimates for $\theta_p$ in the setting $1 \leq p < 2$. Here, $\theta_p$ is an \textit{irregular functional}, as the mapping $x \mapsto |x|^p$ is not generally twice differentiable (it is not even first differentiable when $p = 1$). To handle this, we propose replacing $\theta_p$ with  $\theta_{p, \beta} := \E\left[\varphi_{p, \beta}(\phi_a(X))\right]$,
    where $\varphi_{p, \beta}(u)$ is an appropriately-chosen smoothed surrogate. We then construct a de-biased score and U-statistic for the smoothed surrogate. We show that if $\beta > 0$ is carefully chosen, the U-statistic will be asymptotically normal about  $\theta_p$, thus allowing valid inference. In Appendix~\ref{app:clt}, we additionally  prove a central limit for smoothed U-statistics involving nuisance components, which may be of broader interest. 
    \item In Section~\ref{sec:learning}, we complement our inferential results by describing a statistical learning approach for estimating the entire SHAP curve $\phi_a(X)$. We start by describing an appropriate de-biased loss for $\phi_a(x)$ that again leverages two samples $Z, Z'$ from the data generating distribution. We then show that, by performing empirical risk minimization (ERM) according to this loss, one can obtain favorable  convergence guarantees that depend on the complexity of the underlying function class. We evaluate our risk minimization procedure in Appendix~\ref{app:experiments}.
\end{enumerate}


\subsection{Related Work}
\label{sec:intro:related}

We now briefly discuss related work. See Appendix~\ref{app:related} for a more detailed overview. The SHAP framework was developed by \citet{lundberg2017unified}, who use Shapley values \citep{shapley1953value} to define a variable importance metric satisfying desirable consistency properties. In followup work, \citet{lundberg2018consistent} describe efficient dynamic programming algorithms for computing SHAP on trees, and \citet{lundberg2020local} recommend computing the mean absolute SHAP to measure global variable importance. Many works since have used  used either the local SHAP curve \citep{yap2021verifying, mahajan2023development, aas2021explaining, peng2023research, poudel2024explaining} or global measures like the mean absolute SHAP \citep{keany2020borutashap, marcilio2020explanations, verhaeghe2023powershap, sebastian2024feature, hardt2021amazon, williamson2020efficient, dibaeinia2025interpretable, liu2022diagnosis, herbinger2024decomposing} to describe variable importance.
Open source software~\citep{R-shapley, lundberg2025shap,kraev2024shap} also gives users the option to compute both local and global SHAP values as well. 
However, these works do not provide a means of doing inference on $\phi_a(X)$ or powers of SHAP.

Some works do provide means of performing inference on either the SHAP curve itself or related global variable importance metrics. \citet{miftachov2024shapley} describe a method based on kernel smoothing for performing inference on $\phi_a(x_0)$, the value of the SHAP curve at a target point $x_0$. However, the authors obtain slower than $\sqrt{n}$-rates and require smoothness assumptions on $\phi_a$. \citet{morzywolek2025inference} study inference on the projection of $\phi_a$ onto a ball in RKHS's, but their results do not enable inference on/learning of the true curve. Closely related to our contributions are those of \citet{williamson2020efficient} and \citet{williamson2021nonparametric, williamson2023general}, who propose Shapley Population Variable Importance Metrics (SPVIMs) for measuring the global contributions of features to an outcome of interest $Y$. The SPVIM framework can be used to measure the contributions of features to quantities such as $R^2$ an classification accuracy, but does not apply to powers of the SHAP or local importance measures. Further, it is not possible to extend the estimator to our setting, since it would suffer bias from first-order nuisance estimation errors.

Lastly, our work is related to the long literature on semi-parametric inference and causal inference~\citep{kosorok2008introduction, bickel1993efficient, kennedy2016semiparametric, van2006targeted, chernozhukov2018double}. Most closely-related to our work are those that focus on performing semi-parametric inference on functionals of nested regressions~\citep{chernozhukov2022nested}, performing inference on functionals involving covariate shifts~\citep{chernozhukov2023automatic}, and works focusing on the statistical learning of heterogeneous treatment effects~\citep{foster2023orthogonal}.
Our work is also related to works that use either softmax smoothing~\citep{whitehouse2025inference,chen2023inference} or softplus smoothing~\citep{levis2023covariate, goldberg2014comment} for inference on irregular functionals. Our smoothed surrogate (which is based on $\tanh(x)$) can be viewed as contributing to this literature on irregular functionals, and may have broader applications.


%% file: background.tex
\section{Background}
\label{sec:back}
\paragraph{Notation}
Before presenting our main assumptions and results, we present notation we will use throughout the sequel. We let $[d] := \{1, 2, \dots, d\}$, and for any $S \subset [d]$ and $x, y \in \R^d$, we let $(x_S, y_{-S})$ be the vector defined as having $x_i$ in its $i$th coordinate if $i \in S$, and $y_i$ in its $i$th coordinate otherwise. Likewise, we let $x_S := (x_i : i \in S) \subset x$ denote the coordinates of $x$ having their indices in the set $S$. We let $\E[\cdots]$ and $\P(\cdots)$ denote expectation and probability over all sources of randomness. If there are independent random variables $U \in \calU$ and $V \in \calV$ with  distributions $P_U$ and $P_V$ respectively, and if $f : \calU \times \calV \rightarrow \R^p$ is a measurable function of $U$ and $V$, we let $E_V[f(U, V)] := \int_{\calV} f(U, v)P_V(dv)$ denote the expectation of $f(U, V)$ just with respect to the randomness in $V$. Likewise, we define for $q > 1$, $\|f\|_{L^q(P_V)} := \left(\int_{\calZ}\|f(U, v)\|_q^q P_V(dv)\right)^{1/q} = \left(\E_V\left[\|f(U, V)\|_q^q\right]\right)^{1/q}$ where $\|x\|_q := (\sum_i |x_i|^q)^{1/q}$. We define $\|f\|_{L^\infty(P_V)} := \inf\left\{t \geq 0 : \P_V\left(\|f(U, V)\|_\infty \geq t\right) = 0\right\}$ where $\|x\|_\infty := \max_i |x_i|$. 

Given a Banach space $B$,  a mapping $T : B \rightarrow \R^d$, and  $f^\ast, g \in B$, we define the $k$th Gateaux derivative of $T$ at $f^\ast$ in the direction $g$ as $D_f^k T(f^\ast)(g) := \frac{\partial^k}{\partial t^k}T(f^\ast + tg) \vert_{t = 0}$. If we have  a bi-variate map $T : B\times B \rightarrow \R^d$ and $f_1^\ast, f_2^\ast, g_1, g_2 \in B$, we let the cross Gateaux derivative be defined as $D_{f_1, f_2}T(f_1^\ast, f_2^\ast)(g_1, g_2) := \frac{\partial^2}{\partial s \partial t}T(f_1^\ast + tg_1, f_2^\ast + s g_2)\vert_{t = s = 0}$. For functions $f, g : \calX \rightarrow \R^d$, we define the intervals $[f, g] := \{\lambda f(x) + (1 - \lambda)g(x) : \lambda \in [0, 1]\}$. Lastly, we define the sign function as $\sgn(u) := \1\{u > 0\} - \1\{u < 0\}$.

\paragraph{Data Generating Processes and Assumptions}
We let $P_Z$ be a distribution over $Z = (X, Y) \in \calZ := \calX \times \calY$, where $X \in \calX \subset \R^d$ and $Y \in \calY \subset \R$. We assume $X$ admits a density $p(x)$ with respect to some dominating product measure $\nu$ on $\calX$. For $S \subset [d]$, we let $p_S(x_S)$ denote the marginal density of $X_S$, and $p_S(x_S \mid x_{-S})$ the conditional density of $x_S$ given $X_{-S} = x_{-S}$.  We let $\mu_0(x) := \E[Y \mid X = x]$ denote the regression of $Y$ onto covariates $X$. We assume the following about the data generating distribution.
\begin{ass}
\label{ass:dgp}
Let $Z := (X, Y)$, $Z' :=(X', Y') \sim P_Z$ be independent and $S \subset [d]$. First, we assume that $Y$ is bounded almost surely, i.e.\ $\|Y\|_{L^\infty(P_Z)} < \infty$. Second, we assume that the distributions of $(X_S, X'_{-S})$ and  $X$ are mutually absolutely continuous.\footnote{A measure $P$ is absolutely continuous with respect to $Q$ if $Q(E) = 0$ implies $P(E) = 0$ for any event $E$.}
Finally, we assume the likelihood ratio $\omega_S(X) \equiv \omega_S(X_S, X_{-S}) := \frac{p_S(X_S)p_{-S}(X_{-S})}{p(X_S, X_{-S})}$ has $\left\|\omega_S\right\|_{L^{2 + \epsilon}(P_X)} \leq C < \infty$ for some $C, \epsilon > 0$.
\end{ass}
The first assumption is standard in both causal inference and statistical learning, and can be typically relaxed to moment conditions on $Y$ at the cost of complicating analysis. The second assumption will 
hold under minor assumptions on the distribution of covariates $X$. In particular, it will hold if $\rho$ is the Lebesgue measure and $p(x) > 0$ for all $x \in \R^d$ or if $\rho$ is a mixed product of Lebesgue and counting measures, $\calX$ can be written as a cartesian product, and $p(x) >0$ on $\calX$. The assumption will generally fail when $(X_S, X'_{-S})$ and $X$ have differing supports. Lastly, the final assumption is a mild regularity assumption on the likelihood. 

\paragraph{Shapley Values and Shapley Additive Explanations} Shapley values, introduced in seminal work by \citet{shapley1953value}, provide a payout system for fairly attributing value to players in a \textit{coalitional game}. In more detail, a coalitional game consists of a set $[d]$ of players and a value function $v : 2^{[d]} \rightarrow \R$. The Shapley value for player $a \in [d]$ is defined to be 
\[
\phi_{a}(v) := \sum_{S \subset [d] :a \notin S}w(S)\left\{v(S \cup \{a\}) - v(S)\right\}, \;\text{where } w(S) := \frac{1}{d}\binom{d - 1}{|S|}^{-1}.
\]
Noting that $\sum_{a \notin S}w(S) = 1$, we see the weights describe a probability distribution, and hence we can write $\E_S$ for the expectation with respect to the $w(S)$'s. In words, the Shapley value measures the incremental gain in value obtained by including player $a$ in the coalition $S \subset [d] \setminus \{a\}$ averaged over all permutations of the $d$ players. A key result due to \citet{shapley1953value} shows that the Shapley values $\phi_1(v), \dots, \phi_d(v)$ are the unique payouts satisfying four key properties: \textit{efficiency}, \textit{symmetry}, the \textit{null player} property, and \textit{linearity}.\footnote{In more detail, the first property, called \textit{efficiency}, stipulates that a fair payout should sum to the total value of including all players minus the value of not including any players at all, i.e.\ $\sum_{a = 1}^d\phi_a(v) = v([d]) - v(\emptyset)$. The second axiom, \textit{symmetry}, states that if there are two players $a, b \in [d]$ such that $v(S \cup \{a\}) = v(S \cup \{b\})$ for all $S \subset [d] \setminus \{a, b\}$, then we should have $\varphi_{a}(v) = \varphi_b(v)$. The \textit{null player} property states that if there is a player $a \in [d]$ such that $v(S \cup \{a\}) = v(S)$ for all $S \subset [d]\setminus \{a\}$, then we should have $\phi_a(v) = 0$.
Lastly, \textit{linearity} states that if $v, w : 2^{[d]} \rightarrow \R$ are two value functions, $A, B \in \R$ are scalars, then for all $a \in [d]$ we should have $\phi_a(A v + B w) = A\phi_a(v) + B\phi_a(w)$.}

Authors  have observed that aforementioned four properties of Shapley values are naturally desirable in VIMs. In particular, by carefully choosing the value function $v$, one can arrive at metrics that fit an application at hand. For instance, \citet{owen2017shapley} take the value function as $v(S) := \Var\left[\E(\mu_0(X) \mid X_s)\right]$ in an attempt to circumvent some shortcomings of the ANOVA decomposition. Likewise, \citet{covert2020understanding} and \citet{williamson2023general} introduce SAGE and SPVIMs respectively, which take the value function to be $v(S) := \max_{f \in \calF_S}\calV(f; P_Z)$, where $\calV$ is some measure of function predictiveness ($R^2$, area under ROC curve, classification accuracy) and $\calF_S$ is a class of functions only depending on coordinates in $S$. 

The most commonly-used VIMs defined in terms of Shapley values, and the one considered within this paper, is the Shapley additive explanation, or SHAP, framework~\citep{lundberg2017unified}. In its modern instantiation, the value function (which depends on a target point $x \in \R^d$) is taken to be $v_x(S) := \E[\wh{\mu}(x_S, X_{-S})]$ in model-specific settings and $v_x(S) := \E[\mu_0(x_S, X_{-S})]$ in model-agnostic settings. In our model-agnostic setting, one can check that defining $\phi_a(x) := \phi_a(v_x)$ for the latter choice of $v_x$ is equivalent to defining the SHAP curve $\phi_a(x)$ in terms of Equation~\eqref{eq:shap}. Since the population regression $\mu_0$ is unknown and must be estimated from data, it is helpful to define a flexible score that can depend on any regression estimate $\mu : \calX \rightarrow \R$. As in Equation~\eqref{eq:shap}, we define $\psi_a(X, X'; \mu)$ by $\psi_a(X, X'; \mu) := \frac{1}{d}\sum_{\substack{S \subset [d] \setminus a}}\binom{d - 1}{|S|}\left\{\mu(X_{S \cup a}, X'_{-S \cup a}) - \mu_0(X_S, X'_{-S})\right\}$.
Note that the expectation of $\psi_a$ over $X'$ is precisely the SHAP curve $\phi_a(X)$, i.e.\ we have $\phi_a(X) = \E_{X'}[\psi_a(X, X'; \mu_0)]$.

\begin{remark}
\label{rmk:shap_defn}
We note that the above choice of value function used in defining the SHAP curve $\phi_a(X)$ is not the only one to have been considered in the broader literature. For instance, in the original development of SHAP, the value function for a target $x \in \R^d$ and $S \subset [d]$ was taken to be $v_x(S) := \E[\mu_0(x_S, X_{-S}) \mid X_S = x_s]$, i.e.\ a conditional expectation vs.\ a marginal one. However, in more recent works, the marginal perspective has proven dominant over the original, conditional one. \citet{janzing2020feature} note that the marginal expectation possess the desirable interpretation of counter-factually setting the variables $X_S$ to a value $x_S \in \R^{|S|}$ via a ``do operation'' in Pearl's do-calculus~\citep{pearl2014probabilistic}. Further, in \citet{lundberg2020local}, the authors of the original SHAP paper advocate adopted the marginal value function due to its causal interpretation. Further, the most commonly used open source software packages for computing SHAP leverage the marginal value function~\citep{R-shapley, lundberg2025shap}.  
\end{remark}


%% file: shap_powers.tex
\section{Inference on Power of SHAP}
\label{sec:shap}

We now present estimators based on de-biased U-statistics for the expected $p$th absolute power of the SHAP curve, $\theta_p := \E|\phi_a(X)|^p$. As noted in the introduction, reporting powers of SHAP provides a convenient way of converting an inherently local of measure of variable importance, the SHAP curve, into a global one, and thus are regularly used in practice.  While the approaches discussed below could naturally generalize to other functionals of the SHAP curve (e.g.\ expected SHAP, variance of SHAP) we restrict ourselves to powers of SHAP not only due to the aforementioned practical importance, but also to simplify exposition.

First, we consider the setting $p \geq 2$, for which we present an asymptotically linear estimator based on Neyman orthogonal U-statistics for nested regressions.
Next, we consider the more challenging setting of $1 \leq p < 2$, in which the target functional $\theta_p = \E|\phi_a(X)|^p$ is not twice differentiable at $\phi_a(X) = 0$ . To work around non-differentiability, we present a smoothed alternative to our U-statistic that replaces the $p$th power with an appropriately scaled hyperbolic tangent function. We show how to tune the smoothed parameter so that the smoothed estimate converges appropriately to the true, unsmooth power of SHAP. 


\subsection{Inference when $p \geq 2$}
\label{sec:shap:regular}

We start by describing a de-biased estimator for $\theta_p := \E|\phi_a(X)|^p$ in the setting $p \geq 2$. We motivate our estimator by discussing the failure of a naive, plug-estimate estimate for $\theta_p$. Suppose the learner is given a sample $Z_1, \dots, Z_n \sim P_Z$ of i.i.d.\ observations and an independent estimate $\wh{\phi}(x)$ of the SHAP curve $\phi_a(x)$ (say, constructed from the loss outlined in Section~\ref{sec:learning} below). Then, a seemingly reasonable estimate of $\theta_p$ would be the following plug-in estimate: $\wh{\theta}_n^{\naive} := \frac{1}{n}\sum_{i = 1}^n \big|\wh{\phi}(X)\big|^p$.

However, the naive plug-in estimate is highly sensitive to first-order errors in $\wh{\phi}$. More precisely, in order to show asymptotic normality of the scaled difference $\sqrt{n}(\wh{\theta}_n^{\naive} - \theta_p)$, one would need super-parametric convergence rates for the SHAP curve estimate, i.e.\ $\big\|\wh{\phi} - \phi_a\big\|_{L^2(P_X)} = o_\P(n^{-1/2})$. This rate of convergence is generally impossible even under parametric assumptions. The standard semi-parametric approach for converting naive plug-in estimators into more robust ones is to ``de-bias'' the plug by subtracting a linear correction term~\citep{chernozhukov2018double}. In particular, this correction term, when evaluated at the true nuisances, should exactly cancel the Gateaux derivative of the naive estimator with respect to the nuisance component (here $\phi_a$). This is formalized through the concept of \textit{Neyman Orthogonality}.
\begin{definition}
\label{def:neyman}
Let $\Theta$ and $\calG$ be spaces of functions (e.g.\ $L^\infty(P_X)$ or $L^2(P_X)$). Let $g_0 \in \calG$ denote a true nuisance function and $M : \calG \rightarrow \R$ denote a functional. We say $M$ is Neyman orthogonal if $D_g \E[M(g_0)](\Delta_g) = 0$ for all $\Delta_g \in \calG - g_0$.
\end{definition}

In our setting, because $\phi_a$ is specified as a nested regression involving nuisance (here, the nested nuisance is the regression $\mu_0$) the first-order correction with respect to $\phi$ is itself sensitive to misestimation $\mu_0$. Thus, we actually need two correction terms in order  to cancel all first order nuisance estimation errors. We arrive the following Neyman orthogonal two sample score for $\theta_p$. We prove the following in Appendix~\ref{app:shap_proofs}

\begin{prop}
\label{prop:ortho_regular}
Let $p \geq 2$ and $Z = (X, Y), Z'= (X', Y') \sim P_Z$ be independent samples obeying Assumption~\ref{ass:dgp}, and let $g = (\phi, \mu, \gamma, \alpha)$ denote a vector functions of $X$ where $\phi, \mu, \gamma \in L^\infty(P_X)$ and $\alpha \in L^2(P_X)$. Define the two-sample score $m_p(Z, Z'; g)$ by
\[
m_p(Z, Z'; g) := |\phi_p(X)|^p +   \gamma(X)\left\{\psi_0(X, X'; \mu) - \phi(X)\right\} + \alpha(X')\left\{Y' - \mu(X')\right\}.
\]
Further, define the population functional $M_p(g) := \E[m_p(Z, Z'; g)]$ with true nuisances  $g_p = (\phi_a, \mu_0, \gamma_p, \alpha_p)$, with  $\gamma_p(X) := p\sgn(\phi_a(X))|\phi_a(X)|^{p - 1}$ and  
\begin{align*}
\alpha_p(X') := \frac{1}{d}\sum_{S \subset [d]: a \notin S}\binom{d - 1}{|S|}^{-1}\Big\{\gamma_p^{S \cup a}(X_{S \cup a}') \omega_{S \cup a}(X') - \gamma_p^S(X_S')\omega_S(X')\Big\},
\end{align*}
where $\gamma_p^S(X_S) := \E(\gamma_p(X) \mid X_S)$ and $\omega_S$ is as above. Then, $\theta_p = M(g_0)$ and $M(g)$ is Neyman orthogonal.
\end{prop}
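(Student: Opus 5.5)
We will verify the two claims separately: first that $M_p(g_p) = \theta_p$, and then Neyman orthogonality. Throughout we read the typos in the statement in the obvious way: $|\phi_p(X)|^p$ means $|\phi(X)|^p$, and $\psi_0$ means $\psi_a$ with the binomial weights inverted, as in Equation~\eqref{eq:shap}.

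\textbf{Identification.} At $g_p$ the second and third terms of $m_p$ vanish in expectation. For the second term, condition on $X$: since $X'$ is independent of $X$, $\E_{X'}[\psi_a(X, X'; \mu_0)] = \phi_a(X)$, so this term has mean zero because $\gamma_p$ is bounded. For the third term, condition on $X'$: $\E[Y' - \mu_0(X') \mid X'] = 0$ and $\alpha_p \in L^2$ (by H\"older with $\omega_S \in L^{2+\epsilon}$ and $\gamma_p$ bounded), so this term also has mean zero. Hence $M_p(g_p) = \E|\phi_a(X)|^p = \theta_p$.

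\textbf{Orthogonality.} We compute the Gateaux derivative of $M_p$ at $g_p$ separately in each direction $\Delta_\phi, \Delta_\mu, \Delta_\gamma, \Delta_\alpha$; linearity of the derivative then gives the result.
\begin{itemize}
\item \emph{Direction $\Delta_\alpha$.} The derivative is $\E[\Delta_\alpha(X')(Y' - \mu_0(X'))] = 0$ by the tower property.
\item \emph{Direction $\Delta_\gamma$.} The derivative is $\E[\Delta_\gamma(X)\{\psi_a(X, X'; \mu_0) - \phi_a(X)\}] = 0$, again by conditioning on $X$.
\item \emph{Direction $\Delta_\phi$.} Since $p \geq 2 > 1$, the map $u \mapsto |u|^p$ is differentiable with derivative $p\,\sgn(u)|u|^{p-1}$. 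Dominated convergence applies because everything is bounded in $L^\infty$, so the derivative is $\E[\gamma_p(X)\Delta_\phi(X)] - \E[\gamma_p(X)\Delta_\phi(X)] = 0$.
\item \emph{Direction $\Delta_\mu$.} This is the main step. Since $\psi_a$ is linear in $\mu$, the derivative is
\[
\E\big[\gamma_p(X)\,\psi_a(X, X'; \Delta_\mu)\big] - \E\big[\alpha_p(X')\Delta_\mu(X')\big].
\]
\end{itemize}

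To show the two terms in the $\Delta_\mu$ direction cancel, expand $\psi_a$ over subsets and treat a generic $T \in \{S, S \cup a\}$. We need the identity
\[
\E\big[\gamma_p(X)\,\Delta_\mu(X_T, X'_{-T})\big] = \E\big[\gamma_p^T(X_T)\,\omega_T(X)\,\Delta_\mu(X)\big].
\]
To prove it, first condition on $X_T$, which replaces $\gamma_p(X)$ by $\gamma_p^T(X_T)$. The pair $(X_T, X'_{-T})$ has product density $p_T(x_T)p_{-T}(x_{-T})$. Changing measure to the joint density $p(x)$, which is legitimate by the mutual absolute continuity in Assumption~\ref{ass:dgp}, introduces exactly the likelihood ratio $\omega_T$. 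Integrability follows from H\"older's inequality: $\omega_T \in L^{2+\epsilon}$ and $\Delta_\mu, \gamma_p \in L^\infty$. Relabelling the dummy variable as $X'$, summing over $S$ with weights $\frac{1}{d}\binom{d-1}{|S|}^{-1}$, and taking the difference between the $S \cup a$ and $S$ terms gives exactly $\E[\alpha_p(X')\Delta_\mu(X')]$. The derivative in the $\Delta_\mu$ direction is therefore zero.

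The main obstacle is this change-of-measure bookkeeping. One must check that the conditional expectation $\gamma_p^T$ and the ratio $\omega_T$ combine correctly. One must also handle the edge cases $T = \emptyset$ and $T = [d]$, where $\omega_T \equiv 1$ and $\gamma_p^T$ is either the constant $\E\gamma_p$ or $\gamma_p$ itself. Finally, one must justify exchanging differentiation and expectation: for $\mu$ and $\alpha$ the map is linear in $t$, so this is trivial, and for $\phi$ it follows from the boundedness of $\phi$.
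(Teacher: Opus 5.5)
Your proposal is correct and follows essentially the same route as the paper. Both prove identification by the tower rule, then compute the Gateaux derivative separately in the $\phi$, $\mu$, $\gamma$ and $\alpha$ directions; in the $\mu$ direction both condition on $X_T$ and change measure from $P_{X_T}\otimes P_{X_{-T}}$ to $P_X$, which produces $\gamma_p^T\omega_T$ and hence $\alpha_p$, and this cancels $-\E[\alpha_p(X')\Delta_\mu(X')]$. Your extra checks (H\"older to put $\alpha_p$ in $L^2$, the edge cases $T=\emptyset$ and $T=[d]$) are details the paper leaves implicit.
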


Note that since we assume $Y$ is bounded (Assumption~\ref{ass:dgp}), $\gamma_p, \mu_0$, and $\phi_a$ all live in $L^\infty(P_X)$. Likewise, $\alpha_p \in L^2(P_X)$ since each likelihood ratio $\omega_S(X)$ has finite $(2 + \epsilon)$th moment. Thus, we can restrict our attention for estimates $\mu, \phi, \gamma \in L^\infty(P_X)$ and $\alpha \in L^2(P_X)$. Proposition~\ref{prop:ortho_regular} suggests a natural estimator for $\theta_p$ in terms of the symmetrized score $
h_p(Z, Z'; g) := \frac{1}{2}\left\{m_p(Z, Z'; g) + m_p(Z', Z; g)\right\}$. 
Given an estimate $\wh{g}$ of the nuisance vector $g_p$ and an independent sample $Z_1, \dots, Z_n$, we compute the $U$-statistic
\begin{equation}
\label{eq:u_stat_regular}
\wh{\theta}_{p} := \binom{n}{2}^{-1}\sum_{i < j}h_p(Z_i, Z_j; \wh{g}).
\end{equation}
Under standard, non-parametric convergence rates for the nuisance estimates, we can show that the U-statistic $\wh{\theta}_p$ is asymptotically linear about $\theta_p$, thus implying asymptotic normality when the influence function is non-degenerate. This is formalized in the following theorem, which we prove in Appendix~\ref{app:shap_proofs}.

\begin{theorem}
\label{thm:regular}
Let $p \geq 2$, let $Z_1, \dots, Z_n \sim P_Z$ be i.i.d.\ satisfying Assumption~\ref{ass:dgp}, and suppose $\wh{\phi}_n, \wh{\mu}_n, \wh{\alpha}_n$, and $\wh{\gamma}_n$ are estimates of $\phi_a, \mu_0, \alpha_{p},$ and $\gamma_{p}$ that are independent of $Z_1, \dots, Z_n$. Suppose the following hold.
\begin{enumerate}
    \item \textit{(Nuisance Convergence)} All nuisance estimates are consistent in $L^2(P_X)$, i.e.\ that $\|\wh{g} - g_p\|_{L^2(P_X)} = o_\P(1)$. Further, we have $\|\wh{\phi}_n - \phi_a\|_{L^2(P_X)} = o_\P(n^{-1/4})$, $\|\wh{\alpha} - \alpha_{p}\|_{L^2(P_X)}\|\wh{\mu} - \mu_0\|_{L^2(P_X)} = o_\P(n^{-1/2})$, and $\|\wh{\phi} - \phi_a\|_{L^2(P_X)}\|\wh{\gamma} - \gamma_p\|_{L^2(P_X)} = o_\P(n^{-1/2})$.
    Lastly, for any $S \subset [d]$, defining $Q_{S, X} := P_{X_S} \otimes P_{X_{-S}}$, we assume $\|\wh{\gamma} - \gamma_p\|_{L^2(P_X)}\|\wh{\mu} - \mu_0\|_{L^2(Q_{S, X})} = o_\P(n^{-1/2})$.
    \item \textit{(Moment Assumptions)} We assume that there is an absolute constant $D > 0$  such that $|\wh{\mu}(X)|$, $|\wh{\phi}(X)|$, $|\wh{\gamma}(X)| \leq D$ almost surely. Further, we assume $\|\wh{\alpha}\|_{L^{2 + \epsilon}(P_X)}  \leq D$ as well.
\end{enumerate}
Then, then estimator $\wh{\theta}_n$ outlined in Equation~\eqref{eq:u_stat_regular} is asymptotically linear, i.e.\ we have 
\[
\sqrt{n}(\wh{\theta}_n - \theta_p) = \frac{1}{\sqrt{n}}\sum_{i = 1}^n\Big\{\underbrace{|\phi_a(X_i)|^p + \alpha_p(X_i)(Y_i - \mu_0(X_i)) + \Lambda_p(X_i) - (p + 1)\theta_p}_{=: \rho_{\theta_p}(Z_i)}\Big\} + o_\P(1),
\]
 where $\Lambda_p(x) := \E[\gamma_p(X')\psi_a(X', X; \mu_0) \mid X = x]$.
Further, if $\sigma^2_p := \Var[\rho_{\theta, p}(Z)] > 0$, then we have $\sqrt{n}(\wh{\theta}_n - \theta_p) \Rightarrow \calN(0, \sigma_p^2)$. Thus, if $\wh{\sigma}_n^2$ is a consistent estimate for $\sigma_p^2$ and $z_{\alpha/2}$ is the $1  - \frac{\alpha}{2}$ quantile of the standard normal distribution, we have that
\[
C_{1 - \alpha} := \left[\wh{\theta}_n - \frac{\wh{\sigma}_n}{\sqrt{n}}z_{\alpha/2}, \wh{\theta}_n + \frac{\wh{\sigma}_n}{\sqrt{n}}z_{\alpha/2}\right]
\]
is a asymptotic $1 - \alpha$ confidence interval for $\theta_p$.
\end{theorem}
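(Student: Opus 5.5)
We decompose $\wh{\theta}_n - \theta_p$ into a U-statistic centered at the true nuisances, an empirical-process term, and a bias term:
\[
\wh{\theta}_n - \theta_p = \underbrace{U_n(h_p(\cdot;g_p)) - \theta_p}_{(I)} + \underbrace{\big(U_n - \E\big)\big[h_p(\cdot;\wh{g}) - h_p(\cdot;g_p)\big]}_{(II)} + \underbrace{M_p(\wh{g}) - M_p(g_p)}_{(III)},
\]
where $U_n$ is the U-statistic operator and $\E$ is taken over fresh data with $\wh{g}$ fixed. Because $\wh{g}$ is independent of $Z_1,\dots,Z_n$, we can condition on it throughout.

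\emph{Term (I).} We apply the Hoeffding decomposition. The first-order projection of $h_p(\cdot;g_p)$ is $\tfrac12\{\E[m_p(z,Z';g_p)] + \E[m_p(Z',z;g_p)]\}$. In the first piece the $\gamma_p$ correction vanishes, since $\E_{X'}\psi_a(x,X';\mu_0)=\phi_a(x)$, and so does the $\alpha_p$ term, since $\E[Y'-\mu_0(X')]=0$; this leaves $|\phi_a(x)|^p$. The second piece gives $\theta_p + \Lambda_p(x) - \E[\gamma_p\phi_a] + \alpha_p(x)(y-\mu_0(x))$. Noting $\E[\gamma_p\phi_a]=p\theta_p$, the linear term equals $\frac{2}{n}\sum_i\tfrac12\rho_{\theta_p}(Z_i)$ up to the centering in the displayed $\rho$; we will check the normalization constants against $(p+1)\theta_p$. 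The degenerate second-order remainder has variance $O(n^{-2})$ because $h_p(\cdot;g_p)$ is square integrable: $\phi_a,\mu_0,\gamma_p$ are bounded, $\alpha_p\in L^{2+\epsilon}$, and $Y$ is bounded. Hence it is $o_\P(n^{-1/2})$.

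\emph{Term (II).} Conditionally on $\wh{g}$, this is a centered U-statistic of the kernel $h_p(\cdot;\wh{g})-h_p(\cdot;g_p)$, so its variance is at most $\frac{C}{n}\E[(h_p(Z,Z';\wh{g})-h_p(Z,Z';g_p))^2]$. By Chebyshev it suffices to show this second moment is $o_\P(1)$. Each component is controlled by $L^2$ consistency together with the uniform bound $D$.
\begin{itemize}
\item For $|\wh\phi|^p-|\phi_a|^p$, the mean value theorem on a bounded range gives a bound by $\|\wh\phi-\phi_a\|$.
\item For the $\gamma$ correction, we expand the difference into products, each with at most one unbounded factor.
\item For $\wh\alpha(X')(Y'-\wh\mu(X'))$, Cauchy--Schwarz applies; if uniform integrability is needed, we use the $L^{2+\epsilon}$ bound on $\wh\alpha$.
\item The term $\psi_0(X,X';\wh\mu)-\psi_0(X,X';\mu_0)$ is evaluated at $(X_S,X'_{-S})$, whose law is $Q_{S,X}$. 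We transfer its norm to $L^2(P_X)$ norms via the likelihood ratio $\omega_S$ and H\"older's inequality, using $\|\omega_S\|_{L^{2+\epsilon}}\le C$. This step yields a slightly weaker consistency, which is still $o_\P(1)$ given the boundedness of $\wh\mu$ and $\mu_0$.
\end{itemize}

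\emph{Term (III): main obstacle.} This is the second-order remainder, and we compute it exactly rather than by a Taylor argument. Writing $\Delta_\phi=\wh\phi-\phi_a$, and similarly $\Delta_\mu,\Delta_\gamma,\Delta_\alpha$, we have
\begin{align*}
M_p(\wh g)-M_p(g_p) ={}& \E\big[|\wh\phi|^p-|\phi_a|^p-\gamma_p\Delta_\phi\big] - \E[\Delta_\gamma\Delta_\phi] \\
&+ \E\big[\wh\gamma(X)\,\psi_a(X,X';\Delta_\mu)\big] - \E\big[\wh\alpha(X')\Delta_\mu(X')\big].
\end{align*}
We bound each piece as follows.
\begin{itemize}
\item Since $p\ge2$, $u\mapsto|u|^p$ has locally Lipschitz derivative on bounded sets. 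The first expectation is therefore $O(\|\Delta_\phi\|_{L^2}^2)=o_\P(n^{-1/2})$.
\item The second is at most $\|\Delta_\gamma\|\|\Delta_\phi\|=o_\P(n^{-1/2})$.
\item For the last two, the defining property of $\alpha_p$, which is the Riesz representer identity from Proposition~\ref{prop:ortho_regular}, gives $\E[\gamma_p(X)\psi_a(X,X';\Delta_\mu)]=\E[\alpha_p(X')\Delta_\mu(X')]$. The remainder then equals
\[
\E\big[\Delta_\gamma(X)\psi_a(X,X';\Delta_\mu)\big]-\E\big[\Delta_\alpha\Delta_\mu\big].
\]
The first expectation involves $\Delta_\mu$ at $(X_S,X'_{-S})$. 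Cauchy--Schwarz bounds it by $\|\Delta_\gamma\|_{L^2(P_X)}\|\Delta_\mu\|_{L^2(Q_{S,X})}$, summed over $S$; this is exactly why that rate condition appears. The second expectation is at most $\|\Delta_\alpha\|\|\Delta_\mu\|$.
\end{itemize}
All of these are $o_\P(n^{-1/2})$.

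Combining (I)--(III) gives asymptotic linearity. The CLT then follows from the i.i.d.\ influence terms, and the coverage of $C_{1-\alpha}$ from Slutsky's lemma with $\wh\sigma_n$.
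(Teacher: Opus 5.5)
Your proposal is correct and is essentially the paper's argument. The paper verifies the conditions of its general U-statistic CLT (Theorem~\ref{thm:smooth_clt}), whose proof uses exactly your three-term split: an equicontinuity term, an oracle U-statistic handled by the Hoeffding projection, and a remainder bounded by the same four products of nuisance errors. Your pending constant check does work out, since $2\{\E[h_p(z,Z';g_p)]-\theta_p\}=\rho_{\theta_p}(z)$ once $\E[\gamma_p\phi_a]=p\theta_p$ is used.

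The differences are cosmetic. You compute $M_p(\wh{g})-M_p(g_p)$ exactly via the Riesz identity, where the paper uses a second-order Gateaux expansion plus orthogonality. You also close the equicontinuity step by conditional Chebyshev, where the paper uses uniform integrability and Vitali. One small precision point: for the piece $\alpha_p(X')\Delta_\mu(X')$, use H\"older with the $L^{2+\epsilon}$ moment of $\alpha_p$ and the boundedness of $\Delta_\mu$, since plain Cauchy--Schwarz would require a fourth moment.
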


\begin{remark}
\label{rmk:alternative_rates}
We note that, in our assumptions on nuisance estimation rates above, we assume asymptotic control of the $L^2(Q_{S, X})$-norm of the error $\wh{\mu} -\mu_0$, where $Q_{S, X} := P_{X_S} \otimes P_{X_{-S}}$ denotes the distribution of the vector $(X_S, X'_{-S})$. This assumption is natural, as the score $\psi_a(X, X'; \mu)$ naturally consists of terms of the form $\mu(X_S, X'_{-S})$. 
If we are willing to make the stronger assumption that the likelihood ratio $\omega_S(X_S, X_{-S}) = \frac{p_S(X_S)p_{-S}(X_{-S})}{p(X_S, X_{-S})}$ is almost surely bounded for each $S \subset [d]$, we can reduce our nuisance estimation rate to the standard $\|\wh{\gamma} - \gamma_p\|_{L^2(P_X)}\|\wh{\mu} - \mu_0\|_{L^2(P_X)} = o_\P(n^{-1/2})$. Alternatively, if one is willing to assume an existence of a $q$th moment for the likelihood ratio, i.e.\ $\|\omega_S(X)\|_{L^q(P_X)} < \infty$ and we let $h := \frac{1}{1 - \frac{1}{q}}$ denote the Holder conjugate of $q$, then another sufficient nuisance estimation rate is $\|\wh{\gamma} - \gamma_p\|_{L^2(P_X)}\|\wh{\mu} - \mu_0\|_{L^{2h}(P_X)} = o_\P(n^{-1/2})$. For instance, since we know $\|\omega_S\|_{L^2(P_X)} < \infty$ in Assumption~\ref{ass:dgp}, it is sufficient that $\|\wh{\gamma} - \gamma_0\|_{L^2(P_X)}\|\wh{\mu} - \mu_0\|_{L^4(P_X)} = o_\P(n^{-1/2})$.\footnote{Nuisance estimation assumptions in terms of $L^4$ norm are common in many causal works. See \citet{foster2023orthogonal} for the use of $L^4$ rates in the setting of statistical learning or \citet{morzywolek2025inference} for the use of such rates in performing inference on the projection of the SHAP curve onto balls in reproducing kernel Hilbert spaces (RKHS's).} 
\end{remark}

\subsection{Inference When $1 \leq p < 2$}
\label{sec:shap:irregular}

Next, we consider the setting where $p \in [1, 2)$. In this setting, the functional $\phi \mapsto \E|\phi(X)|^p$ is not twice Gateaux differentiable, and hence the conclusions of Theorem~\ref{thm:regular}  cannot be directly applied. Further, when $p = 1$ (the most common setting in practice), the functional is not even differentiable. If one were willing to assume a strict margin condition of the form $|\phi_a(X)| \geq \eta$  for some $\eta > 0$ (thus pushing the SHAP curve away from the region of non-differentiability), then it may be possible to salvage the theory from the preceding subsection. However, these assumptions would be unrealistic, as any given covariate may not be predictive for some members of the population, i.e.\ we may expect $\P(\phi_a(X) = 0) > 0$ in general.

Instead of making such a strong margin assumption, our goal is to define an estimator that yields asymptotic linearity (and hence, normality) while (a) allowing feature irrelevance, i.e.\ $\P(\phi_a(X) = 0) > 0$ and (b) making lightweight assumptions about the behavior of $\phi_a(X)$ in a neighborhood of zero. In particular we make the following density assumption, which has previously been leveraged in the literature on policy learning and performing inference on the value of optimal treatment regimes~\citep{whitehouse2025inference, luedtke2016statistical}. It assumes $|\phi_a(X)|$ admits a density in a small neighborhood near but excluding zero. In particular, it allows the density to tend towards infinity near zero.

\begin{ass}
\label{ass:margin}
We assume there are constants $\delta, H, c > 0$ such that $U: = |\phi_a(X)|$ admits a Lebesgue density $f_U(t)$ on $(0, c)$  satisfying $f_U(t) \leq H t^{\delta - 1}$ for all $t \in (0, c)$.
\end{ass}

To work around non-differentiability, we propose replacing the poorly-behaved function $x \mapsto |x|^p$ by a smoother, twice-differentiable alternative. Recalling that the hyperbolic tangent function is defined as $\tanh(x) := \frac{e^x - e^{-x}}{e^x + e^{-x}}$, for any  $\beta > 0$, we define the smoothed alternative $\theta_{p, \beta}$ to $\theta_p$ as 
\[
\theta_{p, \beta} := \E\left[\varphi_{p, \beta}(\phi_a(X))\right]\quad  \text{where}\quad\varphi_{p, \beta}(x) := |x|^p \tanh(\beta |x|^{2 - p}).
\]
Importantly, note that (a) $\varphi_{p, \beta}(0) = \varphi_p(0) = 0$, i.e.\ there is no bias at zero and (b) for a fixed $\beta > 0$, the quality of the approximation $\varphi_{p, \beta}(x) \approx \varphi_p(x)$ improves the larger $|x|$ gets. For any fixed $\beta > 0$, we can construct a Neyman orthogonal score, which we detail in the following proposition. In the following, we refer to the first derivative $\varphi_{p, \beta}'$ of $\varphi_{p, \beta}$, which is given as
\begin{equation}
\varphi_{p, \beta}'(u) = \sgn(u)\bigg[p|u|^{p - 1}\tanh(\beta |u|^{2- p}) + \beta(2 - p)|u|\sech(\beta |u|^{2 - p})\bigg].
\end{equation}

\begin{prop}
\label{prop:ortho_irregular}
Let $\beta > 0$ and $p \in [1, 2)$ be arbitrary, and suppose $Z, Z' \sim P_Z$ are two independent samples. Letting $g = (\phi, \mu, \gamma, \alpha)$, define the score $m_p^\beta(Z, Z'; g)$ by
\[
m_p^\beta(Z, Z'; g) := \varphi_{p, \beta}(\phi(X)) + \gamma(X)\{\psi_a(X, X'; \mu) - \phi(X)\} + \alpha(X')\{Y' - \mu(X')\}
\]
and $M_p^\beta(g) := \E[m_p^\beta(Z, Z'; g)]$. Define the representers by $\gamma_{p, \beta}(X) := \varphi'_{p, \beta}(\phi_a(X))$ and
\[
\alpha_{p, \beta}(X') := \frac{1}{d}\sum_{S \subset [d] : a \notin S}\binom{d - 1}{|S|}^{-1}\left\{\gamma_{p, \beta}^{S \cup a}(X'_{S \cup a})\omega_{S \cup a}(X') - \gamma_{p, \beta}^S(X'_S)\omega_S(X')\right\},
\]
where, for any $S \subset [d]$, $\gamma_{p, \beta}^S(x_S) := \E(\gamma_{p, \beta}(X) \mid X_S = x_S)$. Then, letting $g_{p, \beta} := (\phi_a, \mu_0, \gamma_{p, \beta}, \alpha_{p, \beta})$, we have (a) $\theta_{p, \beta} = M_p^\beta(g_{p, \beta})$ and
(b) $M_p^\beta(g)$ is Neyman orthogonal.
\end{prop}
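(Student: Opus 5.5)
The plan mirrors the argument for Proposition~\ref{prop:ortho_regular}; the only change is that $u \mapsto |u|^p$ is replaced by $\varphi_{p,\beta}$. The one property of $\varphi_{p,\beta}$ we need is that it is continuously differentiable with derivative $\varphi'_{p,\beta}$ bounded on bounded sets. Indeed, $\varphi_{p,\beta}$ is $C^1$ on $\R$ for $\beta>0$ and $p\in[1,2)$: near zero, $\varphi_{p,\beta}(u) \approx \beta |u|^2$, and both terms of $\varphi'_{p,\beta}$ vanish continuously at $u=0$.

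\textbf{Part (a).} Evaluate $M_p^\beta$ at $g_{p,\beta}$ term by term.
\begin{itemize}
\item By the tower property and independence of $X,X'$, $\E[\gamma_{p,\beta}(X)\{\psi_a(X,X';\mu_0)-\phi_a(X)\}] = \E[\gamma_{p,\beta}(X)\{\E_{X'}\psi_a(X,X';\mu_0)-\phi_a(X)\}]=0$.
\item Conditioning on $X'$ gives $\E[\alpha_{p,\beta}(X')\{Y'-\mu_0(X')\}]=0$. This needs $\alpha_{p,\beta}\in L^2(P_X)$, which follows from boundedness of $\gamma_{p,\beta}$ (since $\phi_a$ is bounded) and Assumption~\ref{ass:dgp}.
\end{itemize}
Only $\E[\varphi_{p,\beta}(\phi_a(X))]=\theta_{p,\beta}$ survives.

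\textbf{Part (b).} Compute the Gateaux derivative of $M_p^\beta$ at $g_{p,\beta}$ in each coordinate direction $\Delta=(\Delta_\phi,\Delta_\mu,\Delta_\gamma,\Delta_\alpha)$ and show each vanishes.
\begin{itemize}
\item In $\gamma$: the derivative is $\E[\Delta_\gamma(X)\{\psi_a(X,X';\mu_0)-\phi_a(X)\}]=0$, as in part (a).
\item In $\alpha$: the derivative is $\E[\Delta_\alpha(X')\{Y'-\mu_0(X')\}]=0$.
\item In $\phi$: differentiating under the expectation gives $\E[\{\varphi'_{p,\beta}(\phi_a(X))-\gamma_{p,\beta}(X)\}\Delta_\phi(X)]=0$, by the choice $\gamma_{p,\beta}=\varphi'_{p,\beta}\circ\phi_a$. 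Dominated convergence is justified because $\varphi'_{p,\beta}$ is bounded on bounded sets and $\phi_a,\Delta_\phi\in L^\infty$.
\item In $\mu$: the derivative is
\[
\E[\gamma_{p,\beta}(X)\psi_a(X,X';\Delta_\mu)] - \E[\alpha_{p,\beta}(X')\Delta_\mu(X')],
\]
using that $\psi_a$ is linear in $\mu$.
\end{itemize}

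\textbf{Main step: the $\mu$-direction.} We must show the two terms in the $\mu$-derivative agree. Expand $\psi_a$ as the weighted sum over $S\subset[d]\setminus a$ of differences of terms of the form $\E[\gamma_{p,\beta}(X)\Delta_\mu(X_T,X'_{-T})]$, where $T\in\{S,S\cup a\}$. For each such term:
\begin{enumerate}
\item Condition on $X_T$ and use independence of $X'$ to write it as $\E[\gamma^T_{p,\beta}(X_T)\Delta_\mu(X_T,X'_{-T})]$.
\item The pair $(X_T,X'_{-T})$ has law $Q_{T,X}=P_{X_T}\otimes P_{X_{-T}}$, so the term equals $\E_{Q_{T,X}}[\gamma^T_{p,\beta}(W_T)\Delta_\mu(W)]$.
\item Change measure to $P_X$ via the likelihood ratio $\omega_T$, which is legitimate by the mutual absolute continuity in Assumption~\ref{ass:dgp}. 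This gives $\E[\gamma^T_{p,\beta}(X'_T)\omega_T(X')\Delta_\mu(X')]$.
\end{enumerate}
Summing with the Shapley weights reproduces exactly $\E[\alpha_{p,\beta}(X')\Delta_\mu(X')]$, so the $\mu$-derivative is zero.

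Integrability in step 3 follows from Hölder's inequality, since $\omega_T\in L^{2+\epsilon}$ and $\Delta_\mu,\gamma^T_{p,\beta}$ are bounded. This is the same computation as in Proposition~\ref{prop:ortho_regular}, so that argument can be cited verbatim with $\gamma_p$ replaced by $\gamma_{p,\beta}$. Its only ingredient is boundedness of $\gamma_{p,\beta}$.
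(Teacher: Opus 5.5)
Your proposal is correct and follows essentially the same route as the paper's proof. Part (a) uses the tower rule. For part (b), the $\gamma$, $\alpha$ and $\mu$ directions are handled exactly as in Proposition~\ref{prop:ortho_regular} (conditioning on $X_T$, then changing measure via $\omega_T$), and the only new step is the $\phi$-derivative, obtained by a mean-value/bounded-convergence argument that cancels against $-\E[\gamma_{p,\beta}(X)\Delta_\phi(X)]$ because $\gamma_{p,\beta} = \varphi'_{p,\beta}\circ\phi_a$.
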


As before, Proposition~\ref{prop:ortho_irregular} suggests a natural estimator for the smoothed parameter $\theta_{p, \beta}$. Namely, we again define the symmetrized score $h_p^\beta(Z, Z'; g) := \frac{1}{2}\left\{m_p^\beta(Z, Z'; g) + m_p^\beta(Z', Z; g)\right\}$ and construct the smoothed U-statistic as $\wh{\theta}_n :=  \binom{n}{2}^{-1}\sum_{i < j}h_p(Z_i, Z_j; \wh{g})$, where $Z_1, \dots, Z_n \sim P_Z$ is a sample of i.i.d.\ data and $\wh{g}$ is a nuisance estimate assumed to have been constructed on an independent sample.

However, if one naively uses the above U-statistic, under the true nuisances the estimator will be unbiased for \emph{the smoothed alternative} $\theta_{p, \beta} = \E\varphi_{p, \beta}(\phi_a(X))$, not  $\theta_p = \E|\phi_a(X)|^p$. In order to do inference on $\theta_p$, we need to study $\Bias(\beta) := |\theta_p - \theta_{p, \beta}|$ as a function of $\beta > 0$. Under the density assumption made above (Assumption~\ref{ass:margin}) we can show the following bound (see Appendix~\ref{app:shap_proofs} for a proof).

\begin{lemma}
\label{lem:bias}
Under Assumption~\ref{ass:margin}, for sufficiently large $\beta > 0$, we have $\Bias(\beta) := \big|\theta_{p, \beta} - \theta_p\big| \lesssim  \left(\frac{1}{\beta}\right)^{\frac{p + \delta}{2 - p}}$.
\end{lemma}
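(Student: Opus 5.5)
We write the bias as an expectation in $U := |\phi_a(X)|$. Since $\varphi_{p,\beta}(x) - |x|^p = -|x|^p\{1 - \tanh(\beta|x|^{2-p})\}$, we have
\[
\Bias(\beta) = \E\Big[U^p\big\{1 - \tanh(\beta U^{2-p})\big\}\Big],
\]
and the integrand is nonnegative and vanishes at $U = 0$. Atoms of $U$ at zero therefore contribute nothing, which is exactly why Assumption~\ref{ass:margin} only needs a density on $(0, c)$. We will use the elementary bound $1 - \tanh(y) = \frac{2e^{-2y}}{1 + e^{-2y}} \le 2e^{-2y}$ for $y \ge 0$.

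We split the expectation into the events $\{0 < U < c\}$ and $\{U \ge c\}$.

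\textbf{Region $U \ge c$.} Here $\beta U^{2-p} \ge \beta c^{2-p}$. For $p \ge 1$, $U$ is bounded (since $Y$ is bounded, $\mu_0$ and hence $\phi_a$ are bounded), so $U^p \le \|\phi_a\|_\infty^p$. This part is therefore at most $2\|\phi_a\|_\infty^p e^{-2\beta c^{2-p}}$. This decays exponentially in $\beta$, so it is $\lesssim \beta^{-(p+\delta)/(2-p)}$ for all sufficiently large $\beta$.

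\textbf{Region $0 < U < c$.} Assumption~\ref{ass:margin} gives
\[
\int_0^c t^p\{1 - \tanh(\beta t^{2-p})\}\, H t^{\delta-1}\,dt \le 2H\int_0^\infty t^{p+\delta-1}e^{-2\beta t^{2-p}}\,dt .
\]
We substitute $s = \beta t^{2-p}$, so that $t = (s/\beta)^{1/(2-p)}$ and $dt = \frac{1}{2-p}\beta^{-1/(2-p)} s^{1/(2-p) - 1}\,ds$. The integral becomes
\[
\frac{1}{2-p}\,\beta^{-\frac{p+\delta}{2-p}} \int_0^\infty s^{\frac{p+\delta}{2-p} - 1} e^{-2s}\,ds .
\]
This is a finite Gamma integral, because $p + \delta > 0$ and $2 - p > 0$. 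So this region contributes $\lesssim \beta^{-(p+\delta)/(2-p)}$, with a constant depending only on $H$, $p$, and $\delta$.

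Adding the two regions gives the claim. The main point needing care is the Gamma-function substitution producing exactly the exponent $(p+\delta)/(2-p)$, together with checking convergence at $s \to 0$, which follows from $\delta > 0$. The remaining step is the routine observation that the exponentially small tail term is dominated by the polynomial rate for large $\beta$, which is why the lemma is stated only for sufficiently large $\beta$.
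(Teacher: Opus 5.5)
Your proof is correct and follows essentially the same route as the paper: an exponential bound on $1-\tanh$, a split of $\E[U^p\{1-\tanh(\beta U^{2-p})\}]$ at $c$, and a Gamma-function substitution on $(0,c)$ that produces the rate $\beta^{-(p+\delta)/(2-p)}$. The only difference is on $\{U \ge c\}$. There you use boundedness of $\phi_a$, which comes from Assumption~\ref{ass:dgp}; that assumption is standing in the paper but not stated in the lemma. The paper instead uses that $x \mapsto x^p e^{-x^{2-p}}$ is decreasing beyond its maximizer, which gives the bound $c^p e^{-\beta c^{2-p}}$ without needing $\phi_a$ bounded.
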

Given a sample size $n$, how should one pick $\beta_n$ in order to obtain asymptotic normality of $\wh{\theta}_n$ around the un-smoothed parameter $\theta_p$? Recalling that $\theta_{p, \beta} = M_p^\beta(g_{p, \beta})$, note that we can write 
\[
\sqrt{n}(\wh{\theta}_n - \theta_p) = \underbrace{\sqrt{n}(\wh{\theta}_n - M_p^\beta(\wh{g}))}_{\text{Asymptotically Linear Term}} + \underbrace{\sqrt{n}(M_p^\beta(\wh{g}) - M_p^\beta(g_{p, \beta}))}_{\text{Remainder Term}} + \underbrace{\sqrt{n}(\theta_{p, \beta} - \theta_p)}_{\text{Smoothing Bias Term}}
\]
it is clear to obtain asymptotic linearity the learner needs to ensure the smoothing bias term vanishes, i.e.\ that $\sqrt{n}\Bias(\beta_n) = \sqrt{n}|\theta_{p, \beta} - \theta_p| = o(1)$. For this to hold, per Lemma~\ref{lem:bias}, the learner must take $\beta_n = \omega\big(n^{\frac{2 - p}{2(p + \delta)}}\big)$. Further, it is also clear that the remainder term must vanish as well. One can show (Lemma~\ref{lem:smooth} in Appendix~\ref{app:shap_proofs}) that $\varphi_{p, \beta}''(u) \leq C_p \beta$, where $C_p$ is some absolute constant depending just on $p$. A second order Taylor expansion with mean value remainder yields that, for some $\wb{g}_n \in [\wh{g}_n, g_{p, \beta}]$:
\[
\sqrt{n}(M_p^\beta(\wh{g}) - M_p^\beta(g_{p, \beta})) = \underbrace{D_g M_p^\beta(g_{p, \beta})(\wh{g}_n - g_{p, \beta})}_{=0 \text{ by Proposition~\ref{prop:ortho_irregular}}} + D_g^2 M_p^\beta(\wb{g}_n)(\wh{g}_n - g_{p, \beta}, \wh{g}_n - g_{p, \beta}).
\]
We can show that the second derivative is generally bounded above by $\beta_n\|\wh{g}_n - g_{p,\beta}\|_{L^2(P_X)}$. Thus, to avoid assuming super-parametric convergence rates, one must also select $\beta_n = o(n^{1/2})$. This heuristic sketch is formalized by the following theorem and its proof, which can be found in Appendix~\ref{app:shap_proofs}.

\begin{theorem}
\label{thm:irregular} 
Let $1 \leq p < 2$, suppose $Z_1, \dots, Z_n \sim P_Z$ are i.i.d.\ satisfying Assumption~\ref{ass:dgp}, suppose $|\phi_a(X)|$ satisfies Assumption~\ref{ass:margin} with constant $\delta > 0$, and suppose $\beta_n = \omega\big(n^{\frac{2 - p}{2(p + \delta)}}\big)$. Further, suppose $\wh{\phi}_n, \wh{\mu}_n, \wh{\alpha}_n$, and $\wh{\gamma}_n$ are estimates of $\phi_a, \mu_0, \gamma_{p, \beta}$, and $\alpha_{p, \beta}$ that are independent of the sample. Assume the following hold:
\begin{enumerate}
    \item \textit{(Nuisance Convergence)} All nuisance estimates are consistent in $L^2(P_X)$, i.e.\ that $\|\wh{g} - g_p\|_{L^2(P_X)} = o_\P(1)$. Further, we have $\|\wh{\phi}_n - \phi_a\|_{L^2(P_X)} = o_\P(\beta_n^{-1/2}n^{-1/4})$, $\|\wh{\alpha} - \alpha_{p, \beta}\|_{L^2(P_X)}\|\wh{\mu} - \mu_0\|_{L^2(P_X)} = o_\P(n^{-1/2})$, and $\|\wh{\phi} - \phi_a\|_{L^2(P_X)}\|\wh{\gamma} - \gamma_{p,\beta}\|_{L^2(P_X)} = o_\P(n^{-1/2})$.
    Lastly, for any $S \subset [d]$, we assume $\|\wh{\gamma} - \gamma_{p, \beta}\|_{L^2(P_X)}\|\wh{\mu} - \mu_0\|_{L^2(Q_{S, X})} = o_\P(n^{-1/2})$.
    \item \textit{(Moment Assumptions)} We assume that there is an absolute constant $D > 0$  such that $|\wh{\mu}(X)|$, $|\wh{\phi}(X)|$, $|\wh{\gamma}(X)| \leq D$ almost surely. Further, we assume $\|\wh{\alpha}\|_{L^{2 + \epsilon}(P_X)} \leq D$ as well, where $\epsilon > 0$ is as in Assumption~\ref{ass:dgp}.
\end{enumerate}
Then, defining $h_p^\beta(z, z'; g) := \frac{1}{2}\left\{m_p^\beta(z, z'; g) + m_p^\beta(z', z; g)\right\}$ and setting $\wh{\theta}_n :=  \binom{n}{2}^{-1}\sum_{i < j}h_p^\beta(Z_i, Z_j; \wh{g}_n)$, we have
 $\sqrt{n}(\wh{\theta}_n - \theta_p) = \frac{1}{\sqrt{n}}\sum_{i = 1}^n\rho_{\theta_p}(Z_i) + o_\P(1)$, where $\rho_{\theta_p}(Z)$ is as outlined in Theorem~\ref{thm:regular}. Consequently, if $\sigma^2_p := \Var[\rho_{\theta, p}(Z)] > 0$, then the asymptotic normality result and $(1 - \alpha)$-confidence interval outlined in Theorem~\ref{thm:regular} are also valid when $1 \leq p < 2$ as well.
\end{theorem}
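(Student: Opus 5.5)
We follow the three-term decomposition displayed before the statement:
\[
\sqrt{n}(\wh{\theta}_n - \theta_p) = \sqrt{n}\big(\wh{\theta}_n - M_p^{\beta_n}(\wh{g}_n)\big) + \sqrt{n}\big(M_p^{\beta_n}(\wh{g}_n) - M_p^{\beta_n}(g_{p,\beta_n})\big) + \sqrt{n}(\theta_{p,\beta_n} - \theta_p).
\]
We treat the three terms in turn.

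\textbf{Smoothing bias.} Lemma~\ref{lem:bias} gives $\sqrt{n}\,\Bias(\beta_n) \lesssim \sqrt{n}\,\beta_n^{-(p+\delta)/(2-p)}$. This is $o(1)$ exactly because $\beta_n = \omega\big(n^{\frac{2-p}{2(p+\delta)}}\big)$.

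\textbf{Remainder.} We expand along the path $t \mapsto M_p^{\beta_n}(g_{p,\beta_n} + t(\wh{g}_n - g_{p,\beta_n}))$. The first derivative vanishes by Proposition~\ref{prop:ortho_irregular}, and the score is multilinear in the nuisances apart from $\varphi_{p,\beta}(\phi)$, so the exact remainder splits into four pieces.
\begin{enumerate}
\item $\E\big[\tfrac12\varphi''_{p,\beta_n}(\bar\phi)(\wh{\phi}-\phi_a)^2\big] \lesssim \beta_n\|\wh{\phi}-\phi_a\|^2_{L^2(P_X)} = o_\P(n^{-1/2})$, using the curvature bound $\varphi''_{p,\beta}\le C_p\beta$ (Lemma~\ref{lem:smooth}) and the rate $\|\wh{\phi}_n-\phi_a\| = o_\P(\beta_n^{-1/2}n^{-1/4})$.
\item $-\E[(\wh{\gamma}-\gamma_{p,\beta_n})(\wh{\phi}-\phi_a)]$, bounded by Cauchy--Schwarz.
\item $\E[(\wh{\gamma}-\gamma_{p,\beta_n})(X)\,\psi_a(X,X';\wh{\mu}-\mu_0)]$. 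This is controlled by $\|\wh{\gamma}-\gamma_{p,\beta_n}\|_{L^2(P_X)}\max_S\|\wh{\mu}-\mu_0\|_{L^2(Q_{S,X})}$, since each summand of $\psi_a$ is a function of $(X_S, X'_{-S})$ distributed as $Q_{S,X}$.
\item $-\E[(\wh{\alpha}-\alpha_{p,\beta_n})(\wh{\mu}-\mu_0)]$, again bounded by Cauchy--Schwarz.
\end{enumerate}
The cross terms with $\gamma_{p,\beta_n}$ and $\alpha_{p,\beta_n}$ cancel by the Riesz representation built into $\alpha_{p,\beta}$, via the change of measure with $\omega_S$; this is the computation already done for Proposition~\ref{prop:ortho_irregular}. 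All four pieces are $o_\P(n^{-1/2})$ by the assumed product rates.

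\textbf{Linear term.} Conditional on the independent nuisance sample, we apply the Hoeffding decomposition to the U-statistic with kernel $h_p^{\beta_n}(\cdot,\cdot;\wh{g}_n)$. The result is $\frac{2}{n}\sum_i \{h_{1}(Z_i;\wh{g}_n) - M_p^{\beta_n}(\wh{g}_n)\}$ plus a degenerate part. The degenerate part has variance $O(n^{-2})\E[h^2]$, so after $\sqrt{n}$-scaling it is $o_\P(1)$; here $\E[h^2]$ is bounded uniformly by the moment assumptions and the $L^{2+\epsilon}$ bound on $\wh{\alpha}$.

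We then replace $\wh{g}_n$ and $\beta_n$ in the Hájek projection by the limits $g_p = (\phi_a,\mu_0,\gamma_p,\alpha_p)$. This needs
\[
\Var\big[h_1(Z;\wh{g}_n,\beta_n) - h_1(Z;g_p,p)\big] \to 0 .
\]
It follows from three facts:
\begin{itemize}
\item $L^2$-consistency of $\wh{g}$.
\item $\gamma_{p,\beta_n}\to\gamma_p$ in $L^2$. Indeed $\varphi'_{p,\beta}(u)\to p\,\sgn(u)|u|^{p-1}$ pointwise for $u\ne0$ and $\varphi'_{p,\beta}(0)=0$, with a uniform bound on bounded sets since $\beta|u|\sech(\beta|u|^{2-p})$ is bounded in $\beta$ for $p\ge1$. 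Dominated convergence then applies, and Assumption~\ref{ass:margin} handles the mass near zero.
\item $\alpha_{p,\beta_n}\to\alpha_p$ in $L^2$. This follows because conditional expectation is an $L^2$ contraction and $\omega_S\in L^{2+\epsilon}$. Note that the $L^{2+\epsilon}$ moment, rather than $L^2$, is needed to handle the product of $\omega_S$ with a possibly unbounded conditional mean; alternatively, uniform boundedness of the $\gamma$'s already gives $L^2$.
\end{itemize}
Finally, computing $h_1(z;g_p)$ explicitly gives the influence function $\rho_{\theta_p}$ of Theorem~\ref{thm:regular}. A Lindeberg CLT for this triangular array (the appendix's smoothed U-statistic CLT) then yields normality, and the confidence interval follows from Slutsky.

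\textbf{Main obstacle.} We expect the hardest step to be the uniform-in-$\beta_n$ control in the linear term. The kernel changes with $n$, so we must verify that $\varphi_{p,\beta_n}$, $\gamma_{p,\beta_n}$ and $\alpha_{p,\beta_n}$ stay bounded in the relevant norms uniformly in $\beta_n$, and that $\gamma_{p,\beta_n}$ converges to $\gamma_p$ in $L^2$ despite the singularity at $0$ when $p=1$. We would first try dominated convergence together with Assumption~\ref{ass:margin}.
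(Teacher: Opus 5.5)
Your proposal is correct and is essentially the paper's argument: the smoothing bias is removed by Lemma~\ref{lem:bias} and the choice of $\beta_n$, the remainder by Proposition~\ref{prop:ortho_irregular} together with $|\varphi''_{p,\beta}|\le C_p\beta$ and the product rates, and the passage to the unsmoothed influence function uses the same bounded-convergence facts $\gamma_{p,\beta_n}\to\gamma_p$ and $\alpha_{p,\beta_n}\to\alpha_p$. The only real difference is in the linear term: the paper verifies stochastic equicontinuity and full-kernel $L^2$ convergence of the score for its smoothed U-statistic CLT (Theorem~\ref{thm:smooth_clt}), whereas you Hoeffding-decompose at $\wh{g}_n$ and compare only H\'ajek projections, which is slightly more direct. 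As a side note, Assumption~\ref{ass:margin} is not needed for $\gamma_{p,\beta_n}\to\gamma_p$, because the convergence holds pointwise everywhere, including at $0$.
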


\begin{remark}
\label{rmk:irrelevant}
We note that, in the above theorem (as well as in Theorem~\ref{thm:regular}), we made the assumption that $\sigma_p^2 := \Var[\rho_{\theta_p}(Z)] > 0$. In the case the $a$th feature $X_a$ is irrelevant to $\mu_0(X)$, we will have \ $\phi_a(X) = 0$ (which follows by the \textit{null player property} of the Shapley value), and consequently $\sigma_p^2 = 0$. Some other works that consider hypothesis testing for zero importance (such as \citet{morzywolek2025inference}) leverage the bootstrap to construct tests in this degenerate setting, and others assume non-degeneracy  in order to obtain asymptotic normality. We note that by simply clipping the variance from below, one can still maintain valid (albeit sometimes conservative) coverage.

In more detail, while the asymptotic linearity result is still valid in the degenerate setting, when $\phi_a(X) = 0$ a.s., we have $\sqrt{n}(\wh{\theta} - \theta_p) = o_\P(1)$. Thus, asymptotic normality fails. Although one may still hope that the confidence interval outlined above is valid, any consistent variance estimate satisfies $\wh{\sigma}_n^2 = o_\P(1)$ as well, and thus arguing coverage is non-trivial. Instead, note that if one defines $\wt{\sigma}^2 := \wh{\sigma}^2 \land c$ for some $c > 0$ and consistent $\wh{\sigma}$, then $C_{1 - \alpha}^{\trunc} := \left[\wh{\theta}_p \pm n^{-1/2}\wt{\sigma}z_{\alpha/2}\right]$ is a valid $(1 - \alpha)$-confidence interval. If $\sigma^2_p \geq c$, this interval will asymptotically coincide with the one outlined in Theorem~\ref{thm:irregular}. If $0 < \sigma^2_p < c$, the interval will be conservative. If $\phi_a(X) = 0$ with probability one, we trivially get asymptotically-perfect coverage.
\end{remark}

%% file: learning_revised.tex
\section{Statistical Learning for the SHAP Curve}
\label{sec:learning}
In the previous sections, we analyzed an estimator based on Neyman orthogonal U-statistics for performing inference of the $p$th powers of SHAP. In our main theorems (Theorems~\ref{thm:regular} and \ref{thm:irregular}) we assumed that the learner had access to a high-quality estimate $\wh{\phi}$ of the SHAP curve $\phi_a(X) := \frac{1}{d}\sum_{a \notin S}\binom{d - 1}{|S|}^{-1}\{\E[\mu_0(x_{S \cup a}, X'_{-S \cup a})] - \E[\mu_0(x_S, X'_{-S})]\} = \E[\psi_a(X, X'; \mu_0) \mid X = x]$. Because $\phi_a$ naturally depends on the unknown regression $\mu_0$, it is unclear how to use off-the-shelf learners to produce estimates $\wh{\phi}$. 

In this section, we describe an approach for estimating $\phi_a(X)$ based on the orthogonal statistical learning framework of \citet{foster2023orthogonal}. In Proposition~\ref{prop:ortho_loss}, we describe a de-biased (i.e.\ Neyman orthogonal) loss function $L(\phi; g)$ for the SHAP curve $\phi_a(X)$, where $g$ represents a vector of nuisances that must be learned from data. Then, in Theorem~\ref{thm:excess_risk}, we show that by performing empirical risk minimization (ERM) with respect to $L$, the learner can produce an estimate $\wh{\phi}$ that enjoys favorable excess risk guarantees. We start by defining \textit{Neyman orthogonal loss functions}.

\begin{definition}[Assumption 1 of \citet{foster2023orthogonal}]
\label{def:neyman_ortho_loss}
Let $\Theta$ and $\calG$ be spaces of functions. Let $g_0 \in \calG$ denote a true nuisance function, and $\theta_0 = \arg\min_{\theta \in \Theta}L(\theta, g_0)$ a true loss minimzer. We say a population loss $L : \Theta \times \calG \rightarrow \R$ is \textit{Neyman orthogonal} if, for all $\Delta_\phi \in \Theta - \theta_0$ and $\Delta_g \in \calG - g_0$, $D_{g, \theta}L(\theta_0, g_0)(\Delta_\phi, \Delta_g) = 0$.
\end{definition}

Before presenting our de-biased loss, we discuss a naively-constructed loss that is inherently sensitive to misestimation in the regression $\mu_0(x) := \E[Y \mid X = x]$. Since we have the identity $\phi_a(X) = \E[\psi_a(X, X'; \mu_0) \mid X]$, perhaps the most immediate loss for $\phi_a$ is the two-sample loss $\ell^{\naive}(x, x'; \phi, \mu) := \frac{1}{2}\big(\phi(x) - \psi_a(x, x'; \mu)\big)^2$. One can check that $\phi_a = \arg\min_{\phi : \calX \rightarrow \R} L^{\naive}(\phi; \mu_0)$, where $L^{\naive}(\phi; \mu) = \E_{X, X'}\left[\ell^{\naive}(X, X'; \phi, \mu)\right]$. However, the population loss $L^{\naive}$ does not satisfy Definition~\ref{def:neyman_ortho_loss}. 

Since the naive loss is not orthogonal, the next best approach would be to compute the cross-derivative $D_{\mu, \phi}L^{\naive}(\phi_a; \mu_0)$ and then subtract off a linear correction term, thus ensuring a vanishing derivative. We now do a heuristic derivation of an orthogonal loss --- a rigorous derivation is provided in the proof of Proposition~\ref{prop:ortho_loss} in Appendix~\ref{app:learning}. If $\Delta_\phi = \phi - \phi_a$ and $\Delta_\mu = \mu - \mu_0$ for some $\phi, \mu : \calX \rightarrow \R$, we have
\begin{align*}
&D_{\phi, \mu}L^{\naive}(\phi_a; \mu_0)(\Delta_\phi, \Delta_\mu) = -\E\left[\Delta_\phi(X)\psi_a\big(X, X'; \Delta_\mu\big)\right] = -\sum_{S \subset [d]}w(S) \sigma(S)\E\left[\Delta_\phi(X)\Delta_\mu(X_S, X_{-S}')\right],
\end{align*}
where $w(S) := \frac{1}{d}\binom{d - 1}{|S - a|}$ and $\sigma(S) = 2\1\{a \in S\} - 1$. Now, defining $\zeta_0^S(X, X') := \frac{p(X_S, X_{-S}')}{p(X_S, X_{-S})}\frac{p_{-S}(X_{-S})}{p_{-S}(X'_{-S})}$ $\forall S$, we have:
\begin{align*}
&\E_{X, X'}\left[\Delta_\phi(X)\Delta_\mu(X_S, X'_{-S})\right] = \E_{X, X'}\left[\E\left(\Delta_\phi(X) \mid X_S\right) \cdot \E\left(\Delta_\mu(X_S, X'_{-S}) \mid X_S\right)\right] \\
&\qquad = \E_{X, X'}\left[\Delta_\phi(X_S, X'_{-S})\Delta_\mu(X)\zeta_0^S(X, X')\right] = \E_{X, X'}\left[\Delta_\phi(X_S, X'_{-S})\beta_0^S(X, X')\{\mu(X) - Y\}\right],
\end{align*}
where the first equality follows from the tower rule and independence of $X$ and $X'$ and the second inequality follows from a change of measure. Thus, to arrive at a de-biased loss, we should consider the corrected population loss $L^{\corr}(\phi; \mu, \zeta) := \E_{Z, Z'}\left[\ell^{\corr}(Z, Z'; \phi, \mu, \zeta)\right]$, where $\zeta = (\zeta^S : S \subset [d])$ and $\ell^{\corr}$ is given by
\begin{align*}
&\ell^{\corr}(Z, Z'; \phi, \mu, \zeta) := \frac{1}{2}\left(\phi(X) -  \psi_a(X, X'; \mu)\right)^2 - \left\{Y - \mu(X)\right\}\sum w(S)\sigma(S)\phi(X_S, X'_{-S})\zeta^{S}(X, X')
\end{align*}
The following Proposition formalizes the above intuition. The presented population loss $L(\phi; \mu, \beta)$ below is an equivalent re-writing of $L^{\corr}$ defined above. We prefer the formulation in the following proposition due to the fact that it explicitly decomposes $L^{\corr}$ over subsets $S \subset [d]$. In practice, this allows one to compute an approximate loss minimizer by randomly sampling subsets, which is more computationally efficient. We prove Proposition~\ref{prop:ortho_loss} in Appendix~\ref{app:learning}.

\begin{prop}
\label{prop:ortho_loss}
For $S \subset [d]$, define the partial loss function $\ell_S(Z, Z'; \phi, \mu, \zeta_S)$ by 
\[
\ell_S(Z, Z'; \phi, \mu, \zeta_S) := \Big(\tfrac{\phi(X)}{2}- \sigma(S)\mu(X_{S},X'_{-S})\Big)^2
- \sigma(S) \phi(X_{S},X'_{-S})\alpha_{S}(X,X')\{Y-\mu(X)\}
\]
where $\sigma(S) := 2\1\{a \in S\} - 1$ and let $L_S(\phi; \mu, \zeta_S) := \E_{Z, Z'}\left[\ell_S(Z, Z'; \phi, \mu, \zeta_S)\right]$. Further, letting $\zeta = (\zeta^S : S \subset [d])$, define the loss $\ell(Z, Z'; \phi, \mu, \zeta)$ by
\[
\ell(Z, Z'; \phi, \mu, \zeta) := \sum_{S \subset [d] : a \notin S}\frac{1}{d}\binom{d - 1}{|S|}^{-1}\Big(\ell_{S \cup a}(Z, Z'; \phi, \zeta_{S \cup a}, \mu) + \ell_S(Z, Z'; \phi, \zeta_S,  \mu)\Big)
\]
and let $L(\phi; \mu, \zeta) := \E_{Z, Z'}\left[\ell(Z, Z'; \phi, \mu, \zeta)\right]$. Then, letting $g_0 = (\mu_0, \zeta_0)$ and $\zeta_0 := (\zeta_0^S : S \subset [d])$, $L(\phi; \mu, \zeta)$ is Neyman orthogonal in the sense of Definition~\ref{def:neyman_ortho_loss}. Further, $\phi_a = \arg\min_{\phi : \calX \rightarrow \R}L(\phi; \mu_0, \zeta_0)$.
\end{prop}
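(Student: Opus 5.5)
The plan is to handle the two claims separately. For both, I first write out $L$ at the true nuisances and then compute the relevant cross Gateaux derivatives directly. I read the $\alpha_S$ appearing in $\ell_S$ as the nuisance $\zeta^S$, and take $\zeta_0^S(X,X') = \frac{p(X_S, X'_{-S})}{p(X_S, X_{-S})}\frac{p_{-S}(X_{-S})}{p_{-S}(X'_{-S})}$ as in the heuristic derivation preceding the proposition.

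\textbf{Step 1 (minimizer).} Fix $\mu = \mu_0$ and $\zeta = \zeta_0$. In each correction term, the factor $\phi(X_S, X'_{-S})\zeta_0^S(X,X')$ is a function of $(X,X')$ alone. Since $Z'$ is independent of $Z$, we have $\E[Y - \mu_0(X) \mid X, X'] = 0$, so by the tower rule every correction term has mean zero. Hence $L(\phi;\mu_0,\zeta_0) = \sum_{S \not\ni a} w(S)\,\E\big[(\tfrac{\phi(X)}{2} - \mu_0(X_{S\cup a},X'_{-S\cup a}))^2 + (\tfrac{\phi(X)}{2} + \mu_0(X_S,X'_{-S}))^2\big]$.

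Next I expand the squares, using $\sigma(S)^2 = 1$ and $\sum_{S\not\ni a} w(S)=1$. The quadratic terms contribute $\tfrac12\E[\phi(X)^2]$. The cross terms contribute $-\E[\phi(X)\psi_a(X,X';\mu_0)]$, and everything else is free of $\phi$. Therefore
\[
L(\phi;\mu_0,\zeta_0) = \tfrac12\,\E\big[(\phi(X) - \psi_a(X,X';\mu_0))^2\big] + C,
\]
with $C$ independent of $\phi$. The minimizer over $\phi$ is therefore the conditional mean $\E[\psi_a(X,X';\mu_0)\mid X] = \phi_a(X)$.

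\textbf{Step 2 (orthogonality in $\zeta$).} Differentiating in $\phi$ and $\zeta$ kills the squared terms, since they do not involve $\zeta$. What remains is $-\sigma(S)\E[\Delta_\phi(X_S,X'_{-S})\Delta_\zeta(X,X')\{Y-\mu_0(X)\}]$ for each $S$. Each of these vanishes by the same tower argument as in Step 1, since $\mu = \mu_0$ at the base point.

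\textbf{Step 3 (orthogonality in $\mu$).} This is the substantive step. For each $S$, the squared term has $\phi$--$\mu$ cross derivative $-\sigma(S)\E[\Delta_\phi(X)\Delta_\mu(X_S,X'_{-S})]$. The correction term contributes $+\sigma(S)\E[\Delta_\phi(X_S,X'_{-S})\zeta_0^S(X,X')\Delta_\mu(X)]$, coming from differentiating $-\mu(X)$. It therefore suffices to prove the change-of-measure identity
\[
\E[\Delta_\phi(X)\Delta_\mu(X_S,X'_{-S})] = \E[\Delta_\phi(X_S,X'_{-S})\Delta_\mu(X)\zeta_0^S(X,X')].
\]
I will verify this by writing both sides as integrals against $p(x)p(x')$.
\begin{itemize}
\item On the left, relabel $x_{-S}\leftrightarrow x'_{-S}$ and then integrate out $x'_S$. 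This gives $\int \Delta_\phi(x_S,x'_{-S})\Delta_\mu(x)\,p(x_S,x'_{-S})\,p_{-S}(x_{-S})$.
\item On the right, $\zeta_0^S(x,x')\,p(x)p(x') = p(x_S,x'_{-S})\,p_{-S}(x_{-S})\,p(x')/p_{-S}(x'_{-S})$. Integrating out $x'_S$ gives exactly the same expression.
\end{itemize}
Mutual absolute continuity from Assumption~\ref{ass:dgp} makes these density ratios well defined.

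\textbf{Main obstacle.} I expect the main difficulty to be bookkeeping rather than any single bound. One issue is matching the signs $\sigma(S)$ and the $S$ versus $S\cup a$ indexing with the weights. The other is justifying differentiation under the expectation. For the latter, I plan to restrict to directions $\Delta_\phi,\Delta_\mu \in L^\infty$. Then $\zeta_0^S \in L^2$, which should follow from the $(2+\epsilon)$-moment condition on $\omega_S$ in Assumption~\ref{ass:dgp}, and together with boundedness of $Y$ this makes every integrand integrable. Since the loss is polynomial in $t$ along each direction, dominated convergence then suffices.
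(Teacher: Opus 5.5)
Your proposal is correct and follows essentially the same route as the paper. At $g_0$ the correction terms vanish by the tower rule, so the loss reduces to a pointwise quadratic minimized at $\phi_a$. The $(\phi,\zeta)$ cross-derivative vanishes because $\E[Y-\mu_0(X)\mid X,X']=0$. The $(\phi,\mu)$ cross-derivative cancels by exactly the change-of-measure identity the paper isolates as Lemma~\ref{lem:reweighting}, and your use of mutual absolute continuity there is, if anything, more careful than the paper's.

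One small correction to your integrability remark: you do not need $\zeta_0^S\in L^2$, and the moment bound on $\omega_S$ alone does not deliver it, since $\zeta_0^S(X,X')=\omega_S(X)/\omega_S(X_S,X'_{-S})$ involves $\omega_S^{-1}$. It suffices that $\zeta_0^S\ge 0$ with $\E[\zeta_0^S]=1$, which is your identity applied to constant functions. That makes every integrand integrable for bounded directions.
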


The above proposition gives us a  loss for the SHAP curve $\phi_a(X)$ that is insensitive to misestimation in both the regression $\mu_0$ as well as the additional nuisance $\zeta_0$. We can exploit the above proposition to prove a generic bound on the $L^2$ error of any estimator $\wh{\phi}$ of $\phi_a$ in terms of the excess risk of $\wh{\phi}$ under arbitrary nuisance estimates $\wh{g}$ plus a penalty for misestimating $g_0 = (\mu_0, \zeta_0)$. 

\begin{ass}
\label{ass:reg_loss}
Letting $c, C > 0$ denote absolute constants, we assume the following. (B1) For every $S \subset [d]$, the density ratio
$\omega_{S}(x):=\frac{p_{S}(x_S)p_{-S}(x_{-S})}{p(x_S,x_{-S})}$ satisfies
$0< c \le \omega_S(X)\le C <\infty$ almost surely for some constants $c, C > 0$. (B2) $\sup_{\phi\in\Phi_a}\|\phi\|_{L^\infty(P_X)}\le C$. (B3) $\Phi - \phi_a$ is star-shaped\footnote{We say that a set $\calS$ is star-shaped around a point $s \in \calS$ if the interval $[s, s'] \subset \calS$ for all $s' \in \calS$.} and $\phi_a \in \Phi$ (B4) $|Y|\le C$ almost surely and 
$\sup_{S \subseteq [d]}\|\zeta^S_0\|_{L^\infty(P_{X, X'})}\le C$.(B5) All nuisance estimates are bounded: if $\wh{g}=(\wh{\mu},\wh{\zeta})$, then
$\|\wh{\mu}\|_{L^\infty(P_X)}\le C$ and $\max_{S \subseteq [d]}\|\wh{\zeta}^S\|_{L^\infty(P_{X, X'})}\le C$.
\end{ass}

\begin{lemma}\label{lem:loss_bound}
Assume (B1) and (B3) hold, and let $g_0 = (\mu_0, \zeta_0)$ denote the true, unknown nuisances. Let $g$ and $\phi$ be any (possibly random) estimates of $g_0$ and $\phi_a$, respectively. We have:
\[
\frac{1}{4}\|\phi-\phi_a\|_{L^2(P_X)}^2
\leq 
\Big(L(\phi; g)-L(\phi_a;g)\Big)
+ \frac{D}{4}\|g - g_0\|_{\mathcal G}^4,
\]
where $D > 0$ is an absolute constant and $\|g\|_{\calG} := \max_{S \subset [d]}\E\left[\mu(X)^2\zeta^S(X, X')^2\right]^{1/2}$.\footnote{Note that, via Holder's inequality, we have $\|g\|_{\calG} \leq \|\mu\|_{L^4(P_X)}\|\zeta\|_{\calA}$, where $\|\zeta\|_{\cal} := \max_{S \subset [d]}\|\zeta\|_{L^4(P_{X, X'})}$}
\end{lemma}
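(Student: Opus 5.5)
The plan is to do an exact second-order expansion of the population loss in $\phi$ around $\phi_a$ for a fixed nuisance $g$, and then control the first-order term using Neyman orthogonality (Proposition~\ref{prop:ortho_loss}) together with a Cauchy--Schwarz/AM--GM split. The loss $\ell$ is quadratic in $\phi$, so for fixed $g$ the map $\phi \mapsto L(\phi; g)$ is an exact quadratic. Writing $\Delta := \phi - \phi_a$, we therefore have
\[
L(\phi; g) - L(\phi_a; g) = D_\phi L(\phi_a; g)(\Delta) + \tfrac{1}{2}D_\phi^2 L(\phi_a; g)(\Delta, \Delta).
\]

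The quadratic part depends on $\phi$ only through the terms $(\phi(X)/2 - \sigma(S)\mu(\cdot))^2$. The correction terms are linear in $\phi$, so the quadratic part does not depend on $g$. Each partial loss contributes $\tfrac14 \E[\Delta(X)^2]$ to the second derivative. Since the weights $w(S)$ sum to one and every $S \not\ni a$ contributes two partial losses ($S$ and $S\cup a$), the second-order term equals $\tfrac{1}{2}\cdot 2\cdot \tfrac{1}{4}\|\Delta\|^2_{L^2(P_X)}\cdot 2 = \tfrac12\|\Delta\|_{L^2(P_X)}^2$, up to normalization. I would verify this constant carefully and aim for the clean identity $\tfrac12\|\Delta\|^2$.

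For the linear term, I would use that $\phi_a$ minimizes $L(\cdot; g_0)$ over the star-shaped class (B3), which gives $D_\phi L(\phi_a; g_0)(\Delta) \ge 0$. It then suffices to bound the difference
\[
D_\phi L(\phi_a; g)(\Delta) - D_\phi L(\phi_a; g_0)(\Delta).
\]
Expanding in $g - g_0$, the first-order piece is the cross derivative $D_{g,\phi}L(\phi_a; g_0)(\Delta, g - g_0)$, which vanishes by Proposition~\ref{prop:ortho_loss}. The remainder comes from the bilinear product $(\hat\zeta^S - \zeta^S_0)(\mu(X) - \mu_0(X))$ appearing in the correction term $-\sigma(S)\phi(X_S,X'_{-S})\zeta^S(X,X')\{Y-\mu(X)\}$. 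The $Y$ part cancels after conditioning, because $\E[Y - \mu_0(X)\mid X, X'] = 0$. Concretely, the difference equals
\[
\sum_S w(S)\sigma(S)\,\E\bigl[\Delta(X_S, X'_{-S})(\zeta^S - \zeta^S_0)(X,X')(\mu - \mu_0)(X)\bigr]
\]
plus terms that cancel by the orthogonality identity, namely the change of measure with $\zeta_0$ from the heuristic derivation.

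I would bound this remainder by Cauchy--Schwarz. Under (B1), $\|\Delta(X_S, X'_{-S})\|_{L^2} \le C^{1/2}\|\Delta\|_{L^2(P_X)}$ by change of measure, and the other factor is controlled by $\|g - g_0\|_{\calG}^2$, with the product of errors read in the $\calG$-norm sense. This gives $|\text{remainder}| \le K\|\Delta\|_{L^2}\|g-g_0\|_{\calG}^2$. Applying AM--GM, $K\|\Delta\|\,\|g-g_0\|^2_{\calG} \le \tfrac14\|\Delta\|^2 + K^2\|g-g_0\|^4_{\calG}$. Combining the pieces,
\[
L(\phi;g)-L(\phi_a;g) \ge \tfrac12\|\Delta\|^2 - \tfrac14\|\Delta\|^2 - K^2\|g-g_0\|_{\calG}^4,
\]
which rearranges to the claim with $D = 4K^2$.

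The main obstacle is making the remainder genuinely second order. The norm $\|g-g_0\|_{\calG}$ is defined through products $\mu\,\zeta^S$, so I must check that the $\mu$-error and $\zeta$-error combine into a single product $(\mu-\mu_0)(\zeta-\zeta_0)$ that this norm bounds. Any leftover purely first-order pieces, such as $\E[\Delta(X)(\mu-\mu_0)(X_S,X'_{-S})]$ from the squared term, must be exactly cancelled by the $\zeta_0$-correction via the change-of-measure identity. I would verify that cancellation first.
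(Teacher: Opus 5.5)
Your route is essentially the paper's. The paper expands $L(\cdot;g_0)$ and $L(\cdot;g)$ separately around $\phi_a$. It notes that the curvature $\|\Delta\|_2^2$ does not depend on $g$, and then expands $D_\phi L(\phi_a;g)-D_\phi L(\phi_a;g_0)$ in $g$ using orthogonality. You do a single exact quadratic expansion of $L(\cdot;g)$ and split its linear term into the $g_0$-part (nonnegative by first-order optimality) plus that same difference; this is the same computation.

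Your explicit remainder is correct:
\[
R=\sum_T w\,\sigma(T)\,\E\bigl[\Delta(U_T)(\zeta^T-\zeta^T_0)(X,X')(\mu-\mu_0)(X)\bigr],\qquad U_T=(X_T,X'_{-T}).
\]
Your bound $\E[\Delta(U_S)^2]=\E[\omega_S(X)\Delta(X)^2]\le C\|\Delta\|_2^2$ also uses the correct half of (B1). The paper's Lemma~\ref{lem:mix-L2} instead appeals to the lower bound $c$, which actually gives the reverse inequality.

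The step that does not go through is ``the other factor is controlled by $\|g-g_0\|_{\calG}^2$.'' With the stated definition, $\|g-g_0\|_{\calG}=\max_S\E[(\mu-\mu_0)(X)^2(\zeta^S-\zeta^S_0)(X,X')^2]^{1/2}$ is already the $L^2$ norm of the \emph{product} of the two errors. Cauchy--Schwarz therefore gives only $|R|\le K\|\Delta\|_2\,\|g-g_0\|_{\calG}$. AM--GM then yields $\tfrac14\|\Delta\|_2^2\le L(\phi;g)-L(\phi_a;g)+K^2\|g-g_0\|_{\calG}^2$, with the square rather than the fourth power.

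The fourth power cannot be recovered under this definition. Perturb a single $\zeta^T$ so that $(\zeta^T-\zeta^T_0)(\mu-\mu_0)$ is a function of $U_T$, and take $\Delta$ a suitably signed multiple of it. Then $-R$ and $\tfrac14\|\Delta\|_2^2$ are both of order $\|g-g_0\|_{\calG}^2$. The multiple can be chosen so that $-R$ exceeds $\tfrac14\|\Delta\|_2^2$ by a margin of that order, which beats $\tfrac D4\|g-g_0\|_{\calG}^4$ as $\|g-g_0\|_{\calG}\to0$ (for $\Phi$ rich enough). Hence the stated inequality fails.

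The paper's proof makes exactly this slip: Lemma~\ref{lem:assump3b-weighted} equates $C\|\Delta_\phi\|_2\max_S(\cdots)^{1/2}$ with $C\|\Delta_\phi\|_2\|\Delta_g\|_{\calG}^2$. So your argument reproduces the paper's, exponent included. The fourth-power form is right only if $\|\cdot\|_{\calG}$ is read as a first-order error, e.g.\ $\max\{\|\mu-\mu_0\|_{L^4},\max_S\|\zeta^S-\zeta^S_0\|_{L^4}\}$, using $\E[\Delta_\mu^2\Delta_\zeta^2]^{1/2}\le\|\Delta_\mu\|_{L^4}\|\Delta_\zeta\|_{L^4}$. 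Under the literal definition, what you can actually prove has $\|g-g_0\|_{\calG}^2$.
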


The above follows from exploiting the orthogonality of $L$ alongside first-order optimality conditions for $\phi_a$. We provide a full proof in Appendix~\ref{app:learning}. We now present Theorem~\ref{thm:excess_risk}, which leverages Lemma~\ref{lem:loss_bound} to provide  bounds on excess risk based on the critical radius of the assumed function class. In more detail, for a star-shaped class of functions $\calH \subset L^\infty(P_X)$, define the $\calH(r) := \{h \in \calH : \|h\|_{L^2(P_X)} \leq r\}$, and let the \textit{localized Rademacher complexity} be
$\wt{\calR}_n(r, \calH) := \E_{\epsilon, X, X'}\Big[\max_{S \subset [d]} \sup_{h \in \calH(r)}\frac{1}{n}\sum_{i = 1}^n \epsilon_i h(X_S, X'_{-S})\Big]$.
Correspondingly, we also define the \textit{critical radius $r_n^\ast$} as the minimal solution to the inequality $\wt{\calR}_n(r_n, \calH) \leq r_n^2$. We note that our notions of localized complexity and critical radius are defined in terms of worst-case complexities of the function classes computed over all product distributions $P_{X_S} \otimes P_{X_{-S}}$ over covariates. This is only natural, as the SHAP curve is defined in terms of averages over such product distributions.

\begin{theorem}
\label{thm:excess_risk}
Let $Z_1, \dots, Z_n, Z_1', \dots, Z_n' \sim P_Z$ be i.i.d., and suppose $\wh{\mu}$ and $\wh{\zeta} = (\wh{\zeta}^S : S \subset [d])$ are estimates of $\mu_0$ and $\zeta_0 = (\zeta^S : S \subset [d])$ that are independent of the sample. Let $\Phi \subset L^\infty(P_X)$ be a class of bounded functions, and suppose points (B1)-(B5) in Assumption~\ref{ass:reg_loss} hold. Let $\wh{\phi}$ be defined as the empirical risk minimizer, i.e. $
\wh{\phi} := \arg\min_{\phi \in \Phi}\frac{1}{n}\sum_{i = 1}^n \ell(Z_i, Z_i'; \phi, \wh{\mu}, \wh{\zeta})$. Let $\wt{\calR}_n(r, \Phi - \phi_a)$ be the localized Rademacher complexity as defined above, let $r_n^\ast$ denote the critical radius, and let $r_n$ satisfy  $r_n\gtrsim \sqrt{\log n/n} \lor r_n^\ast$. 
Then
\[
\|\hat\phi-\phi_0\|_{L^2}^2 = O_\P(r_n^2) + O_\P\left(\|\hat g-g_0\|_{\mathcal G}^4\right).
\]
In particular, if $\|\hat g-g_0\|_{\mathcal G}=o_p(r_n^{1/2})$, then
$\|\hat\phi-\phi_0\|_{L^2}^2 = O_p(r_n^2)$.

\end{theorem}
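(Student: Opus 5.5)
The plan is to follow the orthogonal statistical learning template of \citet{foster2023orthogonal}. Lemma~\ref{lem:loss_bound} already reduces the $L^2$ error to an excess-risk term plus a fourth-order nuisance penalty. Applying it with $\phi = \wh{\phi}$ and $g = \wh{g}$ gives
\[
\tfrac14\|\wh{\phi} - \phi_a\|_{L^2(P_X)}^2 \le L(\wh{\phi}; \wh{g}) - L(\phi_a; \wh{g}) + \tfrac{D}{4}\|\wh{g} - g_0\|_{\calG}^4 .
\]
It therefore suffices to show that the population excess risk of the ERM under the fixed plug-in nuisance $\wh{g}$ satisfies
\[
L(\wh{\phi}; \wh{g}) - L(\phi_a; \wh{g}) \le \tfrac18\|\wh{\phi}-\phi_a\|_{L^2(P_X)}^2 + O_\P(r_n^2).
\]
The quadratic piece is then absorbed into the left-hand side, which yields the claim. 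The "in particular" statement is immediate, since $\|\wh g - g_0\|_{\calG}^4 = o_\P(r_n^2)$.

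Because $\wh{g}$ is independent of the sample, I will condition on it and treat the loss as fixed. For the excess-risk bound, write $\wh{\phi}$ as the ERM and $\phi_a \in \Phi$ by (B3). Define the centered loss difference
\[
f_\phi(Z, Z') := \ell(Z, Z'; \phi, \wh{\mu}, \wh{\zeta}) - \ell(Z, Z'; \phi_a, \wh{\mu}, \wh{\zeta}).
\]
Since $\wh{\phi}$ minimizes the empirical risk, $P_n f_{\wh\phi} \le 0$, and so $L(\wh\phi;\wh g)-L(\phi_a;\wh g) = P f_{\wh\phi} \le (P - P_n) f_{\wh\phi}$. The loss is a quadratic in $\phi(X)$ plus a term linear in $\phi(X_S, X'_{-S})$ with coefficient $\sigma(S)\wh{\zeta}^S(X,X')\{Y - \wh{\mu}(X)\}$. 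By (B2), (B4) and (B5) all these pieces are bounded. Hence $f_\phi$ is Lipschitz, with a constant depending only on $C$ and $d$, in the vector of evaluations $(\phi-\phi_a)(X)$ and $\{(\phi - \phi_a)(X_S, X'_{-S}) : S\subset[d]\}$. Expanding the square gives
\[
f_\phi = \tfrac14\big(\phi^2-\phi_a^2\big)(X) - \sigma(S)(\phi-\phi_a)(X)\,\wh\mu(X_S,X'_{-S}) - \dots,
\]
and each term is a bounded multiple of $\Delta := \phi-\phi_a$ evaluated at either $X$ or $(X_S, X'_{-S})$.

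The main step, and the one I expect to be the real obstacle, is a localized uniform deviation bound. I want to show that with probability at least $1-\delta$, uniformly over $\Delta \in \Phi - \phi_a$,
\[
|(P_n - P) f_{\phi_a+\Delta}| \lesssim r_n \|\Delta\|_{L^2(P_X)} + r_n^2 + \tfrac{\log(1/\delta)}{n}.
\]
I will obtain this from the standard localized Rademacher argument (e.g.\ Lemma 11 / Theorem 14.20 of Wainwright as used in \citet{foster2023orthogonal}). The ingredients, in order, are as follows.
\begin{enumerate}
\item Use star-shapedness (B3) so that the ratio $\wt{\calR}_n(r)/r$ is nonincreasing and the critical inequality propagates to all $r\ge r_n$.
\item Apply vector-contraction (Maurer) to pass from the Rademacher complexity of $f_\phi$ to the complexities of $\Delta(X)$ and of $\Delta(X_S,X'_{-S})$ for each of the finitely many $S$. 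These are exactly controlled by $\wt{\calR}_n(r,\Phi-\phi_a)$, which takes the maximum over $S$ and includes $S=[d]$.
\item Relate $L^2$ norms under the product measures $Q_{S,X}$ to $\|\Delta\|_{L^2(P_X)}$ using (B1): $\|\Delta\|_{L^2(Q_{S,X})}^2 = \E[\omega_S(X)\Delta(X)^2]\le C\|\Delta\|^2_{L^2(P_X)}$. This ensures the localization ball in $P_X$-norm matches the localization of every evaluation.
\item Use Talagrand's concentration inequality with a peeling argument; the requirement $r_n \gtrsim \sqrt{\log n / n}$ absorbs the union bound over the peeling shells.
\end{enumerate}

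Finally, I combine the pieces. Substituting the deviation bound gives $P f_{\wh\phi}\lesssim r_n\|\wh\Delta\| + r_n^2$, and AM-GM gives $r_n\|\wh\Delta\|\le \tfrac18\|\wh\Delta\|^2 + c\,r_n^2$. Plugging this into the displayed consequence of Lemma~\ref{lem:loss_bound} and rearranging yields
\[
\|\wh\phi-\phi_a\|_{L^2(P_X)}^2 = O_\P(r_n^2) + O_\P(\|\wh g-g_0\|_{\calG}^4).
\]
The subtle point to check is that Lemma~\ref{lem:loss_bound} applies to the data-dependent $\wh\phi$. This holds because that lemma is deterministic in $(\phi,g)$.
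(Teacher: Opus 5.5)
Your proposal is correct and follows essentially the same route as the paper. The shared steps are:
\begin{itemize}
\item the ERM basic inequality $L(\wh{\phi};\wh{g})-L(\phi_a;\wh{g})\le (P-\P_n)f_{\wh{\phi}}$;
\item a localized deviation bound, obtained by symmetrization and contraction down to $\wt{\calR}_n(r,\Phi-\phi_a)$ evaluated at the mixed inputs $(X_S,X'_{-S})$;
\item Talagrand-type concentration with peeling, together with the star-shaped scaling $\wt{\calR}_n(r)\lesssim r\,r_n^\ast+(r_n^\ast)^2$;
\item Lemma~\ref{lem:loss_bound} followed by AM--GM absorption.
\end{itemize}
The differences are only technical. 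You use Maurer's vector contraction and apply Talagrand with the population complexity directly. The paper instead splits the loss into its squared and linear pieces and passes through Bartlett et al.'s empirical-to-population comparison. Your variance step also correctly uses the upper bound $\omega_S\le C$ from (B1), which is the direction actually needed for $\|\Delta\|_{L^2(Q_{S,X})}\lesssim\|\Delta\|_{L^2(P_X)}$.
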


We prove Theorem~\ref{thm:excess_risk} in Appendix~\ref{app:learning}. We state $O_\P$ rates above due to their alignment with our main inferential theorems, but one could analogously prove ``with high probability'' guarantees using the same techniques. If $\Phi$ is parametric, a VC-class, or a ball in an RKHS with exponential eigendecay (e.g.\ the Gaussian kernel), we have $r_n^\ast O(\sqrt{\tfrac{\log n}{n}})$. If $\Phi$ is the ball in a RKHS with polynomial eigendecay $\lambda_i = O(i^{-\nu})$ for some $\nu > 0$ (e.g.\ for the Matern family of kernels) we get $r_n^\ast = \wt{O}(n^{-\nu/2(1 + \nu)})$, where $\wt{O}(f(n))$ suppresses dependence on poly-logarithmic factors.

%% file: conclusion.tex
\section{Conclusion}
\label{sec:conclusion}

In this paper, we studied two intertwined problems. First, we considered the problem of constructing asymptotically-valid confidence intervals for $\theta_p := \E|\phi_a(X)|^p$, the $p$th powers of the SHAP curve. These powers are frequently leveraged in feature selection algorithms and are plotted by open source software, and thus providing confidence intervals is of practical importance. 
To complement our inferential results (which assumed access to an estimate $\wh{\phi}$ of the SHAP curve $\phi_a$), we also provided a Neyman orthogonal loss for learning the entire SHAP curve, and proved doing ERM with respect to our loss yields high-fidelity estimates $\wh{\phi}$ of $\phi_a$. 

There are a variety of interesting open problems related to the work considered in this paper. First, while we showed inferential results for $\theta_p := \E|\phi_a(X)|^p$, an interesting next step would be to produce and study estimators for arbitrary functionals $\E\left[\chi(X; \phi_a)\right]$. In the case $\chi(X; \phi_a)$ is twice Gateaux differentiable, applying our U-statistic to score
\[
m_p^\chi(Z, Z'; g) = \chi(X; \phi) + \gamma(X)\{\psi_a(X, X'; \mu_0) - \phi(X)\} - \alpha(X')\{Y' - \mu_0(X')\} 
\]
with $\gamma_p(X) := \frac{\partial}{\partial \phi_a(x)}\chi(X; \phi_a)$ should yield asymptotic linearity as in Theorem~\ref{thm:regular}. However, in the setting where $\E\left[\chi(X; \phi_a)\right]$ is not twice-differentiable, it is not clear how to directly extend our smoothing function. Another interesting direction is to develop hypothesis tests for feature relevance. \citet{morzywolek2025inference} do this for projections of $\phi_a(X)$ onto balls in RKHS's, but developing theory in fully non-parametric settings is an interesting direction. In sum, we believe our work provides a useful jumping off point for future works that use semi-parametric and statistical learning methods to understand variable importance metrics.

%% file: appendix/related_work.tex
\section{Extended Related Work}
\label{app:related}

\paragraph{Shapley Additive Explanations}

The SHAP framework for local variable importance was first introduced by \citet{lundberg2017unified}, where authors leveraged Shapley values from cooperative game theory~\citep{shapley1953value} to construct VIMs satisfying four important properties: efficiency, symmetry, linearity, and the null player property. 
In subsequent work, \citet{lundberg2018consistent} provide an efficient dynamic programming algorithm for computing the SHAP value $\phi_a(X)$ for additive tree models, which are widely used as black-box learners in practice. In \citet{lundberg2020local}, the authors discuss various marginal measures of variable importance that can be obtained from the SHAP framework, with the mean absolute SHAP being an important example. We note that, in these works, the authors do not provide valid confidence intervals for either marginal or local measures of variable importance.
We also emphasize that these results are all \textit{model-specific}, replacing the true regression $\mu_0(x) := \E[Y \mid X = x]$ in Equation~\eqref{eq:shap} with a learned approximation $\wh{\mu}$. Many other works (including the present one)~\citep{williamson2020efficient, williamson2021nonparametric, williamson2023general, miftachov2024shapley, owen2017shapley} instead take a \textit{model-agnostic} perspective, attempting to assign importance to features based on their impact on an outcome variable of interest $Y$. We discuss these works more in the paragraphs below.

We also emphasize that, in the SHAP literature, there is also a divide in works that define variable importance in \textit{conditional expectations} ($\E[\wh{\mu}(X) \mid X_S = x_S]$ or $\E[\mu_0(X) \mid X_S = x_S]$)~\citep{lundberg2017unified, lundberg2018consistent, lundberg2020local}  and marginal expectations ( $\E[\wh{\mu}(X_{-S}, x_S)]$ or $\E[\mu_0(X_{-S}, x_S)]$)~\citep{janzing2020feature, sundararajan2020many, williamson2023general}. 
We emphasize that most commonly leveraged software packages (such as \texttt{SHAP}~\citep{lundberg2025shap} and \texttt{shapley}~\citep{R-shapley}) leverage the marginal perspective. Further, authors such as \citet{sundararajan2020many} and \citet{janzing2020feature} argue that SHAP values defined in terms of conditional expectations lack certain consistency properties. In particular, the latter authors argue that defining SHAP values in terms of marginal expectations posseses a nice causal interpretation of setting feature values via a ``do'' operation in a causal graph where all features $X$ serve as a common cause of an outcome $Y$.
\citet{chen2023algorithms} consider both conditional and marginal SHAP values, and provide an extensive overview of various efficient approximation algorithms for computing both. We also mention the works of \citet{heskes2020causal, frye2020asymmetric}, which introduce causal Shapley values and asymmetric Shapley values respectively to to handle causal structure present in the covariates $X$. However, these approaches require exact knowledge of the underlying causal directed acyclic graph (DAG), which in practice may be unknown. In this work, we solely consider the \textit{marginal perspective}, and leave the problem of constructing confidence intervals for certain downstream functionals of conditional SHAP values for interesting future work.

Lastly, we discuss works related to SHAP that use Shapley values to define model-agnostic variable importance metrics. We note the contributions of \citet{miftachov2024shapley}, who use kernel smoothing to perform inference on the local SHAP value $\phi_a(x_0)$, where $x_0 \in \R^d$ is a target covariate of interest. However, the estimator's described by the author require either (a) the computation of an intractable number of regression estimates (one for each subset of variables) or (b) knowledge of the conditional densities for the feature distributions. Also related to our work are the contributions of \citet{morzywolek2025inference}, who consider inference on the \textit{projection} of the SHAP curve $\phi_a$ onto a ball in a reproducing kernel Hilbert space (RKHS). They also design hypothesis tests based on the boostrap for zero feature importance under the projected curve. We note that, in general, the projection of $\phi_a$ may be very far from the true SHAP curve, and thus insights derived from the projection may not translate into equivalent insights on the base data generating process. Most related to our work are the contributions of Williamson et al.~\citep{williamson2020efficient, williamson2021nonparametric, williamson2023general}. In particular, in \citet{williamson2023general}, the authors also use the game-theoretic concept of Shapley values to define Shapley Population Variable Importance Metrics (SPVIMs). At a high level, these metrics are obtained by replacing the expected regressions $\E_X[\mu_0(X_{-S}, x_{S})]$ with \textit{population variable importance metrics} of the form $v_{0, S} := \inf_{f \in \calF_s} V(f; P)$ where $P$ denotes the data generating distribution over $(X, Y)$, $\calF_s$ is a class of functions/models only depending on $X_S$, and $V$ is a function measuring predictiveness. While the SPVIM framework can be used to measure analogues of $R^2$, classification accuracy, ROC, and more, it cannot be used to recover functionals $\phi_a(X)$ such as $\E[\phi_a(X)]$, $\Var[\phi_a(X)]$, or $\E|\phi_a(X)|$. Furthermore, because the metrics $v_{0, S}$ as a minimizer of some function $V$, the naive plug-in estimates are actually robust to misestimation in the nuisance component $f^\ast_S := \arg\min_{f \in \calF_s} V(f, P)$. This is in stark contrast to our setting, where plug-in estimates will generally not be asymptotically normal, and so de-biasing and smoothing methods must be used. We lastly note the work of \citet{covert2020understanding}, who introduce SAGE, a model-agnostic global VIM that is equivalent to SPVIM defined above. While the authors in this work do show the consistency of natural estimators for their metrics, they only present heuristic arguments for constructing asymptotically valid confidence intervals.

\paragraph{Variable Importance Metrics}

Variable importance metrics (VIMs) can be grouped into two buckets: \textit{model-specific} metrics, which attribute feature importance for a learned ML model, and \textit{model-agnostic} metrics, which explain how individual features contribute to some observed outcome variable $Y$. We start by describing some of the former. Classically, \cite{breiman2001random} use decrease in predictive accuracy when a variable is permuted to measure variable importance for random forests. Subsequently, \citet{fisher2019all} define a a general, permutation-based measure of variable importance via \textit{model reliance (MR)}, and then generalize this metric to entire classes of functions via \textit{model class reliance (MCR)}. \citet{ribeiro2016should} present LIME, a generic framework for locally approximating a generic black-box model by a simpler, interpretable one and then using this latter model for assessing variable importance. Additionally, many methods have been developed for measuring feature importance relevance for deep learning models~\citep{bach2015pixel, shrikumar2017learning}. There are many more model-specific VIMs than could be discussed here, and we point the reader towards \citet{guidotti2018survey} for a thorough overview.

We now discuss existing model-agnostic approaches to variable importance. \citet{van2006statistical} describe a notion of variable importance that measures either the marginal or conditional increase in an outcome $Y$ under a perturbation of a coordinate of interest in covariates $X$. \citet{wang2023total} instead define a notion of variable importance based on the expected total variation distance between conditional laws of $\E[Y \mid V]$ and $\E[Y \mid X]$ where $V \subset X$ is a restricted subset of covariates. \citet{zhang2020floodgate} describe an approach for measuring the importance (based on squared error) of a variable in the presence of potentially high-dimensional confounding. Additionally, \citet{lei2018distribution} propose LOCO, a leave-one-out measure of variable importance based in conformal methods that trains a separate regression model for each possible omitted variable.

\paragraph{Causal Machine Learning}

Causal inference, and more broadly semi-parametric statistics, focuses on the estimation of low-dimensional structural parameters in the presence of infinite-dimensional nuisance parameters~\citep{kosorok2008introduction, bickel1993efficient, kennedy2016semiparametric, tsiatis2006semiparametric, van2006targeted}. Of particular importance to our work is the literature on double machine learning~\citep{chernozhukov2018double}, which presents a general cross-fitting estimator for solutions to Neyman orthogonal estimating equations involving nuisance. Also relevant is the work on semi-parametric estimation under covariate shifts, nested regressions, and nuisance functions that may lack closed-form representations~\citet{chernozhukov2022automatic, chernozhukov2022nested, chernozhukov2022riesznet, chernozhukov2023automatic}. Also directly related to our work is the literature that focuses on performing statistical learning for heterogeneous causal effects or, more broadly, statistical parameters that are minimizers of losses involving nuisance components~\citet{foster2023orthogonal, whitehouse2024orthogonal, oprescu2019orthogonal, wager2018estimation}. Using these latter sets of tools (particularly, the contributions of \citet{foster2023orthogonal}), we provide excess risk bounds on ML estimates of the SHAP curve $\phi_a$. One could also use these tools to construct confidence intervals (at non-parametric rates) for $\phi_a(x_0)$ for some target point $x_0 \in \R^d$, but we leave this for future work.

We also note several works in the causal inference and causal ML literature that leveraging smoothing in the estimation of irregular parameters. One of the most important examples of an irregular parameter is the \textit{value $V^\ast$ of the optimal treatment policy}, which can be specified under standard identifying assumptions as the maximum of two regression functions: $V^\ast = \E\left[\max\{\mu_0(0, X), \mu_0(1, X)\}\right]$ where $\mu_0(a, x) = \E[Y \mid X = x,  A = a]$~\citep{luedtke2016statistical}. In both parametric and non-parametric settings, authors have leveraged a mix of soft-plus~\citep{ goldberg2014comment} and soft-max~\citep{chen2023inference, whitehouse2025inference} smoothing to perform inference on $V^\ast$. Likewise, \citet{levis2023covariate} use soft-plus smoothing in the estimation of conditional Balke and Pearl bounds~\citep{balke1994counterfactual, balke1997bounds}, although the authors do not show how to select smoothing parameters as a function of the sample size to obtain normality around the true population quantities. Our smoothing strategy is somewhat different from the ones mentioned in the aforementioned papers, as we use hyperbolic tangent smoothing to construct twice continuously differentiable approximations of the maps $u \mapsto |u|^p$. However, we do inherit our key assumptions on the distribution of $\phi_a(X)$ from existing works~\citep{chen2023inference, luedtke2016statistical, whitehouse2025inference}. 

%% file: appendix/shap_powers_proofs.tex
\section{Proofs from Section~\ref{sec:shap}}
\label{app:shap_proofs}
In this appendix, we provide the proofs of our main inferential results from Section~\ref{sec:shap}. We start by stating and proving several lemmas relating to the regularity of the smoothing function $\varphi_{p, \beta}(u) := |u|^p \tanh(\beta|u|^{2 - p})$.  We then proceed to prove Neyman orthogonality of the scores $m_p(Z, Z; g)$ (from Proposition~\ref{prop:ortho_regular}) and $m_p^\beta(Z, Z'; g)$ (from Proposition~\ref{prop:ortho_irregular}). Lastly, we prove asymptotic linearity of the U-statistics in both the un-smoothed setting ($p \geq 2$) and then the smoothed setting ($1 \leq p < 2$).

We start by stating and proving a lemma that controls details the first and second derivatives of $\varphi_{p, \beta}$, and also provides a uniform upper bound on the second derivative $\varphi_{p, \beta}''(u)$ in terms of $\beta$.

\begin{lemma}
\label{lem:smooth}
Let $\beta > 0$ and $p \in [1, 2)$ be arbitrary. Then, the function $\varphi_{p, \beta}(u) := |u|^p\tanh(\beta|u|^{2- p})$ is twice continuously differentiable in $u$ with derivatives $\varphi_{p, \beta}'$ and $\varphi_{p, \beta}''$ given by
\begin{align*}
    \varphi_{p, \beta}'(u) &=
    \sgn(u)\left[p|u|^{p - 1}\tanh(\beta|u|^{2 - p}) + \beta(2 - p)|u|\sech^2(\beta|u|^{2 - p})\right] \\
    \varphi_{p, \beta}''(u) &= p(p - 1)|u|^{p - 2} \tanh(\beta|u|^{2 - p}) + \beta(2 - p)(p + 1)\sech^2(\beta|u|^{2- p}) \\
    &\qquad \qquad - 2\beta^2(2 - p)^2|u|^{2 - p}\sech^2(\beta|u|^{2 - p})\tanh(\beta|u|^{2 - p}) \qquad \qquad (\text{For } u \neq 0) \\
    \varphi''_{p, \beta}(0) &= 2\beta.
    \end{align*}
Further, there is an absolute constant $C_p > 0$ such that $|\varphi''_{p, \beta}(u)| \leq C_p \beta$ for all $u \in \R$.

\end{lemma}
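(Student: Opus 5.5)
The plan is to exploit that $\varphi_{p,\beta}$ is even, so it suffices to work on $u>0$, where $\varphi_{p,\beta}(u) = u^p\tanh(\beta u^{2-p})$ is a composition of smooth functions. Differentiating once by the product and chain rules, using $\tanh' = \sech^2$, gives $p u^{p-1}\tanh(\beta u^{2-p}) + \beta(2-p)u\,\sech^2(\beta u^{2-p})$, since $u^p\cdot u^{1-p} = u$. For $u<0$, evenness yields the factor $\sgn(u)$. Differentiating again on $u>0$, using $(\sech^2)' = -2\sech^2\tanh$, produces three terms:
\begin{itemize}
\item $p(p-1)u^{p-2}\tanh(\beta u^{2-p})$;
\item $p u^{p-1}\cdot\beta(2-p)u^{1-p}\sech^2(\cdot) + \beta(2-p)\sech^2(\cdot) = \beta(2-p)(p+1)\sech^2(\cdot)$;
\item $-2\beta^2(2-p)^2u^{2-p}\sech^2(\cdot)\tanh(\cdot)$.
\end{itemize}
Since $\varphi''_{p,\beta}$ is even, the same formula holds for $u<0$ with $|u|$ in place of $u$.

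The only delicate point is the origin. Let $t := \beta|u|^{2-p}\to 0$ as $u\to 0$. From $\tanh t = t + O(t^3)$ we get $\varphi_{p,\beta}(u) = \beta|u|^2 + O(\beta^3|u|^{6-2p})$, and $6-2p>2$ because $p<2$. Hence $\varphi_{p,\beta}'(0) = 0$, which agrees with the limit of the formula for $\varphi'_{p,\beta}$: both of its terms are $O(|u|)$. Next, $\varphi'_{p,\beta}(u)/u \to p\beta + \beta(2-p) = 2\beta$, which gives $\varphi''_{p,\beta}(0) = 2\beta$. For continuity of $\varphi''_{p,\beta}$ at $0$, we take limits term by term. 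The first term tends to $p(p-1)\beta$, using $\tanh(t)/t\to 1$. The second tends to $\beta(2-p)(p+1)$. The third tends to $0$. Their sum is $\beta\big[p^2-p+2+p-p^2\big] = 2\beta$, as required. This establishes that $\varphi_{p,\beta}$ is $C^2$ with the stated derivatives.

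For the uniform bound, the key step is the same substitution $t = \beta|u|^{2-p}\ge 0$, i.e.\ $|u|^{p-2} = \beta/t$. It rewrites, for $u\neq 0$,
\[
\varphi''_{p,\beta}(u) = \beta\Big[p(p-1)\tfrac{\tanh t}{t} + (2-p)(p+1)\sech^2 t - 2(2-p)^2\, t\,\sech^2 t\,\tanh t\Big].
\]
Each bracketed function is bounded on $[0,\infty)$:
\begin{itemize}
\item $\tanh t/t\le 1$;
\item $\sech^2 t\le 1$;
\item $t\,\sech^2 t\tanh t\le t\,\sech^2 t\le 4te^{-2t}\le 1$.
\end{itemize}
Thus $|\varphi''_{p,\beta}(u)|\le C_p\beta$ with $C_p := p(p-1) + (2-p)(p+1) + 2(2-p)^2$, and the case $u=0$ holds trivially since $2\le C_p$. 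I expect no real obstacle. The main care is the continuity verification at $u=0$, in particular the cancellation that makes the three limits sum exactly to $2\beta$, and noting that the $\sech$ in the main-text display should read $\sech^2$.
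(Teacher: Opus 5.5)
Your proposal is correct and follows essentially the same route as the paper: compute the derivatives for $u>0$ by the product and chain rules, extend to $u<0$ by evenness, take term-by-term limits at the origin to get $2\beta$, and bound each of the three terms of $\varphi''_{p,\beta}$ after the substitution $t=\beta|u|^{2-p}$. You compute $\varphi'_{p,\beta}(0)$ and $\varphi''_{p,\beta}(0)$ directly from difference quotients and give an explicit $C_p$; the paper instead infers the values at $0$ from one-sided limits of the derivative and does not state the constants, so yours is a slightly tighter version of the same argument.
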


\begin{proof}
We can directly compute the first derivative using the product rule for $u \neq 0$. Namely, for $u > 0$, we have 
\begin{align*}
\left(\varphi_{p, \beta}\right)'(u) &= \left(u^p\right)'\tanh(\beta u^{2- p}) + u^p\left(\tanh(\beta u^{2 - p})\right)' &(\text{Product rule}) \\
&= p u^{p - 1} \tanh(\beta u^{2 -  p}) + \beta (2 - p)u^{p}u^{1 - p}\sech(\beta u^{2 - p}) &(\text{Chain rule}) \\
&= pu^{p - 1}\tanh(\beta u^{2 - p}) + \beta(2 - p)u\sech(\beta u^{2-p}).
\end{align*}
Next, since $\varphi_{p, \beta}$ is an even function, it follows that $\varphi_{p, \beta}'(-u) = -\varphi_{p, \beta}'(u)$, and so for $u \neq 0$ we have 
\[
\varphi_{p, \beta}'(u) =
    \sgn(u)\left[p|u|^{p - 1}\tanh(\beta|u|^{2 - p}) + \beta(2 - p)|u|\sech(\beta|u|^{2 - p})\right]
\]
Lastly, since $\varphi_{p, \beta}$ is a continuous functions and $\lim_{u \downarrow 0}\varphi_{p, \beta}'(u) = 0 = \lim_{u \uparrow 0} \varphi'_{p, \beta}(u)$, we have $\varphi_{p, \beta}'(u) = 0$ and the first claim follows.

Next, we check the second derivative. Again, we start by analytically computing the derivative when $u \neq 0$. First, we have
\begin{align*}
\small
\left(p|u|^{p - 1}\tanh(\beta |u|^{2 - p})\right)' &= p \left(|u|^{p - 1}\right)' \tanh(\beta |u|^{2 - p}) + p|u|^{p  - 1}\left(\tanh(\beta |u|^{2 - p})\right)'\\
&= \sgn(u)\Big[p(p - 1)|u|^{p - 2}\tanh(\beta|u|^{2 - p}) + p (2 - p)|u|^{p - 1}|u|^{1 - p}\sech(\beta |u|^{2 - p})\Big] \\
&=\sgn(u)\Big[p(p - 1)|u|^{p - 2}\tanh(\beta|u|^{2 - p}) + p (2 - p)\sech(\beta |u|^{2 - p})\Big]
\end{align*}
Further, we have 
\begin{align*}
&\left(\beta(2 - p)|u|\sech^2(\beta |u|^{2 - p})\right)' = \beta(2- p)\left(|u|\right)'\sech^2(\beta |u|^{2 - p}) + \beta(2 - p)|u|\left(\sech^2(\beta|u|^{2 - p})\right)'\Big] \\
&\qquad = \sgn(u)\Big[\beta(2 - p)\sech^2(\beta |u|^{2 - p}) + \beta^2 (2 - p)^2|u|^{2 - p}\sech^2(\beta |u|^{2 - p})\tanh(\beta|u|^{2 - p})\Big].
\end{align*}
Consequently, for $u \neq 0$, we have 
\begin{align*}
\varphi_{p, \beta}''(u) &= \underbrace{p(p - 1)|u|^{p - 2} \tanh(\beta|u|^{2 - p})}_{=: T_1(u)} + \underbrace{\beta(2 - p)(p + 1)\sech^2(\beta|u|^{2- p})}_{=: T_2(u)} \\
&\qquad \qquad - \underbrace{2\beta^2(2 - p)^2|u|^{2 - p}\sech^2(\beta|u|^{2 - p})\tanh(\beta|u|^{2 - p})}_{=: T_3(u)}.
\end{align*}
Again, note that $\varphi'_{p, \beta}(u)$ is continuous on $\R \setminus \{0\}$. Since $\varphi''_{p, \beta} : \R \setminus \{0\} \rightarrow \R$ is even, it suffices to show that the right-hand limit is finite at zero, as the left-hand derivative must coincide with it. We have
\begin{align*}
\lim_{u \downarrow 0}T_1(u) &= \lim_{u \downarrow 0}p(p - 1)|u|^{p -2}\tanh(\beta|u|^{2 - p}) = \beta p(p - 1)\lim_{u \downarrow 0}\frac{1}{\beta|u|^{2-p}}\tanh(\beta |u|^{2 - p}) \\
&= \beta p (p - 1)\lim_{x \downarrow 0}\frac{1}{x}\tanh( x).
\end{align*}
Next, note that we can write $\tanh(x) = x + \Rem(x)$ where $\Rem(x) = o(x)$ for $|x| < \frac{\pi}{2}$. Hence, we have
\[
\lim_{x \downarrow 0}\frac{1}{x}\tanh(x) = \lim_{x \downarrow 0}\frac{1}{x}\{x + \Rem(x)\} = 1.
\]
Thus, we have shown that $\lim_{u \downarrow 0} T_1(u) = \beta p (p - 1)$. Next, we have
\[
\lim_{u \downarrow 0}T_2(u) = \beta(2 - p)(p + 1)\lim_{u \downarrow 0}\sech^2(\beta |u|^{2 - p}) = \beta(2 - p)(p + 1),
\]
since $\sech^2(0) = 1$. Lastly, we have 
\[
\lim_{u \downarrow 0}|u|^{2 - p} \sech^2(\beta|u|^{2 - p})\tanh(\beta |u|^{2 - p}) = \lim_{u \downarrow 0}|u|^{2 - p} \cdot \lim_{u \downarrow 0}\sech^2(\beta |u|^{2- p}) \cdot \lim_{u \downarrow 0}\tanh(\beta|u|^{2 -p}) = 0 \cdot 1 \cdot 0 = 0,
\]
and hence $\lim_{u \downarrow 0}T_3(u) = 0$. Thus, in aggergate, we have
\[
\lim_{u \downarrow 0}\varphi_{p, \beta}''(u) = \beta p (p - 1) + \beta (2 - p)(p + 1) = 2\beta,
\]
which is finite and hence the value of $\varphi''_{p, \beta}(u)$.

Lastly, we need to show that $\sup_{u \in \R}\varphi''_{p, \beta}(u) \leq C_p \beta$ where $C_p > 0$ is some constant only depending on $p$. To do this, we start by showing that $\sup_{u}|T_1(u)| \leq C_{1, p} \beta$. By the same analysis done earlier, we know that
\[
\sup_u \left|T_1(u)\right| = p (p - 1)\sup_u|u|^{p - 2}\tanh(\beta |u|^{2 - p}) =  \beta p (p - 1) \sup_{x \geq 0}x^{-1}\tanh(x).
\]
Now, we already know that $\lim_{x \downarrow 0}x^{-1}\tanh(x) = 1$, that $x^{-1}\tanh(x)$ is continuous on $[0, \infty)$, and that $\sup_{x \geq 1}x^{-1}\tanh(x) \leq \sup_{x \geq 1} x^{-1} \leq 1$. Hence, $\sup_{x \geq 0} x^{-1}\tanh(x) \leq C_1$ for some absolute constant $C_1$. Thus, $\sup_{u}T_1(u) \leq \beta C_{p, 1}$ for $C_{p, 1} = p (p - 1)C_1$. 
Now, we show $\sup_u |T_2(u)| \leq C_{2, p}\beta$ for some appropriate $p$-dependent constant $C_{2, p} > 0$. Since $\sup_x |\sech^2(x)| = 1$, we have 
\[
\sup_u |T_2(u)| = \sup_u \beta(2 - p)(p + 1)\sech^2(\beta |u|^{2- p}) = \beta(2 - p)(p + 1).
\]
Hence, $\sup_u |T_2(u)| = \beta C_{2, p}$, where $C_{2, p} = \beta(2 - p)(p + 1)$. 

Lastly, we show that $\sup_u |T_3(u)| \leq C_{p, 3}\beta$ some $C_{p, 3} > 0$. We have:
\begin{align*}
\sup_u |T_3(u)| &= \sup_u \left\{2\beta^2(2 - p)^2 |u|^{2 - p}\sech^2(\beta |u|^{2 - p})\left|\tanh(\beta |u|^{2 - p})\right|\right\} \\
&= 2\beta (2 - p)^2\sup_u \left\{ \beta|u|^{2 - p}\sech^2(\beta |u|^{2 - p})\left|\tanh(\beta |u|^{2 - p})\right|\right\} \\
&= 2 \beta (2 - p)^2 \sup_{x \geq 0}x\sech^2(x)\tanh(x).
\end{align*}
Now, since $f(x) := x\sech^2(x)\tanh(x)$ is continuous, $\lim_{x \downarrow 0}f(x) = 0$, and $\lim_{x \rightarrow \infty}f(x) = 0$, it follows that there exists some absolute constant $C_3 > 0$ such that $f(x) \leq C_3$ for all $x  > 0$. Consequently, it follows that $\sup_u |T_3(u)| \leq C_{p, 3}\beta$ where $C_{p, 3} := (2 - p)^2 C_3$. Thus, in aggregate, we have shown that
\[
\sup_{u}|\varphi''_{p, \beta}(u)| \leq \sup_u|T_1(u)| + \sup_u |T_2(u)| + \sup_u |T_3(u)| \leq C_p \beta,
\]
where $C_p := C_{p, 1} + C_{p, 2} + C_{p, 3}$. This thus completes the proof.
\end{proof}

Next, we prove Lemma~\ref{lem:bias} from Section~\ref{sec:shap:irregular}. The proof follows from combining ideas from the proof of Lemma A.3 in \citet{whitehouse2025inference} with the analytic properties of the smoothing function $\varphi_{p, \beta}$.
\newline
\begin{proof}[Proof of Lemma~\ref{lem:bias}]
First, note that for $x > 0$, we have the inequality
\[
1 - \tanh(x) = \frac{e^{x} + e^{-x}}{e^x + e^{-x}} - \frac{e^x - e^{-x}}{e^x + e^{-x}} = \frac{2e^{-x}}{e^x + e^{-x}} \leq 2e^{-x},
\]
where the final inequality follows since $e^x + e^{-x} \geq 1$ for all $x \in \R$. Thus, letting $U := |\phi_a(X)|$ and $P_U$ the distribution of $U$ for notational ease, we see that, for any $1 \leq p < 2$ and $\beta > 0$, we have
\begin{align*}
\Bias(\beta) &:= |\theta_{p} - \theta_{p, \beta}| = \left|\E\left[U^p\left(1 - \tanh(\beta U^{2- p})\right)\right]\right| \\
&\leq 2\E\left[U^p \exp\left(-\beta U^{2- p}\right)\right] \\
&= 2\underbrace{\int_0^c u^p \exp(-\beta u^{2- p})P_U(du)}_{=:T_1} + 2\underbrace{\int_c^\infty u^p\exp(-\beta u^{2 -p})P_U(du)}_{=:T_2}.
\end{align*}

We start by bounding the second term. In particular, note that the function $f : \R_{\geq 0} \rightarrow \R_{\geq 0}$ given by $f(x) := x^p \exp(- x^{2- p})$ is maximized at $x_0 = \left(\frac{p}{2 - p}\right)^{\frac{1}{2 - p}}$ and strictly decreasing on the interval $[x_0, \infty)$. We have
\begin{align*}
T_2 &= \int_c^\infty u^p\exp(-\beta u^{2- p})P_U(du) = \beta^{-\frac{p}{2 - p}}\int_c^\infty \left(\beta^{\frac{1}{2 - p}} u\right)^{p}\exp\left\{- \left(\beta^{\frac{1}{2 - p}}u\right)^{2 - p}\right\}P_U(du)  \\
&\leq \beta^{-\frac{p}{2 - p}}\sup_{ u \geq c}\left\{\left(\beta^{\frac{1}{2 - p}} u\right)^{p}\exp\left\{- \left(\beta^{\frac{1}{2 - p}}u\right)^{2 - p}\right\}\right\} \\
&\leq \beta^{-\frac{p}{2 - p}}\sup_{u \geq \left(\beta^{1/(2 - p)}c\right)^p}\left\{u^p \exp(-u^{2 - p})\right\} \\
&\leq \beta^{-\frac{p}{2 - p}}\left(\beta^{\frac{1}{2 - p}}c\right)^p \exp\left\{-\left(\beta^{\frac{1}{2 -p}} c\right)^{2 - p}\right\} \\
&= c^p\exp\left\{-\beta c^{2 - p}\right\}.
\end{align*}
where the final inequality holds for any $\beta$ large enough such that $\beta^{\frac{p}{2 - p}}c^p \geq x_0$. Next, to bound $T_1$ we observe that we have
\begin{align*}
T_1 &= \int_0^c u^p\exp(-\beta u^{2 - p})P_U(du) = \int_0^c u^p \exp(-\beta u^{2- p})f_U(u)du \\
&\leq H \int_0^\alpha u^{p + \delta - 1}\exp(-\beta u^{2 - p})du &(\text{Assumption~\ref{ass:margin}})\\
&\leq H\int_0^\infty u^{p + \delta - 1}\exp(-\beta u^{2 - p})d u \\
&= H\int_0^{\infty}\left(\frac{v}{\beta^{1/(2 - p)}}\right)^{p + \delta - 1}\exp(-v^{2 - p})\left(\frac{1}{\beta}\right)^{\frac{1}{2 - p}}dv &\left(\text{c.o.v. with } v := \beta^{\frac{1}{2 - p}}u\right) \\
&= H\left(\frac{1}{\beta}\right)^\frac{p + \delta}{2 - p}\int_0^{\infty}v^{p + \delta - 1}\exp(-v^{2 - p}) dv \\
&= H\frac{1}{2 - p}\Gamma\left(\frac{p + \delta}{2 - p}\right) \left(\frac{1}{\delta}\right)^\frac{p + \delta}{2 - p},
\end{align*}
where the final equality follows from the identity 
\[
\int_0^\infty x^{a - 1}e^{-x^b}dx = \frac{1}{b}\int_0^\infty x^{\frac{a}{b} - 1}e^{-x}dx = \frac{1}{b}\Gamma\left(\frac{a}{b}\right)
\]
and substituting $a = p + \delta$ and $b  = 2 - p$. Thus, in aggregate, we have
\[
\Bias(\beta) \leq 2 H \frac{1}{2 - p}\Gamma\left(\frac{p + \delta}{2 - p}\right)\left(\frac{1}{\delta}\right)^{\frac{p + \delta}{2 - p}} + 2c^p\exp\{-\beta c^{2- p}\}
\]
for sufficiently large $\beta > 0$. The result now follows by noting that $c^p\exp\{-\beta c^{2-p}\} \leq \left(\frac{1}{\beta}\right)^{\frac{p + \delta}{2 - p}}$ for large $\beta$.
\end{proof}

\subsection{Proofs of Orthogonality}
We now prove the main propositions regarding the orthogonality of our two-sample scores. We start by proving Proposition~\ref{prop:ortho_regular}, which gives a de-biased score $m_p(Z, Z'; g)$ for $\theta_p := \E|\phi_a(X)|^p$ when $p \geq 2$.
\newline
\begin{proof}[Proof of Proposition~\ref{prop:ortho_regular}]
First, we argue that $M_p(g_p) = \theta_p$. To do this, it suffices to argue that 
\[
\E[\gamma_p(X)\left\{\psi_a(X, X'; \mu)- \phi_a(X)\right\}]  = 0 \;\; \text{and} \;\;  \E[\alpha_p(X')\left\{Y' - \mu_0(X')\right\}] = 0.
\]
First, note that the tower rule for conditional expectations yields
\begin{align*}
\E[\gamma_p(X)\{\psi_a(X, X'; \mu_0) - \phi_a(X)] = \E\left[\gamma_p(X)\{\E(\psi_a(X, X'; \mu_0)\mid X) - \phi_a(X)\}\right] = 0,
\end{align*}
as desired. Likewise, again applying the tower rule, we have
\begin{align*}
\E[\alpha_p(X')\{Y' - \mu_0(X')\}] = \E[\alpha_p(X')\{\E(Y' \mid X') - \mu_0(X')\}] = 0.
\end{align*}
Now, we argue orthogonality. We start by checking the derivative with respect to $\phi$. Letting $\Delta_\phi := \phi - \phi_a$ where $\phi \in L^\infty(P_X)$ is a bounded function, we have
\begin{align*}
&D_\phi \E\left[|\phi_a(X)|^p\right](\Delta_\phi) := \frac{\partial}{\partial t}\E\left[|(\phi_a + t \Delta_\phi)(X)|^p\right]\Big|_{t =0}&(\text{Definition of Gateaux Derivative}) \\
&\quad = \lim_{\substack{t \rightarrow 0 \\ |t| \leq 1}}\E\left[\frac{1}{t}\left\{|(\phi_a + t \Delta_\phi)(X)|^p - |\phi_a(X)|^p\right\}\right]  &(\text{Definition of Partial Derivative})\\
&\quad = \lim_{\substack{t \rightarrow 0 \\ |t| \leq 1}}\E\left[p\sgn((\phi_a + \epsilon_{t, X}\Delta_\phi)(X))|(\phi_a + \epsilon_{t, X}\Delta_\phi)(X)|^{p - 1}\Delta_\phi(X)\right]&(\text{Mean Value Theorem})\\
&\quad = \E\left[\lim_{\substack{t \rightarrow 0 \\ |t| \leq 1}}p \sgn((\phi_a(X) + \epsilon_{t, X}\Delta_\phi)(X))|(\phi_a + \epsilon_{t, X}\Delta_\phi)(X)|^{p - 1}\Delta_\phi(X)\right] &(\text{Bounded Convergence Theorem})\\
&\quad = \E[p\sgn(\phi_a(X))|\phi_a(X)|^{p - 1}\Delta_\phi(X)] &(\text{Since } \epsilon_{t, X} \xrightarrow[t \rightarrow 0]{} 0), 
\end{align*}
where in the above $\epsilon_{t, X}$ is a measurable function of $X$ taking values in $[0, t]$.
Further, we trivially have 
\begin{align*}
D_\phi\E[\gamma_p(X)\{\psi_a(X, X'; \mu_0) - \phi_a(X)\}](\Delta_\phi) &= \frac{\partial}{\partial t}\E[\gamma_p(X)\{\psi_a(X, X'; \mu_0) - (\phi_a + t \Delta_\phi)(X)\}]\Big|_{t = 0} \\
&= -\frac{\partial}{\partial t}t\E[\gamma_p(X)\Delta_\phi(X)]\Big|_{t = 0} \\
&= -\E[p\sgn(\phi_a(X))|\phi_a(X)|^{p - 1}\Delta_\phi(X)],
\end{align*}
so the Gateaux derivative with respect to $\phi$ vanishes. 

Next, notice that, letting $\Delta_\mu := \mu - \mu_0$ for any bounded function $\mu \in L^\infty(P_X)$, we have
\begin{align*}
&D_\mu \E[\gamma_p(X) \psi_a(X, X'; \mu_0)](\Delta_\mu) = D_\mu \E_{X, X', S}[\gamma_p(X)\{\mu_0(X_{S \cup a}, X'_{-S\cup a}) - \mu_0(X_S, X'_{-S})\}](\Delta_\mu) \\
&\quad = \frac{\partial}{\partial t}\E_{X, X', S}\left[\gamma_p(X)((\mu_0 + t \Delta_\mu)(X_{S \cup a}, X'_{-S\cup a }) - (\mu_0 + t \Delta_\mu)(X_S, X'_{-S})\right]\Big|_{t = 0} \\
&\quad = \E_{X, X', S}[\gamma_p(X)\{\Delta_\mu(X_{S \cup a}, X'_{-S\cup a}) - \Delta_\mu(X_S, X'_{-S})\}] \\
&\quad = \E_{X, X', S}[\E(\gamma_p(X) \mid X_{S \cup a})\Delta_\mu(X_{S \cup a}, X'_{-S \cup a}) - \E(\gamma_p(X) \mid X_S)\Delta_\mu(X_S, X'_{-S})] \\
&\quad = \E_{X, X', S}\left[\gamma_p^{S \cup a}(X_{S \cup a})\Delta_\mu(X_{S \cup a}, X'_{-S \cup a}) - \gamma_p^S(X_S)\Delta_\mu(X_S, X'_{-S})\right]\tag{A}\label{align:continuation},
\end{align*}
where we explicitly write $\E_{X, X', S}$ to book-keep the explicit dependence of the expectation on the independent samples $X, X' \sim P_X$ and an independent, random subset $S$ drawn from the Shapley distribution on $[d]\setminus\{a\}$.\footnote{As noted in Section~\ref{sec:back}, we have $\psi_a(X, X'; \mu) = \E_S[\mu_0(X_{S \cup a}, X'_{-S \cup a}) - \mu_0(X_S, X'_{-S})]$, where $\E_S[\cdots]$ denotes the expectation with respect to a draw of $S\sim [d]\setminus \{a\}$ where $S$ is drawn with probability $\frac{1}{d}\binom{d - 1}{|S|}$. In particular, it is assumed that $S$ is independent of $X, X'$.}
In more detail, the first equality follows from writing $\psi_a(X, X'; \mu_0)$ as an expectation over subsets $S \subset [d] \setminus \{a\}$ drawn from the aforementioned distribution, and the second equality follows from the definition of Gateaux derivative. The second to last equality follows from taking a conditional expectation given $X_{S \cup a}$ and $X'_{-S \cup a}$ (respectively $X_S$ and $X'_{-S}$) and noting $X$ and $X'$ are independent. The final line follows from our earlier definition of the notation $\gamma_p^S(X_S) := \E(\gamma_p(X) \mid X_S)$ for any $S \subset [d]$. 

Next, note that for any subset $S \subset [d]$, under Assumption~\ref{ass:dgp}, we can perform a change of measure from the product distribution $P_{X_S} \otimes P_{X_{-S}'}$ to the marginal distribution $P_{X'}$ as follows:
\begin{align*}
\E_{X, X'}\left[\gamma_p^S(X_S)\Delta_\mu(X_S, X'_{-S})\right] &= \E_{X_S, X'_{-S}}\left[\gamma_p^S(X_S)\Delta_\mu(X_S, X'_{-S})\right] \\
&= \E_{X_S, X'_{-S}}\left[\gamma_p^S(X_S) \frac{p(X_S, X'_{-S})}{p(X_S, X'_{-S})}\Delta_\mu(X_S, X'_{-S})\right] \\
&= \E_{X'}\left[\gamma_p(X_S')\frac{p_S(X'_S)p_{-S}(X_{-S}')}{p(X_S', X_{-S}')}\Delta_\mu(X')\right] \\
&= \E_{X'}\left[\gamma_p(X_S')\omega_S(X')\Delta_\mu(X')\right],
\end{align*}
where as earlier we have defined $\omega_S(X) := \frac{p_S(X_S)p_{-S}(X_{-S})}{p(X_S, X_{-S})}$. Again, we have explicitly book-kept the dependence on $X$ and $X'$ in the subscript of the expectations to make the change of measure argument clearer.
Thus, continuing from the Line~\ref{align:continuation} above, we have
\begin{align*}
\mathrm{(A)} &= \E_{X', S}\left[\gamma_p^{S\cup a}(X_{S\cup a}')\omega_{S \cup a}(X')\Delta_\mu(X')\right] + \E_{X', S}\left[\gamma_p^{S}(X_S')\omega_S(X')\Delta_\mu(X')\right] \\
&=\E_{X'}\left[\left\{\frac{1}{d}\sum_{\substack{S \subset [d]\\a \notin S}}\binom{d - 1}{|S|}^{-1}\left(\gamma_p^{S \cup a}(X'_{S \cup a})\omega_{S \cup a}(X') - \gamma_p^{S}(X_S')\omega_S(X')\right)\right\}\Delta_\mu(X')\right] \\
&= \E_{X'}[\alpha_p(X')\Delta_\mu(X')].
\end{align*}
where the second equality follows from expanding out the expectation with respect to $S$ and the last equality follows from the definition of $\alpha_p(X)$. Given that the final value is precisely $-D_\mu\E_{X'}[\alpha_p(X')\{Y - \mu_0(X'))](\Delta_\mu),$ the Gateaux derivative with respect to $\mu$ also vanishes.

We now have to just check the Gateaux derivatives with respect to the representers $\gamma$ and $\alpha$. This is largely straightforward. First, for $\Delta_\gamma := \gamma - \gamma_p$ with $\gamma \in L^\infty(P_X)$, we have
\begin{align*}
&D_\gamma \E\left[\gamma_p(X)\left\{\psi_a(X, X'; \mu_0) - \phi_a(X)\right\}\right](\Delta_\gamma) = \frac{\partial}{\partial t}\E\left[(\gamma_p + t \Delta_\gamma)(X)\left\{\psi_a(X, X'; \mu_0) - \phi_a(X)\right\}\right]\Big|_{t = 0}\\
&\qquad = \E\left[\Delta_\gamma(X)\left\{\psi_a(X, X'; \mu_0) - \phi_a(X)\right\}\right] \\
&\qquad = \E\left[\Delta_\gamma(X)\left\{\E\left(\psi_a(X, X'; \mu_0) \mid X\right) - \phi_a(X)\right\}\right] \\
&\qquad = 0,
\end{align*}
where the second to last equality follows from the tower rule by taking a conditional expectation given $X$, and the final line follows since $\phi_a(X) = \E\left(\psi_a(X, X'; \mu_0) \mid X\right)$. Likewise, for square-integrable $\Delta_\alpha := \alpha - \alpha_p$ and $\alpha$ assumed square-integrable, we have
\begin{align*}
D_\alpha \E\left[\alpha_p(X')\left\{Y' - \mu_0(X')\right\}\right](\Delta_\alpha) &= \frac{\partial}{\partial t}\E\left[(\alpha_p + t\Delta_\alpha)(X')\left\{Y' - \mu_0(X')\right\}\right] \\
&= \E\left[\Delta_\alpha(X')\left\{Y' - \mu_0(X')\right\}\right] \\
&= \E\left[\Delta_\alpha(X')\left\{\E(Y' \mid X') - \mu_0(X')\right\}\right] \\
&= 0.
\end{align*}
Thus, the score is Neyman orthogonal, as desired.
\end{proof}

We now prove Proposition~\ref{prop:ortho_irregular}, which details the orthogonality of a de-biased two sample score $m^\beta_p(Z, Z'; g)$ for the smoothed quantity $\theta_{p, \beta} := \E\varphi_{p, \beta}(\phi_a(X))$. This proof is largely the same as the previous one, and we omit arguments that are exactly analogous. The main difference is studying the Gateaux derivative $D_\phi\E[\varphi_{p, \beta}(\phi_a(X))](\Delta_\phi)$ for suitable perturbations $\Delta_\phi$.
\newline
\begin{proof}[Proof of Proposition~\ref{prop:ortho_irregular}]
Seeing that $\theta_{p, \beta} := \E[\varphi_{p, \beta}(\phi_a(X))]$ satisfies $\theta_{p, \beta} = M_p^\beta(g_{p, \beta})$ is straightforward, as the tower rule for conditional expectations  yields that
\[
\E\left[\gamma_{p, \beta}(X)\{\psi(X, X'; \mu_0) - \phi_a(X)\}\right] = 0 \;\; \text{and} \;\; \E\left[\alpha_{p, \beta}(X')\{Y' - \mu_0(X')\}\right] = 0,
\]
and so $M_p^\beta(g_{p, \beta}) = \E[\varphi_{p, \beta}(\phi_a(X))]$. Next, we argue orthogonality. Orthogonality with respect to $\gamma, \alpha,$ and $\mu$ follows from the exact same argument used in the proof of Proposition~\ref{prop:ortho_regular} (replacing $\gamma_p$ with $\gamma_{p, \beta}$ and $\alpha_p$ with $\alpha_{p, \beta}$ in the arguments). Checking orthogonality with respect to $\phi$ is mostly the same as well, but we detail it for completeness. Again letting $\Delta_\phi := \phi - \phi_a$ for an arbitrary direction $\phi \in L^\infty(P_X)$, we have
\begin{align*}
&D_\phi \E[\varphi_{p, \beta}(\phi_a(X))](\Delta_\phi) := \frac{\partial}{\partial t}\E\left[\varphi_{p, \beta}\big((\phi_a + t \Delta_\phi)(X)\big)\right]\Big|_{t = 0} &(\text{Definition of Gateaux Derivative}) \\
&\qquad = \lim_{\substack{t \rightarrow 0 \\ |t| \leq 1}}\E\left[\frac{1}{t}\left\{\varphi_{p, \beta}\big(\phi_a + t \Delta_\phi)(X)\big) - \varphi_{p, \beta}\big(\phi_a(X)\big)\right\}\right] &(\text{Definition of Partial Derivative}) \\
&\qquad = \lim_{\substack{t \rightarrow 0 \\ |t| \leq 1}}\E\left[\varphi_{p, \beta}'\big((\phi_a + \epsilon_{t, X}\Delta_\phi)(X)\big)\right] &(\text{Mean Value Theorem}) \\
&\qquad = \E\left[\lim_{\substack{t \rightarrow 0 \\ |t| \leq 1}}\varphi_{p, \beta}'\big(\phi_a + \epsilon_{t, X}\Delta_\phi)(X)\big)\Delta(X)\right] &(\text{Bounded Convergence Theorem}) \\
&\qquad = \E\left[\varphi_{p, \beta}'\big(\phi_a(X)\big)\Delta_\phi(X)\right] &(\text{Since } \epsilon_{t, X} \xrightarrow[t \rightarrow 0]{} 0),
\end{align*}
where we are able to apply the bounded convergence theorem due to the fact that $Y$ is assumed to be bounded in Assumption~\ref{ass:dgp} (and hence $\phi_a(X)$ is bounded as well). We conclude by noting that, using the same argument as in the proof of Proposition~\ref{prop:ortho_regular}, we have
\begin{align*}
D_\phi\E\left[\gamma_{p, \beta}(X)\{\psi_a(X, X'; \mu_0) - \phi_a(X)\}\right](\Delta_\phi) &= - \E\left[\gamma_{p, \beta}(X)\Delta_\phi(X)\right] = - \E\left[\varphi_{p, \beta}'\big(\phi_a(X)\big)\Delta(X)\right],
\end{align*}
thus completing the proof of orthogonality.
\end{proof}

\subsection{Proofs of Asymptotic Linearity}

Now that we have proven the requisite orthogonality results, we can analyze the asymptotic properties of the U-statistics presented in Section~\ref{sec:shap}. We first consider the asymptotic linearity and normality of our un-smoothed U-statistic, which targets $\theta_p = \E|\phi_a(X)|^p$ for $p \geq 2$. In the sequel, given samples $Z_1, \dots, Z_n \sim P_Z$, we let $\P_{n, 2}f(Z, Z') := \binom{n}{2}^{-1}\sum_{i < j}f(Z_i, Z_j)$ for a function $f : \calZ^2 \rightarrow \R$, and we likewise define $\G_{n, 2} := \sqrt{n}(\P_{n, 2} - \E_{Z, Z'})$
\newline

\begin{proof}[Proof of Theorem~\ref{thm:regular}]
To prove Theorem~\ref{thm:regular}, it suffices to the check the conditions of Theorem~\ref{thm:smooth_clt}, our central limit for smoothed U-statistics. In particular, since the score doesn't not depend on a smoothing parameter and is thus the same across sample sizes, we do not need to check Condition~\ref{cond:score}. Throughout our proof, when we are working with random nuisance estimates $\wh{g}$ (which are assumed independent of the sample), we explicitly subscript our expectations with the random variables that the expectation is taken over, as detailed in Section~\ref{sec:back}. If an expectation lacks a subscript, it is implicitly over all sources of randomness.

\paragraph{Checking Theorem~\ref{thm:smooth_clt}, Condition~\ref{cond:neyman}}

We have already shown the Neyman orthogonality of $m_p(Z; g)$ in Proposition~\ref{prop:ortho_regular}. Thus, by our assumptions on the nuisance estimates $\wh{g} = (\wh{\phi}, \wh{\mu}, \wh{\gamma}, \wh{\alpha})$, we have 
\[
D_g \E_{Z, Z'}\left[m_p(Z, Z'; g^\beta)\right](\wh{g} - g_p) = 0.
\]
Consequently, we have
\begin{align*}
D_g \E_{Z, Z'}\left[h_p(Z, Z'; g_p)\right](\wh{g} - g_p) &= \frac{1}{2}D_g\E_{Z, Z'}\left[m_p(Z, Z'; g_p)\right](\wh{g} - g_p) \\
&\qquad + \frac{1}{2}D_g\E_{Z, Z'}\left[m_p(Z', Z; g_p)\right](\wh{g} - g_p) \\
&= 0,
\end{align*}
which follows from the linearity of the Gateaux derivative.

\paragraph{Checking Theorem~\ref{thm:smooth_clt}, Condition~\ref{cond:hessian}}
Since $h_p(Z, Z'; g) = \frac{1}{2}\left\{m_p(Z, Z'; g) + m_p(Z', Z; g)\right\}$, by symmetry, it suffices to just bound $D_g^2\E[m^\beta_p(Z, Z'; \wb{g})](\wh{g} - g_p, \wh{g} - g_p)$ for any $\wb{g} \in [\wh{g}, g_p]$. We have
\begin{align*}
D_g^2\E_{Z, Z'}[m_p(Z, Z'; \wb{g})](\wh{g} - g_p, \wh{g} - g_p) &= D_\phi^2\E_X[|\wb{\phi}(X)|^p](\wh{\phi} - \phi_a, \wh{\phi} - \phi_a) \\
&\qquad + D_{\gamma, \mu}\E_{X, X'}[\wb{\gamma}(X)\psi_a(X, X'; \wb{\mu})](\wh{\gamma} - \gamma_{p}, \wh{\mu} - \mu_0) \\
&\qquad- D_{\gamma, \phi}\E_{X}[\wb{\gamma}(X)\wb{\phi}(X)](\wh{\gamma} - \gamma_{p}, \wh{\phi} - \phi_a) \\
&\qquad - D_{\alpha, \mu}\E_{X'}[\wb{\alpha}(X')\wb{\mu}(X')](\wh{\alpha} - \alpha_{p}, \wh{\mu} - \mu_0).
\end{align*}

We now proceed to bound each of the terms of the right hand side of the above display separately. First, letting $\wh{\Delta}_\phi := \wh{\phi} - \phi_a$, observe that
\begin{align*}
&D_\phi^2 \E_X[|\wb{\phi}(X)|^p](\wh{\Delta}_\phi) = \frac{\partial^2}{\partial t^2}\E_X\left[\left|(\wb{\phi} + t\wh{\Delta}_\phi)(X)\right|^p\right]\Big|_{t  = 0} &(\text{Definition of Gateaux Derivative})\\
&\qquad = \lim_{\substack{t \rightarrow 0 \\ |t| \leq 1}}\E_X\left[\frac{\big|(\wb{\phi} + t\wh{\Delta}_\phi)(X)\big|^p - 2\big|\wb{\phi}(X)\big|^p + \big|(\wb{\phi} - t\wh{\Delta}_\phi)(X)\big|^p}{t^2}\right] &(\text{Definition of Second Partial}) \\
&\qquad = \lim_{\substack{t \rightarrow 0 \\ |t| \leq 1}}\E_X\left[\wh{\Delta}_\phi(X)^2 p(p - 1)\big|(\wb{\phi} + \epsilon_{t, X}\wh{\Delta}_{\phi})(X)\big|^{p - 2}\right] &(\text{Mean Value Theorem})\\
&\qquad = p(p - 1)\E_X\left[\wh{\Delta}_\phi(X)^2 \lim_{\substack{t \rightarrow 0 \\ |t| \leq 1}}\big|(\wb{\phi} + \epsilon_{t, X}\wh{\Delta}_\phi)(X)\big|^{p - 2}\right] &(\text{Bounded Convergence Theorem}) \\
&\qquad = p(p - 1)\E_X\left[\wh{\Delta}_\phi(X)^2 \left|\wb{\phi}(X)\right|^{p - 2}\right] \\
&\qquad \leq p(p - 1)\left\||\wb{\phi}(X)|^{p - 2}\right\|_{L^\infty(P_X)}\E_X\left[(\wh{\phi}(X) - \phi_a(X))^2\right] &(\text{Holder's Inequality, Definition of } \wh{\Delta}_\phi)\\
&\qquad \lesssim  \|\wh{\phi} - \phi_a\|_{L^2(P_X)}^2 &(\text{Boundedness of } \wb{\phi}) \\
&\qquad = o_\P(n^{-1/2}) &(\text{Nuisance Convergence Rates}).
\end{align*}
In a bit more detail, the second equality above follows as $\frac{\partial^2}{\partial t^2}f(x + th)\big|_{t = 0} = \lim_{t \rightarrow 0}\frac{f(x + th) - 2f(x) + f(x - th)}{t^2}$ for any twice differentiable function $f$. Further, we apply the mean value theorem for second derivatives (applied point-wise), which guarantees the existence of some $X$-dependent value $\epsilon_{t, X} \in [-t, t]$ such that
\[
\big|(\wb{\phi} + t\wh{\Delta}_\phi)(X)\big|^p - 2\big|\wb{\phi}(X)\big|^p + \big|(\wb{\phi} - t\wh{\Delta}_\phi)(X)\big|^p = t^2 \wh{\Delta}_\phi(X)^2p(p-1)|(\wb{\phi} + \epsilon_{t, X}\wh{\Delta}_\phi)(X)|^{p - 2}.
\]
We can apply the bounded convergence theorem above as $\wh{\Delta}_\phi$, $\wb{\phi}$, and $\epsilon_{t, X}$ are all bounded.\footnote{$\wb{\phi}$ is bounded as we have $\wb{\phi} \in [\wh{\phi}, \phi_a]$ and we have assumed $\wh{\phi}$ and $\phi_a$ are bounded.}
The third to last line comes from applying the $L^\infty/L^1$ Holder's inequality. The second to last line follows from the fact $\wb{\phi}(X)$ is almost surely bounded. The final line follows as we have assumed $\|\wh{\phi} - \phi_a\|_{L^2(P_X)} = o_\P(n^{-1/4})$.

\noindent Next, we can directly compute that
\begin{align*}
\left|D_{\gamma, \mu}\E_{X}[\wb{\gamma}(X)\wb{\phi}(X)](\wh{\gamma} - \gamma_{p}, \wh{\phi} - \phi_a)\right| &= \left|\E_{X}[(\wh{\phi} - \phi_a)(X)(\wh{\gamma} - \gamma_{p})(X)]\right| \\
&\leq \|\wh{\gamma} - \gamma_{p}\|_{L^2(P_X)}\|\wh{\phi} - \phi_a\|_{L^2(P_X)} \\
&= o_\P(n^{-1/2}),
\end{align*}
where the first equality follows from the definition of the cross Gateaux derivative, the first inequality follows from Cauchy-Schwarz, and the final line follows as we have assumed $\|\wh{\gamma}-\gamma_{p}\|_{L^2(P_X)}\|\wh{\phi} - \phi_a\|_{L^2(P_X)} = o_\P(n^{-1/2})$.
An exactly analogous argument gives $\left|D_{\alpha, \mu}\E_{X'}[\wb{\alpha}(X')\wb{\mu}(X')](\wh{\alpha} - \alpha_{p}, \wh{\mu} - \mu_0)\right| = o_\P(n^{-1/2})$. Lastly, letting $\wh{\Delta}_\mu := \wh{\mu} - \mu_0$ and $\wh{\Delta}_\gamma := \wh{\gamma} - \gamma_p$, we have
\begin{align*}
&\left|D_{\gamma, \mu}\E_{X, X'}[\wb{\gamma}(X)\psi_a(X, X'; \wb{\mu})](\wh{\Delta}_\gamma, \wh{\Delta}_\mu)\right| \\
&\qquad = \left|D_{\gamma, \mu}\E_{X, X', S}\left[\wb{\gamma}(X)\left\{\wb{\mu}(X_{S \cup a}, X'_{-S \cup a}) - \wb{\mu}(X_S, X'_{-S})\right\}\right](\wh{\Delta}_\gamma, \wh{\Delta}_\mu)\right| \\
&\qquad = \left|\E_{X, X', S}\left[\wh{\Delta}_\gamma(X)\left\{\wh{\Delta}_\mu(X_{S \cup a}, X'_{-S \cup a}) - \wh{\Delta}_\mu(X_S, X'_{-S})\right\}\right]\right| \\
&\qquad \leq  \left(\E_{X}\wh{\Delta}_\mu(X)^2\right)^{1/2}\left\{\left(\E_{X, X', S}\wh{\Delta}_\mu(X_{S \cup a}, X'_{-S \cup a})^2\right)^{1/2} + \left(\E_{X, X', S}\wh{\Delta}_\mu(X_{S}, X'_{-S})^2\right)^{1/2} \right\} \\
&\qquad \leq 2 \left(\E_X \wh{\Delta}_\gamma(X)^2\right)^{1/2}\max_{S \subset [d]}\left(\E_{X, X'} \wh{\Delta}_\mu(X_S, X'_{-S})^2\right)^{1/2} \\
&\qquad = 2 \|\wh{\gamma} - \gamma_{p}\|_{L^2(P_X)}\max_{S \subset [d]}\|\wh{\mu} - \mu_0\|_{L^2(Q_{S, X})} \\
&\qquad = o_\P(n^{-1/2}),
\end{align*}
where the first equality follows from the definition of $\psi_a(X, X'; \mu)$, the first inequality follows from Cauchy-Schwarz, the second inequality follows from replacing the average over subsets by the maximum, and the final line follows from the assumption that, for any $S \subset [d]$, $\|\wh{\gamma} - \gamma_{p, \beta}\|_{L^2(P_X)}\|\wh{\mu} - \mu_0\|_{L^2(Q_{S, X})} = o_\P(n^{-1/2})$. 

Lastly, we check that the equivalent nuisance estimation rates outlined in Remark~\ref{rmk:alternative_rates} also guarantee that the cross-derivative with respect to $\gamma$ and $\mu$ vanishes sufficiently quickly. To do this, it suffices to show in the case of a bounded likelihood ratio $\omega_S(X_S, X_{-S})$ that $\|\wh{\mu} - \mu_0\|_{L^2(Q_{S, X})} \lesssim \|\wh{\mu} - \mu_0\|_{L^2(P_X)}$. To see this, observe that
\begin{align*}
\|\wh{\mu} - \mu_0\|_{L^2(Q_{S, X})} &= \E_{X, X'}\left[\big(\wh{\mu}(X_S, X'_{-S}) - \mu_0(X_S, X'_{-S})\big)^2\right]^{1/2} \\
&= \E_{X}\left[\omega_S(X_S, X_{-S})\big(\wh{\mu}(X) - \mu_0(X)\big)^2\right]^{1/2} \\
&\leq \|\omega_S\|_{L^\infty(P_X)}^{1/2}\|\wh{\mu} - \mu_0\|_{L^2(P_X)},
\end{align*}
where the final inequality follows from an application of the $L^\infty/L^1$ Holder inequality. Lastly, note that in the final line if one instead assume $\omega_S$ has a finite $q$th moment, the $L^q/L^h$ Holder inequality (where $\frac{1}{h} + \frac{1}{q} = 1$) implies
\[
\E_{X}\left[\omega_S(X_S, X_{-S})\big(\wh{\mu}(X) - \mu_0(X)\big)^2\right]^{1/2} \leq \|\omega_{S}\|_{L^q(P_X)}^{1/2}\|\wh{\mu} - \mu_0\|_{L^{2h}(P_X)}.
\]

This thus completes checking Condition~\ref{cond:hessian}

\paragraph{Checking Theorem~\ref{thm:smooth_clt}, Condition~\ref{cond:equi}}

We now proceed to check stochastic equicontinuity. For simplicity, for $g = (\phi, \mu, \gamma, \alpha)$ and $g' = (\phi', \mu', \gamma', \alpha')$, where we assume $\phi, \phi', \mu, \mu', \gamma,$ and $\gamma'$ are bounded and $\alpha, \alpha'$ are square-integrable. Define the functions $\delta_{p}(Z, Z'; g, g') := h_p(Z, Z'; g) - h_p(Z, Z'; g') - \theta_{p}(g) - \theta_{p}(g')$, where we have defined $\theta_{p}(g) := \E[h_p(Z, Z'; g)]$ for any $g$. By construction, $\delta_p$ is symmetric in $Z$ and $Z'$ and has $\E[\delta_p(Z, Z'; g, g')] = 0$. Further define the functions $f_{1, p}(Z; g, g')$ and $f_{2, p}(Z, Z'; g, g')$ by
\begin{align*}
f_{1, p}(z; g, g') &:= \E_{Z, Z'}[\delta_p(Z, Z'; g, g') \mid Z = z] = \E_{Z'}[\delta_p(z, Z'; g, g')] \\
f_{2, p}(z, z'; g, g') &:= \delta_p(z, z'; g, g') - f_{1, p}(z; g, g') - f_{1, p}(z'; g, g').
\end{align*}
We discuss several properties of $f_{1, p}$ and $f_{2, p}$. First, observe that $\E[f_{1, p}(Z; g, g')] = \E[\E(\delta_p(Z, Z'; g, g') \mid Z)] = \E[\delta_p(Z, Z'; g, g')] = 0$. Further, we have
\begin{align*}
\E\left[f_{2, p}(Z, Z'; g, g') \mid Z\right] &= \E\left[\delta_p(Z, Z'; g, g') \mid Z\right] - f_{1, p}(Z; g, g') - \E[f_{1, p}(Z'; g, g')] \\
&= f_{1, p}(Z; g, g') - f_{1, p}(Z; g, g') - 0 \\
&= 0.
\end{align*}
Further, by construction, we have 
\[
\delta_p(z, z'; g, g') = f_{1, p}(z; g, g') + f_{1, p}(z'; g, g') + f_{2, p}(z, z'; g, g').
\]
Thus, defining $\P_{n, 2} f(X_1, X_2) := \binom{n}{2}^{-1}\sum_{i < j} f(X_i, X_j)$, we  note the identity
\begin{align*}
\P_{n, 2}\delta_p(Z, Z'; g, g') &:= \binom{n}{2}^{-1}\sum_{i < j}\delta_p(Z_i, Z_j; g, g') \\
&= \binom{n}{2}^{-1}\sum_{i < j}\left\{f_{1, p}(Z_i; g, g') + f_{1, p}(Z_j; g, g') + f_{2, p}(Z_i, Z_j; g, g')\right\} \\
&= \frac{2}{n(n - 1)}(n - 1)\sum_{i = 1}^n f_{1, p}(Z_i; g, g') + \binom{n}{2}^{-1}\sum_{i < j} f_{2, p}(Z_i, Z_j; g, g') \\
&= \frac{2}{n}\sum_{i = 1}^nf_{1, p}(Z_i; g, g') + \binom{n}{2}^{-1}\sum_{i < j}f_{2, p}(Z_i, Z_j; g, g').
\end{align*}
Putting the aforementioned identities together, we can upper bound $\E\left[\left(\P_{n, 2}\delta_p(Z, Z'; g, g')\right)^2\right]$ as follows:
\begin{align*}
&\E\left[\left(\P_{n, 2} \delta_p(Z, Z'; g, g')\right)^2\right] = \E\left[\left(\frac{2}{n}\sum_{i = 1}^nf_{1, p}(Z_i; g, g') + \binom{n}{2}^{-1}\sum_{i < j}f_{2, p}(Z_i, Z_j; g, g')\right)^2\right] \\
&\qquad = \frac{4}{n}\E\left[f_{1, p}^\beta(Z)^2\right] + \binom{n}{2}^{-1}\E\left[f_{2, p}^\beta(Z, Z'; g, g')^2\right]  \\
&\qquad \qquad + \frac{2}{n}\binom{n}{2}^{-1}\E\left[\left(\sum_{i = 1}^n f_{1, p}(Z_i; g, g')\right)\left(\sum_{j < k}f_{2, p}(Z_i, Z_j; g, g')\right)\right]\\ 
&\qquad = \frac{4}{n^2}\E\left[\left(\sum_{i = 1}^n f_{1, p}(Z_i; g, g')\right)^2\right] + \binom{n}{2}^{-2}\E\left[\left(\sum_{i < j}f_{2, p}(Z_i, Z_j; g, g')\right)^2\right] \\ 
&\qquad = \frac{4}{n}\E\left[f_{1, p}(Z)^2\right] + \binom{n}{2}^{-1}\E\left[f_{2, p}(Z, Z'; g, g')^2\right] \\ 
&\qquad = \frac{4}{n}\E\left[\E\left(\delta_p(Z, Z'; g, g') \mid Z\right)^2\right] + \binom{n}{2}^{-1}\E\left[\left(\delta_p(Z, Z'; g, g') -f_{1, p}(Z; g, g') - f_{1, p}(Z'; g, g')\right)^2\right] \\
&\qquad \leq  \frac{4}{n}\E[\delta_p(Z, Z'; g, g')^2] + \binom{n}{2}^{-1}\E\left\{\E[\delta_p(Z, Z'; g, g')^2] - 2\E[f_{1, p}(Z; g, g')^2]\right\} \\
&\qquad \leq \left(\frac{4}{n} + \frac{2}{n(n - 1)}\right)\E\left[\delta_p(Z, Z'; g, g')^2\right].
\end{align*}
In the above, the first equality follows from the decomposition shown in the above display and the second follows from expanding the square. The third equality follows from the fact $\E[f_{2, p}(Z_i, Z_j; g, g')f_{1, p}(Z_k; g, g')] = 0$ for any $k \in [n]$. The fourth equality follows from the fact that $f_{1, p}(Z)$ is mean zero and that $\E[f_{2, p}(Z, Z'; g, g') \mid Z] = 0$. The fifth equality follows from expanding the definitions of $f_{1, p}$ and $f_{2, p}$. The first inequality follows from an application of Jensen's inequality for conditional expectations along with the fact that
\begin{align*}
\E\left[\left(\delta_p(Z, Z'; g, g') - f_{1, p}(Z; g, g') - f_{1, p}(Z'; g, g')\right)^2\right] &= \E\left[\delta_p(Z, Z'; g, g')^2\right] + 2 \E[f_{1, p}(Z; g, g')^2] \\
&\qquad - 4\E[f_{1, p}(Z; g, g')f_{2, p}(Z, Z'; g, g')] \\
&=\E\left[\delta_p(Z, Z'; g, g')^2\right] - 2\E\left[f_{1, p}(Z; g, g')^2\right].
\end{align*}
The final inequality follows from dropping the term attached to $-2\E[f_{1, p}(Z; g,g')^2]$. With this upper bound in mind, we note that:
\begin{align*}
\E\left[\delta_p(Z, Z'; g, g')^2\right] &= \E\left[\left(h_p(Z, Z'; g) - h_p(Z, Z'; g') - \theta_{p}(g) - \theta_{p}(g')\right)^2\right] \\
&= \Var\left[h_p(Z, Z'; g) - h_p(Z, Z'; g')\right] \\
&\leq \E\left[(h_p(Z, Z'; g) - h_p(Z, Z'; g'))^2\right].
\end{align*}
In aggregate, we have the bound
\begin{align*}
\E\left[\left(\P_{n, 2} h_p(Z, Z'; g) - \P_{n, 2} h_p(Z, Z'; g')\right)^2\right] &\leq \left(\frac{4}{n} + \frac{2}{n(n - 1)}\right)\E\left[(h_p(Z, Z'; g) - h_p(Z, Z'; g'))^2\right].
\end{align*}
Multiplying both sides by $n$ and noting $\G_{n, 2} := \sqrt{n}(\P_{n, 2} - \E_{Z, Z'})$ yields
\begin{align*}
\E\left[\left(\G_{n, 2} h_p(Z, Z'; g) - \G_{n, 2} h_p(Z, Z'; g')\right)^2\right] &\leq \left(4 + \frac{2}{n - 1}\right)\E\left[(h_p(Z, Z'; g) - h_p(Z, Z'; g'))^2\right].
\end{align*}
Next, note that upper bounding $\E[(h_p(Z, Z'; g) - h_p(Z, Z'; g'))^2]$ is equivalent to upper bounding $\E[(m_p(Z, Z'; g) - m_p(Z, Z'; g'))^2]$ up to absolute, multiplicative constants, so we just need to upper bound the latter. Observe that the parallelogram inequality yields that
\begin{align*}
&\E\left[(m_p(Z, Z'; g) - m_p(Z, Z'; g'))^2\right] \lesssim \E\left[\left(\big|\phi(X)\big|^p - \big|\phi'(X)\big|^p\right)^2\right] \\
&\qquad + \E\left[\left(\gamma(X)(\psi_a(X, X'; \mu) - \phi(X)) - \gamma'(X)(\psi_a(X, X'; \mu') - \phi'(X))\right)^2\right] \\
&\qquad + \E\left[\left(\alpha(X')\left\{Y - \mu(X')\right\} - \alpha'(X')\left\{Y - \mu'(X')\right\}\right)^2\right]
\end{align*}

We provide upper bounds for each of the terms on the right hand side of the above expression. First, we have by a first order Taylor expansion with mean value theorem remainder that
\begin{align*}
\E\left[\left(\big|\phi(X)\big|^p - \big|\phi'(X)\big|^p\right)^2\right] &= \E\left[\left(p\big|\wb{\phi}(X)\big|^{p - 1}(\phi(X) - \phi'(X))\right)^2\right] \\
&\leq p^2\Big\||\wb{\phi}(X)|^{2(p - 1)}\Big\|_{L^\infty(P_X)}\|\phi - \phi'\|_{L^2(P_X)}^2 \\
&\lesssim \|\phi - \phi'\|_{L^2(P_X)}^2.
\end{align*}
In the above, the firs inequality follows from applying Holder's inequality for $L^\infty/L^1$ norms, and the final inequality follows from the the fact $\wb{\phi} \in [\phi, \phi']$ is almost surely bounded, which in turn follows from the assumption that $\phi, \phi'$ are absolutely bounded.

Likewise, we have via the parallelogram inequality that
\begin{align*}
&\E\left[\big(\gamma(X)\{\psi_a(X, X'; \mu) - \phi(X)\} - \gamma'(X)\{\psi_a(X, X'; \mu') - \phi'(X)\}\big)^2\right] \\
&\qquad \lesssim \underbrace{\E\left[\left(\gamma(X)\psi_a(X, X'; \mu) - \gamma'(X)\psi_a(X, X'; \mu')\right)^2\right]}_{=:T_1} + \underbrace{\E\left[\left(\gamma(X)\phi(X) - \gamma'(X)\phi'(X)\right)^2\right]}_{=:T_2}.
\end{align*}
First, we can control $T_2$ by noting that
\begin{align*}
T_2 &= \E\left[\big(\gamma(X)\phi(X) \pm \gamma'(X)\phi(X) - \gamma'(X)\phi'(X)\big)^2\right] \\
&\lesssim \E\left[\big(\gamma(X) - \gamma'(X)\big)^2\phi(X)^2\right] + \E\left[\big(\phi(X) - \phi'(X)\big)^2 \gamma'(X)^2\right] \\
&\lesssim \|\gamma - \gamma'\|_{L^2(P_X)}^2 + \|\phi - \phi'\|_{L^2(P_X)}^2,
\end{align*}
where the first equality follows from adding and subtracting $\gamma'(X)\phi(X)$, the the first inequality follows again from the parallelogram inequality, and the final inequality follows since we have assumed $\phi(X)$ and $\gamma'(X)$ are bounded almost surely by absolute constants. 

Next, we can bound $T_1$ similarly. In particular, we have
\begin{align*}
T_1 &= \E\left[\big(\gamma(X)\psi_a(X, X'; \mu) \pm \gamma'(X)\psi(X, X'; \mu) - \gamma'(X)\psi(X, X'; \mu')\big)^2\right] \\
&\lesssim \E\left[\big(\gamma(X) - \gamma'(X)\big)^2 \psi_a(X, X';\mu)^2\right] + \E\left[\gamma'(X)^2\big(\psi_a(X, X'; \mu) - \psi(X, X'; \mu')\big)\right]\\
&\lesssim \|\gamma - \gamma'\|_{L^2(P_X)}^2 + \left\|\psi_a(X, X'; \mu) - \psi_a(X, X'; \mu')\right\|_{L^2(P_{X, X'})}.
\end{align*}
What remains is to upper bound the final term. We have:
\begin{align*}
&\|\psi_a(X, X'; \mu) - \psi_a(X, X'; \mu')\|_{L^2}^2 = \E\left[(\psi_a(X, X'; \mu - \mu'))^2\right] \\
&\qquad = \E\left[\left(\E_S\left[(\mu - \mu')(X_{S \cup a}, X'_{-S \cup a}) - (\mu - \mu')(X_S, X'_{-S})\right]\right)^2\right] \\
&\qquad \leq \E\left[\big((\mu - \mu')(X_{S \cup a}, X'_{-S \cup a}) - (\mu - \mu')(X_S, X'_{-S})\big)^2\right] \\
&\qquad \lesssim \E\left[(\mu - \mu')(X_{S \cup a}, X'_{-S \cup a})^2\right] + \E\left[(\mu - \mu')(X_S, X'_{-S})^2\right]\\
&\qquad = \E\left[\frac{p_{S \cup a}(X_{S \cup a})p_{-S\cup a}(X_{-S \cup a})}{p(X)}(\mu - \mu')(X)^2\right] + \E\left[\frac{p_S(X_S)p_{-S}(X_{-S})}{p(X)}(\mu - \mu')(X)^2\right] \\
&\qquad \leq 2\left(\max_{S \subset [d]}\E\left[\left(\frac{p_S(X_S)p_{-S}( X_{-S})}{p(X)}\right)^2\right]\right)^{1/2}\E\left[(\mu - \mu')(X)^4\right]^{1/2} \\
&\qquad \lesssim \left(\max_{S \subset [d]}\E\left[\left(\frac{p_S(X_S)P_{-S}(X_{-S})}{p(X)}\right)^2\right]\right)^{1/2}\E\left[(\mu - \mu')(X)^2\right]^{1/2} \\
&\qquad \lesssim \|\mu - \mu'\|_{L^2(P_X)}.
\end{align*}
The first equality above follows from the linearity of $\psi_a$ and the second equality from the definition of $\psi_a$. The first inequality follows from Jensen's inequality (pushing the square inside $\E_S$) and the second inequality comes from the parallelogram inequality. The next line comes from a change of measure, and the following from Cauchy-Schwarz and then taking the max over all subsets. The second to last line follows since we have assumed $\mu$ and $\mu'$ are absolutely bounded. The final line follows since we have assumed for any $S \subset [d]$ that $\E\left[\left(\frac{p_S(X_S)p_{-S}(X_{-S})}{p(X)}\right)^2\right] < \infty$.

Thus, we have shown that
\begin{align*}
&\E\left[\big(\G_{n, 2}h_p(Z, Z'; g) - \G_{n, 2}h_p(Z, Z'; g')\big)^2\right] \\
&\qquad \lesssim \underbrace{\|\phi - \phi'\|_{L^2(P_X)}^2 + \|\mu - \mu'\|_{L^2(P_X)}^2  + \|\alpha - \alpha'\|_{L^2(P_X)}^2 + \|\gamma  - \gamma'\|_{L^2(P_X)}^2 + \|\mu - \mu'\|_{L^2(P_X)}}_{=:\Delta(g, g')}
\end{align*}
Further, note that the assumption of nuisance consistency implies that $\Delta(\wh{g}, g_p) = o_\P(1)$. Likewise, since $\wh{\phi}, \wh{\mu}, \wh{\gamma}, \phi_a, \mu_0, \gamma_p$ are bounded by an absolute constant that is independent of the sample size and $\wh{\alpha}, \alpha_p$ has bounded $L^{2 + \epsilon}(P_X)$ norm independent of the sample size, the collections of random variables $(\Delta(\wh{g}_n, g_p))_{n \geq 1}$ is uniformly integrable. Consequently, for any $\epsilon > 0$, we have 
\begin{align*}
&\lim_{n \rightarrow \infty}\P\left(\big|\G_{n, 2}h_p(Z, Z'; \wh{g}) - \G_{n, 2}h_p(Z, Z'; g_p)\big| \geq \epsilon \right) \tag{B}\label{tag:chebyshev}\\
&\qquad = \lim_{n \rightarrow \infty} \E\left[\P_{Z_1, \dots, Z_n}\left(\big|\G_{n, 2}h_p(Z, Z'; g) - \G_{n, 2}h_p(Z, Z'; g')\big|^2 \geq \epsilon \right)\right] \\
&\qquad \leq \frac{1}{\epsilon^2}\lim_{n \rightarrow \infty}E\left[\E_{Z_1, \dots, Z_n}\left[\big(\G_{n, 2}h_p(Z, Z'; g) - \G_{n, 2}h_p(Z, Z'; g')\big)^2 \right]\right] \\
&\qquad \lesssim \lim_{n \rightarrow \infty}\E\left[\Delta(\wh{g}, g_p)\right] \\
&\qquad = 0,
\end{align*}
where the final equality follows from an application of Vitali's theorem (Proposition~\ref{prop:vitali}). Thus, since $\epsilon > 0$, we have shown
\[
\big|\G_{n, 2}h_p(Z, Z'; \wh{g}) - \G_{n, 2}h_p(Z, Z'; g_p)\big| = o_\P(1),
\]
and hence we have stochastic equicontinuity.

\paragraph{Checking Theorem~\ref{thm:smooth_clt}, Condition~\ref{cond:reg}}
Let $\epsilon > 0$ be the constant noted in Assumption~\ref{ass:dgp} and the statement of Theorem~\ref{thm:regular}. Observe that
\begin{align*}
\|h_p(Z, Z'; g_p)\|_{L^{2 + \epsilon}(P_X)} &\leq \left\||\phi_a(X)|^p\right\|_{L^{2 + \epsilon}(P_X)} + \left\|\gamma_p(X)\{\psi_a(X, X'; \mu_0) - \phi_a(X)\}\right\|_{L^{2 + \epsilon}(P_X)} \\
&\qquad + \left\|\alpha_p(X')\{Y'  - \mu_0(X)\}\right\|_{L^{2 + \epsilon}(P_X)} \\
&\leq \||\phi_a(X)|^p\|_{L^\infty(P_X)} + \|\gamma_p(X)\{\psi_a(X, X'; \mu_0) - \phi_a(X)\}\|_{L^\infty(P_X)} \\
&\qquad + \|Y' - \mu_0(X')\|_{L^\infty(P_X)}\|\alpha_p(X')\|_{L^{2 + \epsilon}(P_X)} \\
&\leq D^p + 2(p-1)D^p +  2D^2,
\end{align*}
where again $D > 0$ is the uniform bound on $L^{2 + \epsilon}$ norm of $\alpha_p(X)$ and the $L^\infty$ norm of $Y$ (and hence $\phi_a$ and $\mu_0$). Clearly, since $\gamma_p(X) = (p - 1)\sgn(\phi_a(X))|\phi_a(X)|^{p - 1}$, we have $\|\gamma_p\|_{L^\infty(P_X)} \leq (p - 1)D^{p - 1}$. Thus, the $L^{2 + \epsilon}$ norm of $h_p(Z, Z'; g_p)$ exists, and we have finished checking the condition.

\paragraph{Computing the Influence Function}
Since we have already verified the conditions of Theorem~\ref{thm:smooth_clt}, we know that $\sqrt{n}(\wh{\theta}_n - \theta_p) = \frac{1}{\sqrt{n}}\sum_{i = 1}^n \rho_{\theta_p}(Z_i) + o_\P(1)$ for some limiting influence function $\rho_{\theta_p}(z)$. What remains is to derive said influence function.

Also per the conclusion of Theorem~\ref{thm:smooth_clt}, we know that $\rho_{\theta_p}(z) = 2\left\{\E\left[h_p(Z, Z'; g_p) \mid Z = z\right] - \theta_p\right\}$ (since we are dealing with an order-2 U-statistic). We just need to compute the conditional expectation in order to complete the proof. By linearity of expectation, we have
\[
\E\left[h_p(Z, Z'; g_p) \mid Z = z\right] = \frac{1}{2}\underbrace{\E\left[m_p(Z, Z'; g_p) \mid Z = z\right]}_{=: T_1} + \frac{1}{2}\underbrace{\E\left[m_p(Z', Z; g_p) \mid Z = z\right]}_{=:T_2}
\]

Since $Z$ and $Z'$ are assumed independent, we can write 
\begin{align*}
T_1 &= \E\left[m_p(z, Z'; g_p)\right] \\
&= |\phi_a(x)|^p + p\sgn(\phi_a(x))\left\{\E[\psi_a(x, X'; \mu_0)] - \phi_a(X)\right\} + \E\left[\alpha_p(X')\{Y' - \mu_0(X')\}\right] \\
&= |\phi_a(x)|^p,
\end{align*}
where the second equality comes from expanding the definition of $m_p(Z, Z'; g_p)$ and the third equality follows from the fact that (a) $\E[\psi_a(x, X'; \mu_0)] = \phi_a(x)$ and (b) $\E\left[\alpha_p(X')\{Y' - \mu_0(X')\}\right] = 0$.

Likewise, we have
\begin{align*}
T_2 &= \E\left[m_p(Z, z'; g_p)\right] \\
&= \E\left|\phi_a(X')\right|^p + \E\left[\gamma_p(X')\left\{\psi_a(X', x; \mu_0) - \phi_a(X')\right\}\right] + \alpha_p(x)\{y - \mu_0(x)\} \\
&= \theta_p + \Lambda_p(x) - p\theta_p + \alpha_p(x)\{y - \mu_0(x)\},
\end{align*}
where $\Lambda_p(x) := \E\left[\gamma_p(X')\psi_a(X', X; \mu_0) \mid X = x\right]$. Putting things together, it follows that the influence function is given by
\[
\rho_{\theta_p}(z) = |\phi_a(x)|^p  + \Lambda_P(x) + \alpha_p(x)\{y - \mu_0(x)\} - (p + 1)\theta_p,
\]
which is precisely the claimed influence function in the theorem statement. Thus, we have completed the proof.
\end{proof}

Now, we prove the convergence of our smoothed estimator to the population $p$th SHAP power $\theta_{p}$. The structure of this proof is largely similar to the proof of Theorem~\ref{thm:regular} given immediately above. However, there are two key deviations. First, in the proof above, we studied an estimator that directly targeted the population $p$th SHAP power $\theta_p$. Here, we target the smoothed surrogate $\theta_{p, \beta} := \E[\varphi_{p, \beta}(\phi_a(X))]$. Thus, we must use the bias lemma (Lemma~\ref{lem:bias}) to show that distance between $\theta_{p, \beta}$ and $\theta_p$ given the choice of smoothing parameter is sufficiently large. Next, given the second derivative of the smoothed surrogate grows with $\beta$, we must ensure $\beta$ is not chosen to be too large in terms of the sample in order to obtain asymptotic linearity/normality.

\begin{proof}[Proof of Theorem~\ref{thm:irregular}]
Recalling that $\theta_{p, \beta}$ is defined as $\theta_{p, \beta} := \E\left|\varphi_{p, \beta}\big(\phi_a(X)\big)\right|$, we can write
\[
\sqrt{n}(\wh{\theta} - \theta_{p}) = \sqrt{n}(\wh{\theta} - \theta_{p, \beta}) + \sqrt{n}(\theta_{p, \beta} - \theta_p).
\]
We show that $\sqrt{n}(\theta_{p, \beta} - \theta_p) = o(1)$ under the assumptions of Theorem~\ref{thm:irregular}. In particular, using the bias bound outlined in Lemma~\ref{lem:bias} and the assumed nuisance estimation rates, we have
\begin{align*}
\sqrt{n}(\theta_{p, \beta} - \theta_p) &\lesssim \sqrt{n}\left(\frac{1}{\beta}\right)^{\frac{p + \delta}{2 - p}} &(\text{Lemma~\ref{lem:bias}})\\
&\lesssim \sqrt{n}\left(n^{-\frac{2 - p}{2(p + \delta)}}\right)^{\frac{p + \delta}{2 - p}} &\Big(\beta_n = \omega\big(n^{\frac{2 - p}{2(p + \delta)}}\big)\Big)\\
&= o(1) 
\end{align*}
Thus, we have 
\[
\sqrt{n}(\wh{\theta} - \theta_p) = \sqrt{n}(\wh{\theta} - \theta_{p, \beta}) + o(1).
\]
To complete the proof of the theorem, we just need to check the asymptotic linearity of $\sqrt{n}(\wh{\theta} - \theta_{p, \beta})$. We do this by checking the conditions of Theorem~\ref{thm:smooth_clt}.

\paragraph{Checking Theorem~\ref{thm:smooth_clt}, Condition~\ref{cond:neyman}}

We have already shown that $D_g\E[m_p^\beta(Z, Z'; g_\beta)](\wh{g} - g_{p, \beta}) = 0$ in Proposition~\ref{prop:ortho_irregular}. The orthogonality of $\E[h_p^\beta(Z, Z'; g_{p, \beta})]$ consequently follows from linearity.


\paragraph{Checking Theorem~\ref{thm:smooth_clt}, Condition~\ref{cond:hessian}}

As in the proof of Theorem~\ref{thm:regular} above, since $h_p^\beta(Z, Z'; g)$ is given by $h_p^\beta(Z, Z'; g) = \frac{1}{2}\left\{m_p^\beta(Z, Z'; g) + m_p^\beta(Z', Z; g)\right\}$, it suffices to just bound $D_g\E[m^\beta_p(Z, Z'; \wb{g})](\wh{g} - g_\beta)$ for $\wb{g} \in [\wh{g}, g_\beta]$. We have
\begin{align*}
D_g^2\E_{Z, Z'}[m^\beta_p(Z, Z'; \wb{g})](\wh{g} - g_\beta) &= D_\phi^2\E[\varphi_{p, \beta}(\wb{\phi}(X))](\wh{\phi} - \phi_0) \\
&\qquad + D_{\gamma, \mu}\E_{X, X'}[\wb{\gamma}(X)\psi_a(X, X'; \wb{\mu})](\wh{\gamma} - \gamma_{p, \beta}, \wh{\mu} - \mu_0) \\
&\qquad- D_{\gamma, \phi}\E_{X}[\wb{\gamma}(X)\wb{\phi}(X)](\wh{\gamma} - \gamma_{p, \beta}, \wh{\phi} - \phi_0) \\
&\qquad - D_{\alpha, \mu}\E_{X'}[\wb{\alpha}(X')\wb{\mu}(X')](\wh{\alpha} - \alpha_{p, \beta}, \wh{\mu} - \mu_0)
\end{align*}
We note that the final three terms (the cross-derivatives) are all $o_\P(n^{-1/2})$ by exactly the same argument used in the proof of Theorem~\ref{thm:regular}. What remains is to show that $D_\phi^2\E_X\left[\varphi_{p, \beta}\big(\wb{\phi}(X)\big)\right](\wh{\Delta}_\phi) = o_\P(n^{-1/2})$, where we have defined $\wh{\Delta}_\phi := \wh{\phi} - \phi_a$ for ease of notation. We have:
\begin{align*}
&D_\phi^2 \E_X\left[\varphi_{p, \beta}\big(\wb{\phi}(X)\big)\right](\wh{\Delta}_\phi) = \frac{\partial^2}{\partial t^2}\E_X\left[\varphi_{p,\beta}\big((\wb{\phi} + t\wh{\Delta}_\phi)(X)\big)\right]\Big|_{t  = 0} &(\text{Def.\ of Gateaux Derivative})\\
&\; = \lim_{\substack{t \rightarrow 0 \\ |t| \leq 1}}\E_X\left[\frac{\varphi_{p, \beta}\big((\wb{\phi} + t\wh{\Delta}_\phi)(X)\big) - 2\varphi_{p, \beta}\big(\wb{\phi}(X)\big) + \varphi_{p, \beta}\big((\wb{\phi} - t\wh{\Delta}_\phi)(X)\big)}{t^2}\right] &(\text{Definition of Second Partial}) \\
&\; = \lim_{\substack{t \rightarrow 0 \\ |t| \leq 1}}\E_X\left[\wh{\Delta}_\phi(X)^2 \varphi_{p, \beta}''\big((\wb{\phi} + \epsilon_{t, X}\wh{\Delta}_{\phi})(X)\big)\right] &(\text{Mean Value Theorem})\\
&\; \leq \sup_{u \in \R}|\varphi_{p, \beta}''(u)|\E_X\left[\left(\wh{\phi}(X) - \phi_a(X)\right)^2\right] &(L^\infty/L^1\text{ Holder's Inequality}) \\
&\; \lesssim \beta \E_X\left[\left(\wh{\phi}(X) - \phi_a(X)\right)^2\right] &(\text{Lemma~\ref{lem:smooth}}) \\
&\; = \beta \|\wh{\phi} - \phi_a\|_{L^2(P_X)}^2  \\
&\; = o_\P(n^{-1/2}) &(\text{Nuisance  Rates}).
\end{align*}
Thus, we see that $D_g\E_{Z, Z'}\left[h^\beta_p(Z, Z'; \wb{g})\right](\wh{g} - g_{p, \beta}) = o_\P(n^{-1/2})$, so we have completed checking this condition.

\paragraph{Checking Theorem~\ref{thm:smooth_clt}, Condition~\ref{cond:score}}
Again by the symmetry of $h_p^\beta(Z, Z'; g)$, it suffices to show that $m_p^\beta(Z, Z'; g_\beta) \xrightarrow[n \rightarrow \infty]{\P} m_p^\ast(Z, Z'; g_0)$. Since
\[
m^\beta_p(Z, Z'; g_{p,\beta}) = \varphi_{p, \beta}(\phi_a(X)) + \gamma_{p, \beta}(X)\left\{\psi_a(X, X'; \mu_0) - \phi_a(X)\right\} + \alpha_{p, \beta}(X')\left\{Y' - \mu_0(X')\right\},
\]
and 
\[
m^\ast(Z, Z'; g_p) = |\phi_a(X)|^p + \gamma_{p}(X)\left\{\psi_a(X, X'; \mu_0) - \phi_a(X)\right\} + \alpha_p(X')\left\{Y' - \mu_0(X')\right\}
\]
if suffices to show that each of the three terms on the right hand side of the first expression converge in probability to their respective counterparts on the right hand side of the second expression.

First, note that we have $\varphi_{p, \beta_n}\big(\phi_a(X)\big) \xrightarrow[n \rightarrow \infty]{a.s.} |\phi_a(X)|$, as on the event $\{\phi_a(X) \neq 0\}$ we have
\begin{align*}
\lim_{n \rightarrow \infty}\varphi_{p, \beta_n}\big(\phi_a(X)\big) &= \lim_{n \rightarrow \infty}|\phi_a(X)|^p\tanh(\beta_n|\phi_0(X)|^{2 - p}) \\
&= |\phi_a(X)|^p.
\end{align*}
Likewise, on the event $\{\phi_a(X) = 0\}$ we trivially have $\varphi_{p, \beta_n}\big(\phi_a(X)\big) = |\phi_a(X)|^p$ for all $n \geq 1$. Next, note that on $\{\phi_a(X) \neq 0\}$ we have 
\begin{align*}
    \lim_{n \rightarrow \infty}\gamma_{p, \beta}(X) &= \lim_{n \rightarrow \infty}\varphi_{p, \beta}'(\phi_a(X)) \\
    &=\lim_{n \rightarrow \infty}\sgn(\phi_a(X))(2 - p)\beta|\phi_a(X)|\sech(\beta|\phi_a(X)|^{2 - p}) \\
    &\qquad + \lim_{n \rightarrow \infty}\sgn(\phi_a(X))p|\phi_a(X)|^{p - 1}\tanh(\beta|\phi_a(X)|^{2- p}) \\
    &= 0 + \lim_{n \rightarrow \infty}\sgn(\phi_a(X))p|\phi_a(X)|^{p - 1}\tanh(\beta|\phi_a(X)|^{2- p}) \\
    &= p\sgn(\phi_a(X))|\phi_a(X)|^{p - 1}
\end{align*}
and on $\{\phi_a(X) = 0\}$ we have $\varphi'_{p, \beta}(\phi_a(X)) = 0 = p\sgn(\phi_a(X))|\phi_a(X)|^{p - 1}$ (under the convention that $\sgn(0) = 0$, which is relevant when $p = 1$). Thus, we have 
\[
\gamma_{p, \beta_n}(X)\{\psi_a(X, X'; \mu_0) - \phi_a(X)\} \xrightarrow[n \rightarrow \infty]{a.s.} \gamma_p(X)\{\psi_a(X, X'; \mu_0) - \phi_a(X)\}.
\]
What remains is to argue that $\alpha_{p, \beta_n}(X')\{Y' - \mu_0(X')\} \xrightarrow[n \rightarrow \infty]{\P} \alpha_p(X')\{Y' - \mu_0(X')\}$. To do this, it clearly suffices to argue that $\alpha_{p, \beta_n}\big(X') \xrightarrow[n \rightarrow \infty]{\P} \alpha_p(X')$. Recall for any $1 \leq p < 2$ and $\beta > 0$, we have the identities
\begin{align*}
\alpha_{p, \beta}(X') &= \E_S\left[\gamma_{p, \beta}^{S \cup a}(X_{S \cup a}')\omega_{S \cup a}(X') - \gamma_{p, \beta}^S(X_S')\omega_S(X')\right] \\
\alpha_p(X') &= \E_S\left[\gamma_{p}^{S \cup a}(X_{S \cup a}')\omega_{S \cup a}(X') - \gamma_{p}^S(X_S)\omega_S(X')\right],
\end{align*}
where for any $S \subset [d]$, $\gamma_{p, \beta}^S(X_S) := \E\left(\gamma_{p, \beta}(X) \mid X_S\right)$, $\gamma_p^S(X_S) := \E(\gamma_p(X) \mid X_S)$, and $\omega_S(X) = \frac{p_S(X_S)p_{-S}(X_{-S})}{p(X)}$. Thus, it suffices to argue that $\gamma_{p, \beta_n}^S(X_S)  = \gamma_p^S(X_S) + o_\P(1)$ for all $S \subset [d]$. We show the stronger result that $\gamma_{p, \beta_n}^S(X'_S) \xrightarrow[n \rightarrow \infty]{L^1} \gamma_p^S(X'_S)$.

Since $\gamma_{p, \beta_n}$ and $\gamma_p$ are both absolutely bounded (which follows from the assumption that $Y$ is bounded), we have via the bounded convergence theorem that
\begin{align*}
\lim_{n \rightarrow \infty}\E\left|\gamma_{p, \beta_n}^S(X'_S) - \gamma_p^S(X'_S)\right| &= \lim_{n \rightarrow \infty}\E\left|\E(\gamma_{p, \beta}(X) \mid X_S) - \E(\gamma_p(X) \mid X_S)\right| &(\text{Def. } \gamma_{p, \beta_n}^S, \gamma_p^S)\\
&= \lim_{n \rightarrow \infty}\E\left|\E(\gamma_{p, \beta}(X) - \gamma_p(X) \mid X_S))\right|  &(\text{Linearity of Expectation})\\
&\leq \lim_{n \rightarrow \infty}\E\left[\E\left(\left|\gamma_{p, \beta}(X) - \gamma_p(X)\right| \mid X_S\right)\right] &(\text{Jensen's Inequality})\\
&= \lim_{n \rightarrow \infty}\E\left|\gamma_{p, \beta}(X) - \gamma_p(X)\right| &(\text{Tower Rule})\\
&= 0 &(\text{Bounded Convergence}).
\end{align*}
This thus completes the proof.

\paragraph{Checking Theorem~\ref{thm:smooth_clt}, Condition~\ref{cond:equi}}

Checking this condition is largely analogous to checking this condition in the proof of Theorem~\ref{thm:regular}. 
Using the same argument as used in checking Condition~\ref{cond:equi} in the proof of Theorem~\ref{thm:regular}, we have the bound
\begin{align*}
&\E\left[\left(\G_{n, 2}h_p^\beta(Z, Z'; g) - \G_{n, 2}h_p^\beta(Z, Z'; g')\right)^2\right] \leq \left(4 + \frac{2}{n - 1}\right)\E\left[\left(h_p^\beta(Z, Z'; g) - h_p^\beta(Z, Z'; g')\right)^2\right] \\
&\qquad \lesssim \E\left[(m_p^\beta(Z, Z'; g) - m_p^\beta(Z, Z'; g'))^2\right] \\
&\qquad \lesssim \E\left[\left(\varphi_{p, \beta}\big(\phi(X)\big) - \varphi_{p, \beta}\big(\phi'(X)\big)\right)^2\right] \\
&\qquad \qquad + \E\left[\left(\gamma(X)(\psi_a(X, X'; \mu) - \phi(X)) - \gamma'(X)(\psi_a(X, X'; \mu') - \phi'(X))\right)^2\right] \\
&\qquad \qquad + \E\left[\left(\alpha(X')\left\{Y - \mu(X')\right\} - \alpha'(X')\left\{Y - \mu'(X')\right\}\right)^2\right]
\end{align*}
where the second line follows from the parallelogram inequality and the fact that $4 + \frac{2}{n - 1} \leq 6$ for $n \geq 2$, and the third line again follows from a repeated application of the parallelogram inequality.

Using the same argument as in the proof of Theorem~\ref{thm:regular}, it follows that
\begin{align*}
&\E\left[\left(\gamma(X)(\psi_a(X, X'; \mu) - \phi(X)) - \gamma'(X)(\psi_a(X, X'; \mu') - \phi'(X))\right)^2\right] \\
&\qquad \lesssim \|\gamma - \gamma'\|_{L^2(P_X)}^2 + \|\phi - \phi'\|_{L^2(P_X)}^2 + \|\mu - \mu'\|_{L^2(P_X)}
\end{align*}
and 
\[
\E\left[\left(\alpha(X')\left\{Y - \mu(X')\right\} - \alpha'(X')\left\{Y - \mu'(X')\right\}\right)^2\right] \lesssim \|\alpha - \alpha'\|_{L^2(P_X)}^2  + \|\mu - \mu'\|_{L^2(P_X)}^2.
\]
Similarly, applying a first order Taylor expansion with mean value theorem remainder as before, we have that:
\begin{align*}
\E\left[\left(\varphi_{p, \beta}\big(\phi(X)\big) - \varphi_{p, \beta}\big(\phi'(X)\big)\right)^2\right]  &= \E\left[\left(\varphi_{p, \beta}'\big(\wb{\phi}(X)\big)(\phi(X) - \phi'(X))\right)^2\right] \\
&\leq \Big\|\varphi_{p, \beta}'\big(\wb{\phi}(X)\big)\Big\|_{L^\infty(P_X)}\|\phi - \phi'\|_{L^2(P_X)}^2 \\
&\lesssim \|\phi - \phi'\|_{L^2(P_X)}^2,
\end{align*}
where the final inequality follows as $\wb{\phi} \in [\phi, \phi']$ is almost surely bounded. Thus, we again get
\begin{align*}
&\E\left[\big(\G_{n, 2}h_p^\beta(Z, Z'; g) - \G_{n, 2}h_p^\beta(Z, Z'; g')\big)^2\right] \\
&\qquad \lesssim \underbrace{\|\phi - \phi'\|_{L^2(P_X)}^2 + \|\mu - \mu'\|_{L^2(P_X)}^2  + \|\alpha - \alpha'\|_{L^2(P_X)}^2 + \|\gamma  - \gamma'\|_{L^2(P_X)}^2 + \|\mu - \mu'\|_{L^2(P_X)}}_{=:\Delta(g, g')}.
\end{align*}
Setting $g = \wh{g}$ and $g' = g_{p, \beta}$ and applying Chebyshev's inequality conditionally as in Line~\ref{tag:chebyshev} yields that
\[
\left|\G_{n, 2}h_p^\beta(Z, Z'; \wh{g}) - \G_{n, 2}h_p^\beta(Z, Z'; g_{p, \beta})\right| = o_\P(1).
\]

\paragraph{Checking Theorem~\ref{thm:smooth_clt}, Condition~\ref{cond:reg}}
Showing $\left\|m_p^\beta(Z, Z'; g_{p, \beta}\right\|_{L^{2 + \epsilon}(P_Z)} < \infty$ where $\epsilon > 0$ is as in Assumption~\ref{ass:dgp} is exactly analogous to the argument made while Checking~\ref{cond:reg} in the proof of Theorem~\ref{thm:regular} above.

\paragraph{Computing the Influence Function}
Given that $m^\beta_p(Z, Z'; g_{p, \beta}) \xrightarrow[\beta \rightarrow \infty]{\P} m_p(Z, Z'; g_p)$, where $m_p(Z, Z'; g_p)$ is as given in Section~\ref{sec:shap:regular}, the derivation of the influence function is exactly analogous to the derivation presented at the end of the proof of Theorem~\ref{thm:regular} a few paragraphs above. Thus, since there is nothing new to show here, we have completed the proof.
\end{proof}

%% file: appendix/smooth_clt.tex
\section{A Smooth CLT for U-Statistics}
\label{app:clt}
In this appendix, we present a central limit theorem for smoothed U-statistics.  We allow these U-statistics to depend on infinite-dimensional nuisances parameters that must be estimated from the data, but assume for any for any smoothing fixed smoothing parameter that they are Neyman orthogonal in the sense of Definition~\ref{def:neyman}. We show that, so long as these scores are sufficiently regular and converge in probability to some limiting score, then asymptotically it appears as if the limiting score had been used to construct the U-statistic in the first place. 

More concretely, we let $Z \in \calZ$ denote a generic observation, where $\calZ$ is some measurable space. We assume $Z$ has distribution $P_Z$. For any $\beta > 0$, there is some score $m^\beta(Z_1, \dots, Z_r; g)$ that depends on $r$ samples and a nuisance component $g \in L^2(P_W)$, where $W \subset Z$ takes values in a measurable space $\calW \subset \calZ$. We denote the distribution of $W$ as $P_W$. Our goal is to perform inference on the quantity $\theta_\beta := \E\left[m^\beta(Z_1, \dots, Z_r; g_\beta)\right]$ where $g_\beta \in L^2(P_W)$ denotes some true nuisance associated with $m^\beta$ and $Z_1, \dots, Z_r$ are assumed to be i.i.d.\ from $P_Z$. More specifically, we want to show that if $m^\beta(Z_1,\dots, Z_r; g^\beta) \xrightarrow[\beta \rightarrow \infty]{\P} m^\ast(Z_1, \dots, Z_r; g_0)$ for some limiting score $m^\ast$ and limiting nuisance $g_0 \in L^2(P_W)$, then we $\sqrt{n}(\wh{\theta}_n - \theta_\beta) \Rightarrow \calN(0, \Sigma^\ast)$, where $\Sigma^\ast > 0$ is some appropriate asymptotic variance dependent on $m^\ast$ and $\wh{\theta}_n$ is the plug-in U-statistic. The following theorem, which combines ideas from the asymptotic study of U-statistics~\citep{van2000asymptotic} with similar results on smoothed M-estimation~\citep{whitehouse2025inference, chen2023inference}, accomplishes a stronger result, proving asymptotic linearity instead.
\begin{theorem}
\label{thm:smooth_clt}
Suppose, for any $\beta > 0$, there is a score $m^\beta(z_1, \dots, z_r; g)$ depending on observations $z_1, \dots, z_r \in \calZ$ and function $g : \calW \rightarrow \R^k$ with $\calW \subset \calZ$. Further, suppose $m^\beta(z_1, \dots, z_r; g)$ is symmetric in $z_1, \dots, z_r$ for any $g$. Define $\theta_\beta := \E\left[m^\beta(Z_1, \dots, Z_r; g^{\beta})\right]$, where the expectation is over i.i.d.\ samples $Z_1, \dots, Z_r \sim P_Z$ and $g_\beta \in L^2(P_W)$ is some true nuisance parameter. 

Let $Z_1, \dots, Z_n$ be i.i.d.\ samples from $P_Z$, $\beta_n = \omega(1)$ a smoothing parameter, and $\wh{g}_n$ be a nuisance estimate that is independent of $Z_1, \dots, Z_n$. Define the plug-in U-statistic
\[
\wh{\theta}_n := \binom{n}{r}^{-1}\sum_{i \in \calC_r} m^{\beta_n}(Z_{i_1}, \dots, Z_{i_r}; \wh{g}_n).
\]
Suppose the following hold:
\begin{enumerate}
    \item \textit{(Neyman Orthogonality)} For any $\beta > 0$, the score $m^\beta$ is Neyman orthogonal in the sense that
    \[
    D_g \E_{Z_1,\dots,Z_r}\left[m^\beta(Z_1, \dots, Z_r; g_\beta)\right](\wh{g} - g_\beta) = 0.
    \]
   \label{cond:neyman}
    \item \textit{(Second Order Gateaux Derivatives)} The second-order Gateaux derivatives exist and shrink sufficiently quickly, i.e.\ $\forall \wb{g} \in [g^{\beta_n}, \wh{g}_n]$
    \[
    D_g^2\E_{Z_1, \dots, Z_r}\left[m^\beta(Z_1, \dots, Z_r; \wb{g})\right](\wh{g}_n - g_{\beta_n}, \wh{g}_n - g_{\beta_n}) = o_\P(n^{-1/2}). 
    \]
    \label{cond:hessian}
    \item \textit{(Convergence of Score)} We have, as $\beta \rightarrow \infty$,
    \[
    m^{\beta}(Z_1, \dots, Z_r; g_\beta) \xrightarrow[]{\P} m^\ast(Z_1, \dots, Z_r; g_0)
    \]
    for some limiting score $m_\ast$ and some nuisance $g_0$.\label{cond:score}
    \item \textit{(Stochastic Equicontinuity)} We have 
    \[
    |\G_{n,r} m^{\beta_n}(Z_1, \dots,Z_r; \wh{g}_n) - \G_{n,r} m^{\beta_n}(Z_1,\dots, Z_r; g_{\beta_n})| = o_\P(1),
    \]
    where $\G_{n,r} := \sqrt{n}(\P_{n, r} - \E_{Z_1, \dots, Z_r})$, $\P_{n, r}f(Z_1, \dots, Z_r) := \binom{n}{r}^{-1}\sum_{i \in \calC_r}f(Z_{i_1}, \dots, Z_{i_r})$, and $\calC_r := \binom{[n]}{r}$ is the collection of all subsets of $[n]$ of size $r$.\label{cond:equi}
    \item \textit{(Regularity)} There exists $D > 0$ and $\epsilon > 0$ such that, for any $\beta > 0$, we have
    \[
    \left\|m^\beta(Z_1, \dots, Z_r; g_\beta)\right\|_{L^{2 + \epsilon}(P_Z^{\otimes r})} \leq D.
    \]\label{cond:reg}
    Further, we assume $\left\|m^\ast(Z_1, \dot, Z_r; g^\ast)\right\|_{L^{2 + \epsilon}(P_Z^{\otimes r})} \leq D$ as well.
\end{enumerate}
Then, letting $\theta_0 := \E\left[m^\ast(Z_1, \dots, Z_r; g_0)\right]$, we have the following asymptotic linearity result:
\[
\sqrt{n}(\wh{\theta}_n - \theta_{\beta_n}) = \frac{1}{\sqrt{n}}\sum_{i = 1}^n \rho_\theta(Z_i)  + o_\P(1),
\]
where $\rho_\theta(z) := r\left\{\E[m^\ast(Z_1, Z_2, \dots, Z_r; g_0) \mid Z_1 = z] - \theta_0\right\}$. In particular, this implies
\[
\sqrt{n}(\wh{\theta}_n - \theta_{\beta_n}) \Rightarrow \calN(0, \Sigma^\ast)
\]
whenever $\Sigma^\ast := r^2\Var[\rho_\theta(Z)]$ satisfies $\Sigma^\ast > 0$.
\end{theorem}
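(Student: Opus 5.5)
The plan is to split $\sqrt{n}(\wh{\theta}_n - \theta_{\beta_n})$ into three pieces:
\[
\sqrt{n}(\wh{\theta}_n - \theta_{\beta_n}) = \G_{n,r} m^{\beta_n}(\cdot; g_{\beta_n}) + \big\{\G_{n,r} m^{\beta_n}(\cdot; \wh{g}_n) - \G_{n,r} m^{\beta_n}(\cdot; g_{\beta_n})\big\} + \sqrt{n}\big\{M^{\beta_n}(\wh{g}_n) - M^{\beta_n}(g_{\beta_n})\big\},
\]
where $M^\beta(g) := \E_{Z_1,\dots,Z_r}[m^\beta(Z_1,\dots,Z_r; g)]$ is evaluated with $\wh{g}_n$ held fixed, which is legitimate because $\wh{g}_n$ is independent of the sample. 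The second piece is $o_\P(1)$ by Condition~\ref{cond:equi}.

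For the third piece, I will take a second-order Taylor expansion of $t \mapsto M^{\beta_n}(g_{\beta_n} + t(\wh{g}_n - g_{\beta_n}))$ on $[0,1]$, using the mean-value form of the remainder. The first-order term vanishes by Condition~\ref{cond:neyman}. The remainder equals $\tfrac12 D_g^2 M^{\beta_n}(\wb{g})(\wh{g}_n - g_{\beta_n}, \wh{g}_n - g_{\beta_n})$ for some $\wb{g} \in [g_{\beta_n}, \wh{g}_n]$, and by Condition~\ref{cond:hessian} this is $o_\P(n^{-1/2})$. Multiplying by $\sqrt{n}$ makes the third piece $o_\P(1)$.

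The remaining work is on the oracle term $\G_{n,r} m^{\beta_n}(\cdot; g_{\beta_n})$, which is a U-statistic whose kernel $k_n := m^{\beta_n}(\cdot; g_{\beta_n})$ changes with $n$. I will use the Hoeffding decomposition, $\G_{n,r} k_n = \frac{r}{\sqrt{n}}\sum_i \{k_{n,1}(Z_i) - \theta_{\beta_n}\} + R_n$, where $k_{n,1}(z) := \E[k_n(Z_1,\dots,Z_r) \mid Z_1 = z]$. The standard variance computation, the same one carried out for $r=2$ in the proof of Theorem~\ref{thm:regular}, gives $\E[R_n^2] \lesssim n^{-1}\E[k_n^2]$. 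This tends to zero because Condition~\ref{cond:reg} bounds $\E[k_n^2] \le D^2$ uniformly in $n$. Next I replace $k_{n,1}$ with the limit $k_1^\ast(z) := \E[m^\ast(Z_1,\dots,Z_r; g_0) \mid Z_1 = z]$. The difference $\frac{1}{\sqrt{n}}\sum_i \{(k_{n,1} - k_1^\ast)(Z_i) - (\theta_{\beta_n} - \theta_0)\}$ is a centered i.i.d.\ sum, so its second moment is at most $\E[(k_{n,1} - k_1^\ast)(Z)^2] \le \E[(k_n - m^\ast)^2]$ by conditional Jensen.

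The main obstacle is showing $\E[(k_n - m^\ast)^2] \to 0$, since Condition~\ref{cond:score} only gives convergence in probability. I will close this gap with uniform integrability. The family $\{(k_n - m^\ast)^2\}$ is bounded in $L^{1+\epsilon/2}$, because Condition~\ref{cond:reg} bounds both $k_n$ and $m^\ast$ in $L^{2+\epsilon}$. A bounded $L^{1+\epsilon/2}$ family is uniformly integrable, so convergence in probability together with Vitali's theorem (Proposition~\ref{prop:vitali}) upgrades to $L^1$ convergence of the square. Combining the three pieces gives asymptotic linearity with $\rho_\theta(z) = r\{k_1^\ast(z) - \theta_0\}$. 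The normal limit then follows from the classical Lindeberg CLT applied to this fixed, square-integrable influence function.
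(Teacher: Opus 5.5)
Your proposal is correct and follows essentially the same route as the paper. It uses the same three-term split, the same Taylor/orthogonality and equicontinuity steps, and the same upgrade of Condition~\ref{cond:score} to $L^2$ convergence via uniform integrability (from the $L^{2+\epsilon}$ bound) plus Vitali's theorem. The only real difference is cosmetic: you Hoeffding-project the varying kernel $k_n$ and then compare H\'ajek projections, whereas the paper subtracts $m^\ast$ at the level of the U-statistic, bounds $\E[(\G_{n,r}(k_n - m^\ast))^2]$ via the Hoeffding variance formula, and cites the classical fixed-kernel result for $\G_{n,r} m^\ast$. Note also that your argument, like the final line of the paper's proof, yields limiting variance $\Var[\rho_\theta(Z)]$; the extra factor $r^2$ in the statement's definition of $\Sigma^\ast$ double-counts the $r$ already built into $\rho_\theta$.
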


\begin{proof}[Proof of Theorem~\ref{thm:smooth_clt}]
First, we note that we have the following decomposition.
\begin{align*}
\sqrt{n}(\wh{\theta} - \theta_{\beta}) &= \sqrt{n}\left\{\P_{n, r} m^{\beta}(Z_1, \dots, Z_r; \wh{g}) - \E_{Z_1, \dots, Z_r} m^{\beta}(Z_1, \dots, Z_r; g_\beta)\right\} \\
&= \sqrt{n}\Big\{\P_{n, r} m^{\beta}(Z_{1}, \dots, Z_{r}; \wh{g}) \pm \P_{n, r} m^{\beta}(Z_{1},\dots, Z_{r}; g_{\beta}) \pm \E_{Z_1, \dots, Z_r} m^{\beta}(Z_1, \dots, Z_r; \wh{g}) \\
&\qquad\qquad  \pm \E_{Z_1, \dots, Z_r} m^{\beta}(Z_1, \dots, Z_r; g_{\beta}) - \E_{Z_1, \dots, Z_r} m^{\beta}(Z_1, \dots, Z_r; g^{\beta})\Big\} \\
&= \sqrt{n}\Big\{\P_{n ,r} m^{\beta}(Z_{1}, \dots, Z_{r}; \wh{g}) - \E_{Z_1, \dots, Z_r} m^{\beta}(Z_1, \dots, Z_r; \wh{g})  \\
&\qquad \qquad - \P_{n, r} m^{\beta}(Z_{1}, \dots, Z_{r}; g_{\beta}) + \E_{Z_1, \dots, Z_r} m^{\beta}(Z_1, \dots, Z_r; g_\beta)\Big\} \\
&\qquad\qquad + \sqrt{n}\Big\{\P_{n, r} m^{\beta}(Z_{1}, \dots, Z_{r}; g_{\beta}) - \E_{Z_1, \dots, Z_r} m^{\beta}(Z_1, \dots, Z_r; g_{\beta})\Big\} \\
&\qquad \qquad + \sqrt{n}\left\{\E_{Z_1, \dots, Z_r} m^{\beta}(Z_1, \dots, Z_r; \wh{g}) - \E_{Z_1, \dots, Z_r} m^{\beta}(Z_1, \dots, Z_r; g_{\beta})\right\} \\
&= \underbrace{\G_{n, r} m^{\beta}(Z_{1}, \dots, Z_{r}; \wh{g}) - \G_{n,r} m^{\beta}(Z_{1}, \dots, Z_{r}; g_{\beta})}_{T_1} + \underbrace{\G_{n,r} m^{\beta}(Z_{1}, \dots, Z_{r}; g_{\beta})}_{T_2} \\
&\qquad \qquad +\underbrace{\sqrt{n}\left(\E_Z m^{\beta}(Z_1, \dots, Z_r; \wh{g}) - \E_{Z_1, \dots, Z_r} m^{\beta}(Z_1, \dots, Z_r; g_{\beta})\right)}_{T_3}.
\end{align*}
We handle each of the three terms separately. Handling the first term follows by assumption, as we have by Condition~\ref{cond:equi}
\[
T_1 = \G_{n, r} m^{\beta}(Z_{1}, \dots, Z_{r}; \wh{g}) - \G_{n,r} m^{\beta}(Z_{1}, \dots, Z_{r}; g_{\beta}) = o_\P(1).
\]
Handling the second term follows from the assumptions regarding Neyman orthogonality and bounded second order errors. In particular, performing a second-order 
\begin{align*}
\frac{1}{\sqrt{n}}T_3 &= \E_{Z_1, \dots, Z_r} m^{\beta}(Z_1, \dots, Z_r; \wh{g}) - \E_{Z_1, \dots, Z_r} m^{\beta}(Z_1, \dots, Z_r; g_{\beta}) \\
&= D_{g}\E_{Z_1, \dots, Z_r}[m^{\beta}(Z_1, \dots, Z_r; g_{\beta})](\wh{g} - g_{\beta}) + \frac{1}{2}D^2_g\E_{Z_1,\dots, Z_r}[m^{\beta}(Z_1, \dots, Z_r; \wb{g})](\wh{g} - g_{\beta}, \wh{g} - g_\beta) \\
&= \frac{1}{2}D^2_g\E_{Z_1, \dots, Z_r}[m^{\beta}(Z_1, \dots, Z_r; \wb{g})](\wh{g} - g_{\beta}) \qquad \qquad \qquad (\text{Condition~\ref{cond:neyman}}) \\
&= o_\P(n^{-1/2}) \quad \qquad \qquad \qquad \qquad \qquad \qquad \qquad \qquad \qquad (\text{Condition~\ref{cond:hessian}}),
\end{align*}
so multiplying everything by $\sqrt{n}$ yields that $T_3 = o_\P(1)$.  The last thing to do is analyze $T_2$. 
We can rewrite $T_2$ as follows:
\[
T_2 = \G_{n,r} m^\ast(Z_{1}, \dots, Z_{r}; g_0) + \underbrace{\left\{\G_{n,r} m^{\beta}(Z_{1}, \dots, Z_{r}; g_{\beta}) - \G_{n,r} m^\ast(Z_{1}, \dots, Z_{r}; g_0)\right\}}_{T_2'}.
\]
First, note $\G_{n, r} m^\ast(Z_1, \dots, Z_r; g_0)$ is simply a U-statistic without nuisance, so Theorems 11.2 and 12.3 from \citet{van2000asymptotic} imply that
\begin{align*}
\G_{n,r} m^\ast(Z_1, \dots, Z_r; g_0) &= \frac{1}{\sqrt{n}}\sum_{i = 1}^n \rho_\theta(Z_i) + o_\P(1),
\end{align*}
where again $\rho_\theta(z) := r\left\{\E\left[m^\ast(Z_1, \dots, Z_r; g_0) \mid Z_1 = z\right]\right\}$. Thus, to complete the proof, it suffices to argue that $T_2' = o_\P(1)$. To do this, we start by bounding the expected square of $T_2$, which will be used down the line in an application of Chebyshev's inequality. We have
\begin{align*}
&\E_{Z_1, \dots, Z_r}\left[\left(\G_{n, r}\left\{ m^\beta(Z_1, \dots, Z_r; g_\beta) - m^\ast(Z_{1}, \dots, Z_{r}; g_0)\right\}\right)^2\right] \\
&\qquad = n\binom{n}{r}^{-2}\E\left[\left(\sum_{i \in \calC_r}\left\{m^\beta(Z_{i_1}, \dots, Z_{i_r}; g_\beta) - \theta_\beta  - m^\ast(Z_{i_1}, \dots, Z_{i_r}; g_0) + \theta_0\right\}\right)^2\right] \\
&\qquad = n\binom{n}{r}^{-2}\E\Bigg[\sum_{i, i' \in \calC_r}\left\{m^\beta(Z_{i_1}, \dots, Z_{i_r}; g_\beta) - \theta_\beta - m^\ast(Z_{i_1}, \dots, Z_{i_r}; g_0) + \theta_0\right\} \\
& \qquad \qquad \times \left\{m^\beta(Z_{i_1'}, \dots, Z_{i_r'}; g_\beta) - \theta_\beta - m^\ast(Z_{i_1'}, \dots, Z_{i_r'}; g_0) + \theta_0\right\}\Bigg] \\
&\qquad = \underbrace{n\binom{n}{r}^{-2}\sum_{c = 0}^r\binom{n}{r}\binom{r}{c}\binom{n - r}{n - c}\xi^\beta_c}_{:=(\star)}
\end{align*}
where the final inequality follows from an analogous argument to the one used in the proof of Theorem 12.3 in \citet{van2000asymptotic}, where, letting $Z_1', \dots, Z_r'$ be a set of $r$ i.i.d.\ samples from $P_Z$ that are independent from $Z_1, \dots, Z_r$, we have defined
\begin{align*}
\xi^\beta_c &:= \Cov\Big(m^\beta(Z_1, \dots, Z_r; g_\beta) -m^\ast(Z_1, \dots, Z_r; g_0), m^\beta(Z_1, \dots, Z_c, Z_{c + 1}', \dots, Z_r'; g_\beta)  \\
&\qquad - m^\ast(Z_1, \dots, Z_c, Z_{c + 1}', \dots, Z_r'; g_0)\Big).
\end{align*}
Note that $\xi^\beta_0 = 0$ (since $Z_1, \dots, Z_r$ and $Z_1', \dots, Z_r'$ are independent) and, for any $c \in \{1, \dots, r\}$, we have the bound
\begin{align*}
\xi^\beta_c &\leq \Var\Big(m^\beta(Z_1, \dots, Z_r; g_\beta) - m^\ast(Z_1, \dots, Z_r; g_0)\Big) &(\text{Cauchy-Schwarz)}\\
&\leq \E\left[\left(m^\beta(Z_1, \dots, Z_r; g_\beta) - m^\ast(Z_1, \dots, Z_r; g_0)\right)^2\right] &(\Var[X] \leq \E X^2)\\
&\leq 2\E\left[m^\beta(Z_1, \dots, Z_r; g_\beta)^2\right] + 2\E\left[m^\ast(Z_1, \dots, Z_r; g_0)^2\right] &(\text{Parallelogram Inequality})\\
&\leq 2\left\|m^\beta(Z_1, \dots, Z_r; g_\beta)\right\|_{L^{2 + \epsilon}(P_Z^{\otimes r})}^{\frac{2}{2 + \epsilon}} + 2\left\|m^\ast(Z_1, \dots, Z_r; g_0)\right\|_{L^{2 + \epsilon}(P_Z^{\otimes r})}^{\frac{2}{2 + \epsilon}} \\
&\leq 2D^{\frac{2}{2 + \epsilon}} &(\text{Condition~\ref{cond:reg}}),
\end{align*}
where the second to last inequality follows from the monotonicty of $L^p$ norms. Thus, the $\xi_c^\beta$ are bounded above by a constant independent of $\beta$. We consequently have
\begin{align*}
(\star) &= n\binom{n}{r}^{-2}\sum_{c = 0}^r\binom{n}{r}\binom{r}{c}\binom{n - r}{n - c}\xi^\beta_c \\
&=n\sum_{c = 1}^r\frac{r!^2}{c!(r - c)!^2}\frac{(n - r)_{r - c}}{(n)_{r}}\xi^\beta_c \\
&\leq n \left(r^2 \frac{(n - r)_{r - 1}}{(n)_r}\xi^\beta_1 + 2\sum_{c = 2}^r\frac{r!^2}{c!(r - c)!^2}\frac{c^c}{c!}D^2\right) &(\text{Lemma~\ref{lem:falling-factorial}, stated below})\\
&\leq C_r\left(\xi_1^\beta + \sum_{c = 2}^r \frac{1}{n^{c - 1}}\right)  &(\text{Since } (n - r)_{r - 1}/(n)_r \leq 1)\\
&= C_r \xi_1^\beta + o(1) \\
&\leq C_r \Var\left(m^\beta(Z_1, \dots, Z_r; g_\beta) - m^\ast(Z_1, \dots, Z_r; g_0)\right) + o(1) &(\text{Again, Cauchy-Schwarz})\\
&\leq C_r\E\left[\left(m^\beta(Z_1, \dots, Z_r; g_\beta) - m^\ast(Z_1, \dots, Z_r; g_0)\right)^2\right] + o(1) &(\text{Again, } \Var[X] \leq \E X^2).
\end{align*}
In the above, $C_r > 0$ is some absolute constant that does not depend on sample size $n \geq r$. 

To make the above bound useful when coupled with Chebyshev's inequality, we need to show now that $\lim_{n \rightarrow \infty}\xi^\beta_1 = 0$. Since we already known $m^\beta(Z_1, \dots, Z_r; g_\beta) - m^\ast(Z_1, \dots, Z_r; g_0) = o_\P(1)$ (Condition~\ref{cond:score}), by Vitali's Theorem (Proposition~\ref{prop:vitali}), it suffices to show that the collection of random variables $(V_\beta)_{n \geq 1}$ are uniformly integrable, where $V_\beta := \left(m^\beta(Z_1, \dots, Z_r; g_\beta) - m^\ast(Z_1, \dots, Z_r; g_0)\right)^2$. To show uniform integrability of this collection, by Proposition~\ref{prop:pous}, it suffices to show that $\sup_{n \geq 1}\E[\Phi(V_\beta)] < \infty$ for $\Phi(t) := t^{1 + \frac{\epsilon}{2}}$. We have
\begin{align*}
\sup_{n \geq 1}\E[\Phi(V_\beta)] &= \sup_{n \geq 1}\E\left[\left|m^\beta(Z_1, \dots, Z_r; g_\beta) - m^\ast(Z_1, \dots, Z_r; g_0)\right|^{2 + \epsilon}\right] \\
&\leq 2^{2 + \epsilon}\left\{\E\left|m^\beta(Z_1, \dots, Z_r; g_\beta)\right|^{2 + \epsilon} + \E\left|m^\ast(Z_1, \dots, Z_r; g_0)\right|^{2 + \epsilon}\right\} \\
&\leq 2^{2 + \epsilon} D.
\end{align*}
Thus, the collection $(V_\beta)_{n \geq r}$ is uniformly integrable, and hence 
\[
\E V_\beta = \E\left[\left(m^\beta(Z_1, \dots, Z_r; g_\beta) - m^\ast(Z_1, \dots, Z_r; g_0)\right)^2\right] = o(1).
\]
We now complete the proof by showing $T_2' = o_\P(1)$ using Chebyshev's inequality. In particular, for any $\delta > 0$, we have that
\begin{align*}
\P\left(T_2' \geq \delta\right) &\leq \frac{1}{\delta^2}\E_{Z_1, \dots, Z_r}\left[\left\{\G_{n,r} m^\beta(Z_{1}, \dots, Z_{r}; g_\beta) - m^\ast(Z_{1}, \dots, Z_{r}; g_0)\right\}^2\right] \\
&\leq \frac{1}{\delta^2}\left\{C_r\E\left[\left(m^\beta(Z_1, \dots, Z_r; g_\beta) - m^\ast(Z_1, \dots, Z_r; g_0)\right)^2\right] + o(1)\right\} \\
&= o(1).
\end{align*}
This thus completes the proof of asymptotic linearity. The claim of asymptotic normality now immediately follows under the assumption that $\Sigma^\ast := \Var[\rho_\theta(Z)] >0$.

\end{proof}

We conclude by stating several lemmas/propositions that were needed in the above proof.

\begin{lemma}
\label{lem:falling-factorial}
For any $n \geq 1$ and $r < n$, define the falling factorial $(n)_r := n(n - 1)\cdots(n - r)$, with the convention that $(n)_0 = 1$. For any $c \in \{1, \dots, r\}$, we have the bound
\[
\frac{(n - r)_{r - c}}{(n)_r} \leq \frac{c^c}{c!}\frac{1}{n^c} \leq \frac{r^r}{r!}\frac{1}{n^c}.
\]
\end{lemma}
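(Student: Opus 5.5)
The plan is to bound the ratio directly, factor by factor. I will read the falling factorial in the standard way, $(n)_k = n(n-1)\cdots(n-k+1)$ with $k$ factors, which is the convention used when the lemma is applied in the proof of Theorem~\ref{thm:smooth_clt}. The displayed definition has one extra factor, $(n-r)$; under that reading the ratio only gets smaller, so the bound still holds. I will also assume $n\ge 2r$ so that every factor is positive (if $(n-r)_{r-c}$ vanishes the bound is trivial). The argument has three steps: cancel all numerator factors against denominator factors, bound what remains of the denominator from below, and compare the two constants.

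\textbf{Cancelling the numerator.} The numerator $(n-r)_{r-c}$ has the $r-c$ factors $n-r-j$ for $j=0,\dots,r-c-1$. The denominator $(n)_r$ has the factors $n-i$ for $i=0,\dots,r-1$.
\begin{itemize}
\item I pair the numerator factor $n-r-j$ with the denominator factor $n-c-j$. Since $c\le r$, each such ratio $(n-r-j)/(n-c-j)$ is at most $1$.
\item The indices $c+j$ range over $c,\dots,r-1$, so these denominator factors are distinct.
\item The unpaired denominator factors are exactly $n,n-1,\dots,n-c+1$, i.e.\ $(n)_c$.
\end{itemize}
This gives
\[
\frac{(n-r)_{r-c}}{(n)_r}\le \frac{1}{(n)_c}.
\]

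\textbf{Lower-bounding $(n)_c$.} Writing $\frac{n^c}{(n)_c}=\prod_{i=0}^{c-1}\frac{n}{n-i}$, it suffices to show $n-i\ge \frac{n}{c}(c-i)$ for $0\le i<c$. This rearranges to $i(n-c)\ge 0$, which holds since $n\ge c$. Hence
\[
\frac{n^c}{(n)_c}\le\prod_{i=0}^{c-1}\frac{c}{c-i}=\frac{c^c}{c!},
\]
which is the first inequality of the lemma.

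\textbf{Comparing constants.} For the second inequality, I will show $a_c := c^c/c!$ is nondecreasing in $c$. Indeed,
\[
\frac{a_{c+1}}{a_c}=\frac{(c+1)^{c}}{c^{c}}=\Bigl(1+\tfrac1c\Bigr)^c\ge 1,
\]
so $a_c\le a_r$ for $c\le r$.

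The only delicate step is the pairing. The naive pairing (numerator factor $n-r-j$ against denominator factor $n-j$) leaves the $c$ smallest denominator factors. That only yields $1/(n-r+1)^c$, which is too weak for $c=1$. Pairing with offset $c$ instead leaves the $c$ largest factors, and that is what makes the constant $c^c/c!$ come out.
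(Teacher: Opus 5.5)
Your proof is correct and follows the paper's argument step for step. It uses the same split $(n)_r=(n)_c\,(n-c)_{r-c}$ with the numerator factor $n-r-j$ paired against $n-c-j$, the same factorwise bound $n/(n-j)\le c/(c-j)$, and the same monotonicity of $k^k/k!$.

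One small fix: drop the assumption $n\ge 2r$. The numerator factors start at $n-r\ge 1$ and decrease by one, so either one of them is zero or all are positive, and your argument applies verbatim in the second case. As written, your case split misses the range $2r-c\le n<2r$.
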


\begin{proof}
First, observe that we can write
\[
\frac{(n - r)_{r - c}}{(n)_r} = \frac{1}{(n)_c}\frac{(n - r)_{r - c}}{(n - c)_{r - c}} = \frac{1}{(n)_c}\prod_{j = 0}^{n - c -1}\frac{n - r - j}{n - c - j} \leq \frac{1}{(n)_c}
\]
Next, note that we have
\[
\frac{1}{(n)_c} = \prod_{j = 0}^{c - 1}\frac{1}{n - j} = \frac{1}{n^c}\prod_{j = 0}^{c - 1}\frac{n}{n - j} \leq \frac{1}{n^c}\prod_{j = 0}^{c - 1}\frac{c}{c - j} = \frac{1}{n^c}\frac{c^c}{c!}.
\]
This proves the first claim. The second claim follows from the observation that the map $\frac{k^k}{k!}$ is strictly increasing in $k \in \N$.
\end{proof}

\begin{prop}[Vitali's Theorem, see \citet{bogachev2007measure}]
\label{prop:vitali}
Suppose $(X_n)_{n \geq 1}$ are a sequence of random variables and $X$ is a random variable in a measure space $(\Omega, \calF, P)$ such $X_n \in L^p(P)$ for each $n$ and $X \in L^p(P)$. The following are equivalent.
\begin{enumerate}
    \item $\|X_n - X\|_{L^p(P)} \xrightarrow[n \rightarrow \infty]{} 0$, i.e.\ $(X_n)_{n \geq 0}$ converge in $L^p$ to $X$.
    \item $X_n \xrightarrow[n \rightarrow \infty]{\P} X$ and $(|X_n|^p)_{n \geq 1}$ is uniformly integrable.\footnote{We say a collection of random variables $(X_\alpha)_{\alpha \in A}$ is uniformly integrable if, for any $\delta > 0$, there exists a constant $K_\delta > 0$ such that
    \[
    \sup_{\alpha \in A}\E\left[|X_\alpha|\mathbbm{1}\left\{|X_\alpha| > K_\delta\right\}\right] \leq \delta.
    \]}
\end{enumerate}
\end{prop}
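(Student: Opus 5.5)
The statement is Vitali's theorem, and I would prove it by verifying the two implications separately. Throughout I use the reformulation that a family $(W_n)$ is uniformly integrable if and only if $\sup_n \E|W_n| < \infty$ and it is uniformly absolutely continuous. The latter means: for every $\eta>0$ there is $\kappa>0$ with $P(A)<\kappa \Rightarrow \sup_n \E[|W_n|\1_A] < \eta$. On a probability space this equivalence is standard and follows from the splitting $\E[|W|\1_A] \le K P(A) + \E[|W|\1\{|W|>K\}]$. The only other tool is the elementary inequality $|a+b|^p \le 2^{p-1}(|a|^p+|b|^p)$ for $p \ge 1$.

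\textbf{(1) $\Rightarrow$ (2).} Convergence in probability follows from Markov's inequality, since $P(|X_n - X|>t) \le t^{-p}\|X_n-X\|_{L^p}^p \to 0$.

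For uniform integrability, write $|X_n|^p \le 2^{p-1}(|X_n - X|^p + |X|^p)$. The family $\{|X|^p\}$ is a single integrable variable, so it is uniformly absolutely continuous. The family $|X_n - X|^p$ has $\E|X_n-X|^p \to 0$. Given $\eta$, pick $N$ with $\E|X_n-X|^p<\eta$ for all $n>N$; the finitely many $n\le N$ are each individually uniformly absolutely continuous. Taking the minimum of the resulting $\kappa$'s gives uniform absolute continuity for the whole family. Boundedness of $\sup_n \E|X_n|^p$ follows from the same inequality, so $(|X_n|^p)$ is uniformly integrable.

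\textbf{(2) $\Rightarrow$ (1).} Fix $\eta>0$. By the equivalence above, $(|X_n|^p)$ is uniformly absolutely continuous, and so is $|X|^p$ since $X \in L^p$. Hence there is $\kappa$ with $\E[(|X_n|^p + |X|^p)\1_A] < \eta$ for all $n$ whenever $P(A)<\kappa$. Now split on the event $A_n := \{|X_n - X| > \eta^{1/p}\}$:
\[
\E|X_n - X|^p \le \eta + \E\left[|X_n-X|^p \1_{A_n}\right] \le \eta + 2^{p-1}\E\left[(|X_n|^p + |X|^p)\1_{A_n}\right].
\]
Convergence in probability gives $P(A_n) < \kappa$ for large $n$, so the right side is at most $\eta + 2^{p-1}\eta$. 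Since $\eta$ was arbitrary, $\|X_n - X\|_{L^p} \to 0$.

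\textbf{Main obstacle.} The argument is routine. The only step needing care is the equivalence between uniform integrability in the footnote's sense and uniform absolute continuity plus $L^1$-boundedness. The direction used in (2) $\Rightarrow$ (1) is immediate from $\E[|W|\1_A] \le K P(A) + \E[|W|\1\{|W|>K\}]$, choosing $K$ first and then $\kappa = \eta/(2K)$. For the converse direction used in (1) $\Rightarrow$ (2), Markov's inequality gives $P(|W_n|>K) \le \sup_m \E|W_m|/K$, which becomes smaller than $\kappa$ for large $K$; uniform absolute continuity then bounds $\sup_n \E[|W_n|\1\{|W_n|>K\}]$ by $\eta$.
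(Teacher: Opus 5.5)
Your argument is correct and complete. The paper gives no proof of this proposition; it only cites \citet{bogachev2007measure}. What you wrote is essentially the standard textbook argument found there: reduce uniform integrability to uniform absolute continuity of the integrals (plus $L^1$-boundedness), prove (2)$\Rightarrow$(1) by splitting on $\{|X_n - X| > \eta^{1/p}\}$, and prove (1)$\Rightarrow$(2) via Markov's inequality and the finitely-many-indices trick.

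Your proof relies on two hypotheses that the statement leaves implicit, and you should make them explicit.
\begin{enumerate}
    \item You need $p \ge 1$ for the inequality $|a+b|^p \le 2^{p-1}(|a|^p + |b|^p)$. For $0 < p < 1$ you would use $|a+b|^p \le |a|^p + |b|^p$ instead.
    \item You need $P$ to be a probability (or at least finite) measure, which enters in the bound $\E[|X_n - X|^p \1_{A_n^c}] \le \eta$. Despite the phrase ``measure space'' in the statement, (2)$\Rightarrow$(1) fails for infinite measures. For example, take $X_n = n^{-1}\1_{[0,n]}$ under Lebesgue measure with $p=1$: it converges to $0$ in measure and is uniformly bounded, yet $\|X_n\|_{L^1} = 1$.
\end{enumerate}
So your reading of the statement is the right one, and both hypotheses hold wherever the paper invokes the result.
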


\begin{prop}[\citet{poussin1915integrale,dellacherie2011probabilities}]
\label{prop:pous}
A sequence of integrable random variables $(X_n)_{n \geq 1}$ is uniformly integrable if and only if there is an increasing convex function $\Phi : \R_{\geq 0} \rightarrow \R_{\geq 0}$ such that
\[
\lim_{t \rightarrow \infty}\frac{\Phi(t)}{t} = \infty \quad \text{and} \quad \sup_n \E\left[\Phi(|X_n|)\right] < \infty.
\]

\end{prop}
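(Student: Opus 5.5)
The proof has two directions. The ``if'' direction is a direct Markov-type estimate. The ``only if'' direction builds $\Phi$ as the integral of a nondecreasing step function whose jumps sit at levels chosen from the uniform integrability condition. Throughout I use the paper's definition of uniform integrability from the footnote to Proposition~\ref{prop:vitali}.

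\emph{Sufficiency.} Suppose $\Phi$ is increasing and convex, $\Phi(t)/t \to \infty$, and $B := \sup_n \E[\Phi(|X_n|)] < \infty$. Fix $\delta > 0$ and set $M := B/\delta$. Pick $K_\delta$ such that $\Phi(t) \geq M t$ for all $t \geq K_\delta$. On the event $\{|X_n| > K_\delta\}$ we then have $|X_n| \leq \Phi(|X_n|)/M$. Hence
\[
\sup_n \E\left[|X_n|\1\{|X_n| > K_\delta\}\right] \leq \frac{1}{M}\sup_n \E[\Phi(|X_n|)] \leq \frac{B}{M} = \delta,
\]
which is uniform integrability. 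This direction uses only superlinear growth and nonnegativity of $\Phi$; convexity and monotonicity are not needed.

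\emph{Necessity.} Assume $(X_n)$ is uniformly integrable. Choose a strictly increasing sequence $0 < a_1 < a_2 < \cdots$ with $a_k \to \infty$ such that
\[
\sup_n \E\left[|X_n|\1\{|X_n| > a_k\}\right] \leq 2^{-k}.
\]
Such levels exist by applying the definition with $\delta = 2^{-k}$, then enlarging $K_{2^{-k}}$ to force monotonicity and divergence; enlarging $K$ only decreases the tail expectation, so the bound survives. Define $g(s) := \#\{k : a_k \leq s\}$ and $\Phi(t) := \int_0^t g(s)\,ds = \sum_k (t - a_k)_+$.

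Then $\Phi$ has the required properties:
\begin{enumerate}
\item It is nonnegative and nondecreasing, since $g \geq 0$.
\item It is convex, since $g$ is nondecreasing.
\item It is finite for every $t$, since only finitely many $a_k$ lie below any given $t$.
\item It is superlinear: for $t \geq 2a_m$, $\Phi(t)/t \geq \frac{1}{t}\int_{t/2}^t g \geq g(t/2)/2 \geq m/2$, so $\Phi(t)/t \to \infty$.
\end{enumerate}

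For the moment bound, use $(t - a_k)_+ \leq t\1\{t > a_k\}$ and Tonelli:
\[
\E[\Phi(|X_n|)] = \sum_k \E\left[(|X_n| - a_k)_+\right] \leq \sum_k \E\left[|X_n|\1\{|X_n| > a_k\}\right] \leq \sum_k 2^{-k} = 1,
\]
uniformly in $n$.

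The only step needing care is the choice of the levels $a_k$: they must be increasing, divergent, and give summable tails uniformly in $n$. The rest is routine. If a strictly increasing (rather than nondecreasing) $\Phi$ is wanted, replace $\Phi(t)$ by $\Phi(t) + t$; this preserves convexity and superlinearity and adds at most $\sup_n \E|X_n| < \infty$ to the bound. That supremum is finite for a uniformly integrable family, since $\E|X_n| \leq K_1 + 1$.
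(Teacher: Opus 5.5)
Your proof is correct. The paper gives no proof of this proposition; it is quoted from de la Vall\'ee-Poussin and Dellacherie--Meyer. Your argument is essentially the standard proof found there: a Markov-type bound for sufficiency, and for necessity $\Phi(t)=\sum_k (t-a_k)_+$, built from levels $a_k$ satisfying $\sup_n \E[|X_n|\1\{|X_n|>a_k\}]\le 2^{-k}$.

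The only nit is that $M := B/\delta$ breaks when $B=0$ (division by zero); taking $M := (B+1)/\delta$ fixes it.
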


%% file: appendix/learning.tex
\section{Proofs from Section~\ref{sec:learning}}\label{app:learning}

In this appendix, we prove our results from Section~\ref{sec:learning}. We start by showing the loss $L(\phi; \mu, \zeta)$ is Neyman orthogonal in the sense of Definition~\ref{def:neyman_ortho_loss}. Before presenting our proof, we state a change of measure lemma that will be useful throughout this section.

\begin{lemma}[Reweighting / change-of-measure]\label{lem:reweighting}
Fix $S\subset[d]$ and let $\zeta_0^S$ be as above.
For any integrable functions $f,g:\mathcal X\to\mathbb R$,
\begin{equation}\label{eq:reweighting}
\E \left[f(X)  g(X_S,X'_{-S})\right]
=
\E \left[f(X_S,X'_{-S})  \zeta_0^S(X,X')  g(X)\right].
\end{equation}
\end{lemma}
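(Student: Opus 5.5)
The identity in \eqref{eq:reweighting} is a change of measure between the joint law of the independent pair $(X,X')$ and the law of the ``swapped'' pair obtained by exchanging the $-S$ coordinates. Let $\tilde X := (X_S, X'_{-S})$ and $\tilde X' := (X'_S, X_{-S})$. Since $X, X'$ are i.i.d., the pair $(\tilde X, \tilde X')$ is again an independent pair, but $\tilde X$ and $\tilde X'$ are each distributed as $P_{X_S}\otimes P_{X_{-S}}$ rather than $P_X$. I will therefore write both sides as integrals against the dominating product measure $\nu\otimes\nu$ and show that $\zeta_0^S$ is exactly the Radon--Nikodym factor relating the two densities.

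\textbf{Step 1: write the left-hand side as an integral.} Write $x=(x_S,x_{-S})$ and $x'=(x'_S,x'_{-S})$. Then
\[
\E\left[f(X)g(X_S,X'_{-S})\right] = \int f(x_S,x_{-S})\, g(x_S,x'_{-S})\, p(x_S,x_{-S})\,p(x'_S,x'_{-S})\,\nu(dx)\,\nu(dx').
\]
Because the base measure is a product, the coordinates can be relabelled freely.

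\textbf{Step 2: relabel coordinates.} Swap the roles of $x_{-S}$ and $x'_{-S}$, setting $u=(x_S,x'_{-S})$ and $u'=(x'_S,x_{-S})$. This map is a measure-preserving bijection of $\nu\otimes\nu$. In the new variables the integrand becomes
\[
f(u_S,u'_{-S})\,g(u)\,p(u_S,u'_{-S})\,p(u'_S,u_{-S}).
\]

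\textbf{Step 3: reintroduce the joint density $p(u)p(u')$.} Multiply and divide by $p(u)p(u')$, which is positive a.e.\ under the mutual absolute continuity in Assumption~\ref{ass:dgp}. The integral becomes
\[
\E\left[f(X_S,X'_{-S})\,g(X)\,\frac{p(X_S,X'_{-S})\,p(X'_S,X_{-S})}{p(X)\,p(X')}\right].
\]

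\textbf{Step 4: match the weight to $\zeta_0^S$.} The weight just obtained is not literally $\zeta_0^S = \frac{p(X_S,X'_{-S})}{p(X)}\frac{p_{-S}(X_{-S})}{p_{-S}(X'_{-S})}$. Equality of expectations requires averaging out part of the weight.

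\emph{This is the main obstacle.} With $f$ a function of all of $X$, the factor $f(X_S,X'_{-S})$ already depends on $X'_{-S}$, so one cannot simply integrate $X'$ out of the weight. The plan is as follows.
\begin{itemize}
\item Condition on $(X_S, X_{-S}, X'_{-S})$ and integrate out only $X'_S$.
\item The factor $p(X'_S,X_{-S})/p(X')$, averaged over $X'_S$ given $X'_{-S}$, yields $\int p(x'_S,X_{-S})\,\nu(dx'_S)\,/\,p_{-S}(X'_{-S}) = p_{-S}(X_{-S})/p_{-S}(X'_{-S})$.
\item The remaining factors do not involve $X'_S$, so the combined weight reduces exactly to $\zeta_0^S(X,X')$, as claimed.
\end{itemize}

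This conditional integration needs $f(X_S,X'_{-S})g(X)$ to be free of $X'_S$, which holds. It also needs Fubini, which applies under the integrability assumptions, since $\zeta_0^S$ is bounded under (B4) wherever the lemma is used. The only delicate point is bookkeeping the marginal $p_{-S}$ correctly; a quick sanity check with $f\equiv 1$ (both sides then equal $\E[g(X_S,X'_{-S})]$) will confirm that the normalization is right.
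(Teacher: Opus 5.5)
Your argument is correct and is essentially the paper's proof. Both rest on the same two moves: relabeling the $-S$ coordinates of $x$ and $x'$ (legitimate because $\nu$ is a product measure), and integrating $x'_S$ out. The paper integrates $x'_S$ out of the right-hand side first and then relabels via Fubini; you relabel first and then recover $\zeta_0^S(X,X')$ as the conditional average over $X'_S$ of the full swap ratio $p(X_S,X'_{-S})\,p(X'_S,X_{-S})/\{p(X)\,p(X')\}$.

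One correction to your opening paragraph: $(\tilde X,\tilde X')$ is not an independent pair in general, because $\tilde X_S=X_S$ and $\tilde X'_{-S}=X_{-S}$ are dependent. That is exactly why the joint density you compute in Step 2 does not factor. Nothing in Steps 1--4 actually uses the independence claim, so the proof stands.
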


\begin{proof}
Let $\nu$ be the base measure noted before Assumption~\ref{ass:dgp} in Section~\ref{sec:back}. We can write the left-hand side as an integral (with $x=(x_S,x_{-S})$ and $u$ a value for $X'_{-S}$):
\begin{align*}
&\E \left[f(X)  g(X_S,X'_{-S})\right]
=\int\int f(x)g(x_s, x'_{-S})p(x)p(x')\nu(dx)\nu(dx') \\
&\qquad = \int f(x) \left(\int g(x_s, x_{-s}')p_{-S}
(x'_{-S})p_{S}(x'_S \mid x'_{-S})\nu(dx')\right) p(x)\nu(dx) \\
&\qquad = \int f(x) \left\{\int g(x_S, x'_{-S}) \left(\int p_S(x_S' \mid x'_{-S})\nu_S(dx'_{S})\right)p_{-S}(x'_{-S})\nu_{-S}(dx'_{-S}) \right\}p(x)\nu(dx) \\
&\qquad = \int f(x)\Big(\int g(x_S,x'_{-S})  p_{-S}(x'_{-S})  \nu_{-S}(dx'_{-S})\Big)p(x)  \nu(dx).
\end{align*}
For the right-hand side, expand using independence of $X$ and $X'$: 
\begin{align*}
&\E \left[f(X_S,X'_{-S})  \zeta_0^S(X,X')  g(X)\right] =
\int  \int f(x_S,x'_{-S})  
\frac{p(x_S,x'_{-S})}{p(x_S,x_{-S})}\frac{p_{-S}(x_{-S})}{p_{-S}(x'_{-S})}  
g(x)  p(x)  p_{-S}(x'_{-S})  \nu_{-S}(dx'_{-S})  \nu(dx)\\
&\qquad =
\int  \int f(x_S,x'_{-S})  p(x_S,x'_{-S})  g(x)  p_{-S}(x_{-S})  \nu_{-S}(dx'_{-S})  \nu(dx)\\
&\qquad =
\int \Big(\int f(x_S,x'_{-S})  p(x_S,x'_{-S})  \nu_{-S}(dx'_{-S})\Big)\Big(\int g(x_S,x_{-S})  p_{-S}(x_{-S})  \nu_{-S}(dx_{-S})\Big) \nu_S( dx_S),
\end{align*}
which equals the first expression after rearranging terms (Fubini/Tonelli).
\end{proof}


Now, again before providing our proofs, we quickly recall some notation. Throughout, write $Z:=(X,Y)$ and let $Z'$ be an independent copy.
For any subset $S\subset[d]$, define the sign
\[
\sigma(S) := 2\1\{a\in S\}-1 \in \{-1,+1\},
\]
and recall the SHAP weights (for subsets not containing $a$)
\[
w(S) := \frac{1}{d}\binom{d-1}{|S|}^{-1},\qquad S\subset[d]\setminus\{a\}.
\]
We also recall the density-ratio nuisance
\[
\zeta_0^S(X,X')
:= \frac{p(X_S, X'_{-S})}{p(X_S, X_{-S})}\cdot \frac{p_{-S}(X_{-S})}{p_{-S}(X'_{-S})},\qquad S\subset[d].
\]
Finally, recall the pointwise losses from Proposition~\ref{prop:ortho_loss}: for each $S\subset[d]$,
\[
\ell_S(Z,Z';\phi,\mu,\zeta^S)
:=
\Big(\tfrac{\phi(X)}{2}-\sigma(S)\mu(X_S,X'_{-S})\Big)^2
 - \sigma(S)  \phi(X_S,X'_{-S})  \zeta^S(X,X')\{Y-\mu(X)\},
\]
and the aggregated pointwise loss
\[
\ell(Z,Z';\phi,\mu,\zeta)
:=
\sum_{S\subset[d]\setminus\{a\}} w(S)\Big(\ell_{S\cup a}(Z,Z';\phi,\mu,\zeta^{S\cup a})
+\ell_S(Z,Z';\phi,\mu,\zeta^S)\Big),
\]
with corresponding population loss $L(\phi;\mu,\zeta):=\E[\ell(Z,Z';\phi,\mu,\zeta)]$.

\subsection{Assessing Orthogonality}

\begin{proof}[Proof of Proposition~\ref{prop:ortho_loss}]
First, we show consistency of the loss, i.e. that $\phi_a$ is the unique minimizer of the map $\phi \mapsto L(\phi; \mu_0, \zeta_0)$. Then, we prove we prove that the loss $L(\phi; \mu, \zeta)$ is Neyman orthogonal.

\paragraph{Consistency.}
Let $g_0:=(\mu_0,\zeta_0)$, where $\mu_0(x)=\E[Y\mid X=x]$ and $\zeta_0=(\zeta_0^S)_{S\subset[d]}$.
Fix $S\subset[d]\setminus\{a\}$ and write $T:=S\cup a$.
Under $\mu=\mu_0$, we have $\E[Y-\mu_0(X)\mid X]=0$ and $X'\perp (X,Y)$, hence
\[
\E \left[\phi(X_U,X'_{-U})  \zeta_0^U(X,X')  (Y-\mu_0(X))\right]=0
\qquad\text{for every }U\subset[d].
\]
Therefore, when evaluating the population loss at $g_0$, the correction terms drop, and we have
\begin{align*}
L(\phi;g_0)
&=
\E \left[\sum_{S\subset[d]\setminus\{a\}} w(S)\Big\{
\big(\tfrac{\phi(X)}{2}-\mu_0(X_{S \cup a},X'_{-S \cup a})\big)^2
+
\big(\tfrac{\phi(X)}{2}+\mu_0(X_S,X'_{-S})\big)^2
\Big\}\right]
+\text{const}.
\end{align*}
where $\mathrm{const}$ collects all terms not depending on $\phi$, namely
\[
\mathrm{const}
:=
\mathbb E \left[\sum_{S\subset[d]\setminus\{a\}} w(S)\Big(
\mu_0(X_{S\cup a},X'_{-S\cup a})^2
+
\mu_0(X_S,X'_{-S})^2
\Big)\right].
\]
Expanding the squares and discarding terms that do not depend on $\phi$ yields
\[
L(\phi;g_0)
=
\E \left[
\frac12\Big(\sum_{S:a\notin S} w(S)\Big)\phi(X)^2
+
\phi(X)\sum_{S:a\notin S} w(S)\Big(\mu_0(X_S,X'_{-S})-\mu_0(X_{S\cup a},X'_{-S\cup a})\Big)
\right]
+\text{const}.
\]
We have 
$
\sum_{S\subset[d]\setminus\{a\}} w(S)
=1,
$
 and the conditional objective (given $X=x$) is a strictly convex quadratic in the scalar $\phi(x)$,
minimized at
\begin{align*}
\phi^\star(x)
&=
\E_{X'} \left[\sum_{S:a\notin S} w(S)\Big(\mu_0(x_{S\cup a},X'_{-S\cup a})-\mu_0(x_S,X'_{-S})\Big)\right]
=
\phi_a(x).
\end{align*}
Thus $\phi_a=\arg\min_\phi L(\phi;g_0)$.

\paragraph{Neyman orthogonality.}
We show $D_{g,\phi}L(\phi_a,g_0)(\Delta_\phi,\Delta_g)=0$ for all directions
$\Delta_\phi = \phi - \phi_a$ and $\Delta_g= g - g_0$, where $\Delta_g = (\Delta_\mu,\Delta_\zeta) = (\mu - \mu_0, \zeta - \zeta_0)$.

It suffices to verify orthogonality \emph{term-by-term} because $L(\phi;g)$ is a finite weighted sum of expectations
of pointwise losses.
Fix $S\subset[d]$ and consider the partial pointwise loss
\[
\ell_S(Z,Z';\phi,\mu,\zeta^S)
:=
\Big(\tfrac{\phi(X)}{2}-\sigma(S)\mu(X_S,X'_{-S})\Big)^2
-\sigma(S)  \phi(X_S,X'_{-S})  \zeta^S(X,X')\{Y-\mu(X)\}.
\]
Let $L_S(\phi;\mu,\zeta^S):=\E[\ell_S(Z,Z';\phi,\mu,\zeta^S)]$.
We will show that for every direction $\Delta_\phi :\mathcal X\to \R$ of the form $\Delta_\phi  = \phi - \phi_a$, we have
\[
D_{\mu, \phi}L_S(\phi;\mu_0,\zeta_0^S)(\Delta_\phi, \Delta_\mu)=0,
\qquad
D_{\zeta^S,\phi}L_S(\phi;\mu_0,\zeta_0^S)(\Delta_\phi, \Delta_{\zeta^S})=0,
\]
which implies the same for the weighted sum defining $L$.

\smallskip
\noindent\emph{Step 1: compute $D_{\phi}L_S$.}
A direct Gateaux differentiation gives
\begin{align}
D_{\phi}L_S(\phi;\mu,\zeta^S)(\Delta_\phi)
&=
\E \left[
\Big(\tfrac{\phi(X)}{2}-\sigma(S)\mu(X_S,X'_{-S})\Big)  \Delta_\phi(X)
\right]
 \\
 &\qquad -\sigma(S)  \E \left[\Delta_\phi(X_S,X'_{-S})  \zeta^S(X,X')  (Y-\mu(X))\right].
\label{eq:Dphi}
\end{align}

\smallskip
\noindent\emph{Step 2: cross derivative w.r.t. $\zeta^S$.}
Differentiating \eqref{eq:Dphi} in direction $\Delta_{\zeta^S}$ yields
\[
D_{\zeta, \phi}L_S(\phi;\mu,\zeta^S)(\Delta_\phi, \Delta_{\zeta^S})
=
-\sigma(S)  \E \left[\Delta_\phi(X_S,X'_{-S})  \Delta_{\zeta^S}(X,X')  (Y-\mu(X))\right].
\]
At $\mu=\mu_0$, $\E[Y-\mu_0(X)\mid X, X'] = \E[Y - \mu_0(X) \mid X] =0$, hence the above expectation is $0$ by iterated expectation,
so $D_{\zeta,\phi}L_S(\phi;\mu_0,\zeta_0^S)(\Delta_\phi, \Delta_{\zeta^S})=0$.

\smallskip
\noindent\emph{Step 3: cross derivative w.r.t. $\mu$.}
Differentiating \eqref{eq:Dphi} in direction $\Delta_\mu$ yields
\begin{align*}
D_{\mu, \phi}L_S(\phi;\mu,\zeta^S)(\Delta_\phi, \Delta_\mu)
&=
-\sigma(S)  \E \left[\Delta_\phi(X)  \Delta_\mu(X_S,X'_{-S})\right]
+\sigma(S)  \E \left[\Delta_\phi(X_S,X'_{-S})  \zeta^S(X,X')  \Delta_\mu(X)\right].
\end{align*}
Setting $\zeta^S=\zeta_0^S$ and applying Lemma~\ref{lem:reweighting} with
$f=\Delta_\phi$ and $g=\Delta_\mu$ gives equality of the two expectations, hence they cancel, proving
$D_{\mu, \phi}L_S(\phi;\mu_0,\zeta_0^S)(\Delta_\phi, \Delta_\mu)=0$.

\smallskip
\noindent Since $L(\phi;\mu,\zeta)$ is a finite weighted sum of the $L_S$'s, the same cancellation implies
$D_{g,\phi}L(\phi_a,g_0)(\Delta_\phi,\Delta_g)=0$, i.e. $L$ is Neyman orthogonal.
This completes the proof.

\end{proof}

\subsection{Learning rates for the SHAP curve}
\label{app:learning_rates}
We work under Assumption~\ref{ass:reg_loss}, using the same absolute constants $c,C>0$ throughout.
Recall the population loss  $L(\phi;g)=\E[\ell(Z,Z';\phi,g)]$ from Proposition~\ref{prop:ortho_loss}, where
$g=(\mu,\zeta)$ and $\zeta=(\zeta^S:S\subset[d])$. Let $\phi_a\in\arg\min_{\phi\in\Phi}L(\phi;g_0)$ with
$g_0=(\mu_0,\zeta_0)$, and let $\wh{g}=(\hat\mu,\hat\zeta)$ be any  estimate of $g_0$ that is
independent of the second-stage sample used to compute the ERM
\[
\wh{\phi}\in\arg\min_{\phi\in\Phi} \frac1n\sum_{i=1}^n \ell(Z_i,Z_i';\phi,\wh{g}).
\]
We use the notation
\[
\Delta_\phi := \phi-\phi_a,\qquad \wh{\Delta}_{ \phi}:=\wh{\phi}-\phi_a,\qquad
\Delta_g := g-g_0,\qquad \wh{\Delta}_{g}:=\wh{g}-g_0,
\]
and abbreviate $\|\cdot\|_2:=\|\cdot\|_{L^2(P_X)}$. Define $\calH := \Phi - \phi_a$ (which we recall is star-shaped by Assumption~\ref{ass:reg_loss}, and further recall that we have defined $\calH(r):=\{\Delta\in\calH:\|\Delta\|_2\le r\}$.

For each $S\subset[d]$, define the mixed covariate $U_S:=(X_S,X'_{-S})$ (so $U_{[d]}=X$ and $U_{\emptyset} = X'$).
We also use the nuisance error norm
\[
\|g\|_{\calG}:=\max_{S\subset[d]}\Big(\E[\mu(X)^2\zeta^S(X,X')^2]\Big)^{1/2},
\qquad
\|\wh{\Delta}_{g}\|_{\calG}:=\|\wh{g}-g_0\|_{\calG}.
\]

Before we prove Lemma~\ref{lem:loss_bound}We first state the intermediate lemmas used in the proof.

\begin{lemma}[Mixed-input $L^2$ comparability]\label{lem:mix-L2}
Assume \textup{(B1)}. Then for any function $h:\calX\to\R$ and any $S\subset[d]$,
\[
\E[h(U_S)^2] \le \frac{1}{c}  \E[h(X)^2] = \frac{1}{c}  \|h \|_2^2,
\]
where $c>0$ is the lower bound constant in \textup{(B1)}.
\end{lemma}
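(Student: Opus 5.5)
The plan is a single change of measure from the product law $Q_{S,X} := P_{X_S}\otimes P_{X_{-S}}$ of the mixed vector $U_S = (X_S, X'_{-S})$ back to the joint law $P_X$. The constant $1/c$ will come from a lower bound on the density of $P_X$ with respect to $Q_{S,X}$.

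\emph{Step 1 (densities).} Since $X$ and $X'$ are independent, $U_S$ has density $p_S(u_S)\,p_{-S}(u_{-S})$ with respect to the product base measure $\nu$. By the mutual absolute continuity in Assumption~\ref{ass:dgp}, $P_X \ll Q_{S,X}$ with
\[
\frac{dP_X}{dQ_{S,X}}(u) = \frac{p(u_S,u_{-S})}{p_S(u_S)\,p_{-S}(u_{-S})} = \frac{1}{\omega_S(u)} .
\]

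\emph{Step 2 (reverse change of measure).} For nonnegative $h^2$, Tonelli gives
\[
\E[h(X)^2] = \E\Big[h(U_S)^2\,\frac{1}{\omega_S(U_S)}\Big].
\]
The target $\E[h(U_S)^2] \le \frac1c\,\E[h(X)^2]$ is therefore equivalent to the pointwise bound
\[
\frac{1}{\omega_S} = \frac{dP_X}{dQ_{S,X}} \ge c \qquad Q_{S,X}\text{-a.s.}
\]
Once this holds, we have $\E[h(X)^2] \ge c\,\E[h(U_S)^2]$, and dividing by $c>0$ gives the claim for every $S$. Because the constant does not depend on $S$, the bound is uniform over subsets. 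This uniformity is what is needed later to pass from $L^2(P_X)$ control of $\phi-\phi_a$ to control of $\phi(X_S,X'_{-S})$ in the proof of Lemma~\ref{lem:loss_bound}.

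\emph{Step 3 (the main obstacle).} The crux is to derive $dP_X/dQ_{S,X} \ge c$, i.e.\ $\omega_S \le 1/c$, from (B1). I would invoke (B1) read as a two-sided comparability of $P_X$ and $Q_{S,X}$, where the lower constant $c$ bounds the likelihood ratio of $P_X$ against the product law. With that reading, Step 2 closes immediately.

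If instead one insists on the literal form $c \le \omega_S \le C$, then $\omega_S \ge c$ only yields the reverse inequality $\E[h(X)^2] \le \frac1c\,\E[h(U_S)^2]$. In that case the forward inequality needs the upper constant, giving $\E[h(U_S)^2] = \E[\omega_S(X) h(X)^2] \le C\,\E[h(X)^2]$, and $\omega_S \ge c$ alone does not imply $\omega_S \le 1/c$. I expect this step to be the only real obstacle, and it is a matter of which ratio (B1) bounds rather than any analysis.

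My plan is therefore to state explicitly at this point that the lemma uses the lower bound $c$ on $p/(p_S p_{-S})$. All other steps are routine measure theory: Fubini/Tonelli for the nonnegative integrand, and absolute continuity to justify the Radon--Nikodym derivative.
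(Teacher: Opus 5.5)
Your computation is right, and it exposes an error in the paper rather than in your argument. The paper's proof reads $\omega_S\ge c$ as $p(x)\le c^{-1}p_S(x_S)p_{-S}(x_{-S})$ and then asserts $\E[h(U_S)^2]\le c^{-1}\E[h(X)^2]$. But integrating $h^2$ against that pointwise bound gives $\E[h(X)^2]\le c^{-1}\E[h(U_S)^2]$, which is exactly the reverse inequality you flag in Step~3.

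Your Step~2 equivalence makes the situation precise: the lemma holds for all $h$ if and only if $\omega_S\le 1/c$ almost surely. Even the full two-sided bound in (B1) does not give this. Take $X=(X_1,X_2)\in\{0,1\}^2$ with $\P(X=(0,0))=\P(X=(1,1))=1/100$ and $\P(X=(0,1))=\P(X=(1,0))=49/100$. Both marginals are uniform, so for $S=\{1\}$ (and likewise $S=\{2\}$) the ratio $\omega_S=1/(4p)$ takes the values $25$ and $25/49$, while $\omega_S\equiv 1$ for $S\in\{\emptyset,\{1,2\}\}$. Hence (B1) holds with $c=1/2$ and $C=25$. Yet for $S=\{1\}$ and $h=\1\{x=(0,0)\}$ one has $\E[h(U_S)^2]=1/4$, while $c^{-1}\E[h(X)^2]=1/50$.

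The gap in your proposal is the resolution you choose. Re-reading (B1) as a lower bound on $p/(p_Sp_{-S})$ changes the hypothesis, since the assumption explicitly bounds $\omega_S=p_Sp_{-S}/p$. So it does not prove the stated lemma, and nothing can, because the lemma is false as written.

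The correct move is to state the corrected lemma outright: $\E[h(U_S)^2]=\E[\omega_S(X)h(X)^2]\le C\|h\|_2^2$. This is precisely your fallback computation, and it is also how the paper itself treats $L^2(Q_{S,X})$ norms in the proof of Theorem~\ref{thm:regular}. Alternatively, shrink $c$ to $\min\{c,1/C\}$, which keeps (B1) valid and makes the stated form true.

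The correction costs nothing downstream. The lemma is only used to produce an unspecified absolute constant in Lemma~\ref{lem:assump3b-weighted} and in the conditional-variance bound inside the proof of Theorem~\ref{thm:excess_risk}.
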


\begin{proof}
Under \textup{(B1)}, $\omega_S(x):=\frac{p_S(x_S)p_{-S}(x_{-S})}{p(x)}$ satisfies $\omega_S(x)\ge c$ a.s., hence
$p(x)\le c^{-1}p_S(x_S)p_{-S}(x_{-S})$. Therefore
\[
\E[h(U_S)^2]=\E_{X_S, X'_{-S}}[h(X_S, X'_{-S})^2] \le c^{-1}\E_X[h(X)^2]=c^{-1}\|h\|_2^2.
\]
\end{proof}

\begin{lemma}[First-order optimality]\label{lem:assump2-weighted}
Assume \textup{(B3)}. Then
\[
D_\phi L(\phi_a;g_0)(\phi-\phi_a) \ge 0
\qquad \forall \phi\in\Phi.
\]
\end{lemma}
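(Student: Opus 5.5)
We will use the standard convexity argument for constrained minimizers over a star-shaped set, together with the fact that $\phi_a$ minimizes $L(\cdot;g_0)$ over $\Phi$.

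Fix $\phi\in\Phi$. By (B3), $\Phi-\phi_a$ is star-shaped around $0$ and $\phi_a\in\Phi$, so the segment $\phi_t:=\phi_a+t(\phi-\phi_a)$ lies in $\Phi$ for every $t\in[0,1]$. Since $\phi_a$ minimizes $L(\cdot;g_0)$ over $\Phi$ (Proposition~\ref{prop:ortho_loss}; it minimizes over all $\phi$ and lies in $\Phi$), we have
\[
\frac{L(\phi_t;g_0)-L(\phi_a;g_0)}{t}\ge 0 \quad\text{for all } t\in(0,1].
\]
Letting $t\downarrow 0$, the left-hand side converges to the one-sided Gateaux derivative $D_\phi L(\phi_a;g_0)(\phi-\phi_a)$. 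This gives the claim, provided the derivative exists.

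The remaining step is therefore to justify that the limit exists and equals the Gateaux derivative. The pointwise loss $\ell(Z,Z';\phi_t,g_0)$ is a quadratic polynomial in $t$. Its coefficients involve $\phi(X)$, $\phi_a(X)$, $\phi(X_S,X'_{-S})$, $\mu_0(X_S,X'_{-S})$, $\zeta_0^S(X,X')$ and $Y-\mu_0(X)$, and these are all bounded by (B2) and (B4), given that $\phi_a\in\Phi$ and $\mu_0$ is bounded because $Y$ is. So $L(\phi_t;g_0)=A+Bt+Ct^2$ with finite constants $A$, $B$, $C$, and expectation and differentiation can be exchanged by dominated convergence. Hence $B=D_\phi L(\phi_a;g_0)(\phi-\phi_a)$, and the displayed inequality $B+Ct\ge 0$ for all small $t>0$ forces $B\ge 0$. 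Explicitly, $B$ equals expression~\eqref{eq:Dphi} summed over $S$ with weights.

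There is a minor subtlety here, which is the main thing to check. The lemma as stated assumes only (B3), but the integrability needed to expand $L(\phi_t;g_0)$ uses boundedness of $\Phi$ and of $\zeta_0$. We will note that these are implicit in Assumption~\ref{ass:reg_loss}, or alternatively that finiteness of $L$ on $\Phi$, which is required for $\phi_a$ to be a minimizer at all, suffices for the quadratic expansion, since a quadratic in $t$ that is finite at three points has finite coefficients.
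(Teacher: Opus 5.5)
Your proof is correct and follows the same route as the paper. By (B3) the segment $\phi_a + t(\phi-\phi_a)$ stays in $\Phi$, minimality of $\phi_a$ makes the one-sided difference quotient nonnegative, and letting $t\downarrow 0$ gives the directional derivative. The paper also cites convexity of $L(\cdot;g_0)$, which this direction does not need, whereas you justify the existence of the derivative explicitly through the quadratic-in-$t$ structure of the pointwise loss.
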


\begin{proof}
For fixed $g_0$, the map $\phi\mapsto L(\phi;g_0)$ is convex (each term is a square in $\phi(X)$ plus terms linear in
$\phi(U_S)$). Since $\phi_a$ minimizes $L(\cdot;g_0)$ over the star-shaped set $\Phi$, the directional derivative at
$\phi_a$ towards any $\phi\in\Phi$ is nonnegative.
\end{proof}

\begin{lemma}[Curvature in $\phi$]\label{lem:curvature}
For any fixed nuisances $g$ and any direction $\Delta_\phi = \phi -\phi_a$ and $\phi \in \Phi$:, we have
\[
D_\phi^2 L(\wb\phi;g)(\Delta_\phi,\Delta_\phi) = \|\Delta_\phi\|_2^2,
\]
for every convex combination $\wb{\phi} \in [\phi_a, \phi]$.
\end{lemma}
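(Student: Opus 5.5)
The plan is to compute the second Gateaux derivative directly, by showing that for fixed nuisances $g=(\mu,\zeta)$ the map $t\mapsto L(\wb{\phi}+t\Delta_\phi;g)$ is an exact quadratic polynomial in $t$. The constant $1/4$ inside each square in $\ell_S$, together with the Shapley weights summing to one, should make its leading coefficient exactly $\tfrac12\|\Delta_\phi\|_2^2$.

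\emph{Step 1 (one partial loss).} Fix $S\subset[d]$ and write $\phi_t:=\wb{\phi}+t\Delta_\phi$. The partial loss $\ell_S(Z,Z';\phi_t,\mu,\zeta^S)$ has two pieces. The first is $\big(\tfrac{\phi_t(X)}{2}-\sigma(S)\mu(X_S,X'_{-S})\big)^2$. Expanding it gives a term $\tfrac{t^2}{4}\Delta_\phi(X)^2$, a term linear in $t$, and a term that does not depend on $t$. The second piece is the correction $-\sigma(S)\phi_t(X_S,X'_{-S})\zeta^S(X,X')\{Y-\mu(X)\}$, which is affine in $t$. Hence
\[
\frac{\partial^2}{\partial t^2}\ell_S(Z,Z';\phi_t,\mu,\zeta^S)=\tfrac12\Delta_\phi(X)^2
\]
pointwise, for every $t$, and irrespective of $\sigma(S)$, $\mu$, $\zeta^S$, or the base point $\wb{\phi}$.

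\emph{Step 2 (aggregate and take expectations).} For each $S\subset[d]\setminus\{a\}$, the loss $\ell$ contains both $\ell_{S\cup a}$ and $\ell_S$ with weight $w(S)$. Each of these contributes $\tfrac12\Delta_\phi(X)^2$ to the second derivative, so the pair contributes $w(S)\Delta_\phi(X)^2$. Summing and using $\sum_{S\subset[d]\setminus\{a\}}w(S)=1$, which was already noted in Section~\ref{sec:back}, the second $t$-derivative of $\ell(Z,Z';\phi_t,g)$ equals $\Delta_\phi(X)^2$ identically. Taking expectations then gives $L(\phi_t;g)=A+Bt+\tfrac{t^2}{2}\E[\Delta_\phi(X)^2]$, where the constants $A$ and $B$ depend on $\wb{\phi}$ and $g$. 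Differentiating twice at $t=0$ yields $D_\phi^2L(\wb{\phi};g)(\Delta_\phi,\Delta_\phi)=\|\Delta_\phi\|_2^2$.

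\emph{Step 3 (justification).} The only care needed is that $A$, $B$ and the quadratic coefficient are finite, so that expanding the expectation term by term is legitimate. Under the boundedness conditions (B2), (B4) and (B5), every function involved is bounded: $\phi$, $\phi_a$, $\wb{\phi}$, $\mu$, $\zeta^S$ and $Y$. Evaluations at the mixed inputs $U_S=(X_S,X'_{-S})$ are integrable either by this boundedness directly or by Lemma~\ref{lem:mix-L2}.

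I do not expect a real obstacle here. The one point to handle carefully is that $\wb{\phi}$ is an arbitrary convex combination in $[\phi_a,\phi]$, rather than $\phi_a$ itself. The argument covers this because the loss is exactly quadratic in $\phi$, so the curvature does not depend on the base point.
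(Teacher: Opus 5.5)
Your proof is correct and follows the same idea as the paper. The loss is exactly quadratic in $\phi$, its only quadratic part is $\tfrac12\big(\sum_{S} w(S)\big)\phi(X)^2=\tfrac12\phi(X)^2$, and everything else is affine in $\phi$, so the second derivative is $\E[\Delta_\phi(X)^2]$ at every base point $\wb{\phi}$. The paper instead reuses the expanded form of $L(\phi;g_0)$ from its consistency argument, where the corrections have vanished and $\mu_0$ appears. Your pointwise computation at an arbitrary $g$, treating the correction as affine in $\phi$, is slightly more careful, and it is what is actually needed when the lemma is invoked at $g\neq g_0$ in the proof of Lemma~\ref{lem:loss_bound}.
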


\begin{proof}
Note that we have
\begin{align*}
&D_\phi^2L(\wb{\phi}; g)(\Delta_\phi, \Delta_\phi) \\
&\qquad = D_\phi^2\E \left[
\frac12\Big(\sum_{S:a\notin S} w(S)\Big)\phi(X)^2
+
\phi(X)\sum_{S:a\notin S} w(S)\Big(\mu_0(X_S,X'_{-S})-\mu_0(X_{S\cup a},X'_{-S\cup a})\Big)
\right](\Delta_\phi, \Delta_\phi) \\
&\qquad = D_\phi^2\E\Bigg[\frac12\underbrace{\Big(\sum_{S:a\notin S} w(S)\Big)}_{= 1}\phi(X)^2\Bigg](\Delta_\phi, \Delta_\phi) \\
&\qquad \qquad + D_\phi^2\E\left[\phi(X)\sum_{S:a\notin S} w(S)\Big(\mu_0(X_S,X'_{-S})-\mu_0(X_{S\cup a},X'_{-S\cup a})\Big)
\right](\Delta_\phi, \Delta_\phi)\\
&\qquad = D_\phi^2\E\left[\frac12\phi(X)^2\right](\Delta_\phi, \Delta_\phi)  + 0  =\E[\Delta_\phi(X)^2],
\end{align*}
which proves the result.
\end{proof}

\begin{lemma}[Second-order sensitivity in nuisances]\label{lem:assump3b-weighted}
Let $\Delta_g=(\Delta_\mu,\Delta_\zeta)$ with $\Delta_\mu=\mu-\mu_0$ and $\Delta_{\zeta^S}=\zeta^S-\zeta_0^S$.
Then for all $\Delta_\phi = \phi - \phi_a$ with $\phi \in \Phi$ and all $\wb{g} \in [g_0, g]$, we have
\[
\Big|D_g^2D_\phi L(\phi_a;\wb g)(\Delta_\phi,\Delta_g,\Delta_g)\Big|
 \le C  \|\Delta_\phi\|_2  \|\Delta_g\|_{\calG}^2,
\]
for an absolute constant $C>0$.
\end{lemma}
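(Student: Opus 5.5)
The plan is to compute the third-order mixed derivative exactly, term by term over $S$, and then apply Cauchy--Schwarz together with Lemma~\ref{lem:mix-L2}. The final step, where the squared power of $\|\Delta_g\|_{\calG}$ has to appear, is the one I expect to be the real obstacle, for the scaling reason explained below.

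\emph{Step 1: the first derivative in $\phi$.} Since $L$ is a finite $w(S)$-weighted sum of the $L_S$, it suffices to work with a fixed $S$. Equation~\eqref{eq:Dphi} from the proof of Proposition~\ref{prop:ortho_loss} gives $D_\phi L_S(\phi_a;\wb g)(\Delta_\phi)$ as the sum of two pieces:
\begin{align*}
&\E\big[(\tfrac{\phi_a(X)}{2}-\sigma(S)\wb\mu(U_S))\Delta_\phi(X)\big] \quad\text{and}\\
&-\sigma(S)\,\E\big[\Delta_\phi(U_S)\,\wb\zeta^S(X,X')\{Y-\wb\mu(X)\}\big].
\end{align*}

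\emph{Step 2: the second derivative in $g$.} Perturb $\wb g$ along $t\Delta_g$ and differentiate twice at $t=0$.
\begin{itemize}
\item The first piece is affine in $\mu$, so its second derivative vanishes.
\item In the second piece the integrand is $(\wb\zeta^S+t\Delta_{\zeta^S})(Y-\wb\mu-t\Delta_\mu)$, a quadratic in $t$ with $t^2$-coefficient $-\Delta_{\zeta^S}\Delta_\mu$. Its second $t$-derivative is therefore $-2\Delta_{\zeta^S}(X,X')\Delta_\mu(X)$.
\item Exchanging derivative and expectation is justified because everything is polynomial in $t$ with integrable coefficients (bounded by (B4)--(B5)).
\end{itemize}
This yields
\[
D_g^2D_\phi L_S(\phi_a;\wb g)(\Delta_\phi,\Delta_g,\Delta_g)=2\sigma(S)\,\E\big[\Delta_\phi(U_S)\,\Delta_{\zeta^S}(X,X')\,\Delta_\mu(X)\big].
\]
Two features of this formula matter later. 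It does not depend on $\wb g$, so the choice $\wb g\in[g_0,g]$ is irrelevant. It is also exactly bilinear in $(\Delta_\mu,\Delta_\zeta)$.

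\emph{Step 3: Cauchy--Schwarz and the mixed-input bound.} Applying Cauchy--Schwarz and then Lemma~\ref{lem:mix-L2} (which uses (B1)) to the right-hand side gives
\[
|\cdot|\le 2\,\E[\Delta_\phi(U_S)^2]^{1/2}\,\E[\Delta_\mu(X)^2\Delta_{\zeta^S}(X,X')^2]^{1/2}\le 2c^{-1/2}\|\Delta_\phi\|_2\,\E[\Delta_\mu^2\Delta_{\zeta^S}^2]^{1/2}.
\]
Summing over $S$ with weights $w(S)$, which sum to one over the pairs $S, S\cup a$, reduces the claim to controlling $\max_S\E[\Delta_\mu(X)^2\Delta_{\zeta^S}(X,X')^2]^{1/2}$.

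\emph{Step 4: converting to $\|\Delta_g\|_{\calG}^2$ (the expected obstacle).} The quantity $\max_S\E[\Delta_\mu^2\Delta_{\zeta^S}^2]^{1/2}$ is literally $\|\Delta_g\|_{\calG}$, not its square. Because it is jointly quadratic in $(\Delta_\mu,\Delta_\zeta)$, the left-hand side scales like $t^2$ under $\Delta_g\mapsto t\Delta_g$. With $\|\cdot\|_{\calG}$ read literally, $\|t\Delta_g\|_{\calG}^2$ scales like $t^4$, so the squared bound cannot hold for small $t$ under that reading.

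My plan is therefore to obtain the squared form by reading $\|\cdot\|_{\calG}$ as the degree-one norm suggested by the footnote to Lemma~\ref{lem:loss_bound}, namely $\|\Delta_g\|_{\calG}:=\max\{\|\Delta_\mu\|_{L^4(P_X)},\max_S\|\Delta_{\zeta^S}\|_{L^4(P_{X,X'})}\}$. Under this reading, Hölder's inequality gives
\[
\E[\Delta_\mu^2\Delta_{\zeta^S}^2]^{1/2}\le\|\Delta_\mu\|_{L^4}\|\Delta_{\zeta^S}\|_{L^4}\le\|\Delta_g\|_{\calG}^2 .
\]
This yields the stated bound with $C=2c^{-1/2}$. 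It is also the reading under which the fourth power in Lemma~\ref{lem:loss_bound} comes out consistently, via Young's inequality $C\|\Delta_\phi\|_2\|\Delta_g\|^2_{\calG}\le\frac18\|\Delta_\phi\|_2^2+2C^2\|\Delta_g\|_{\calG}^4$.

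If the literal product-form definition has to be kept instead, I do not see a way to reach the squared power. The honest fallback is the first-power bound from Step 3, and the scaling argument above suggests the squared statement then needs this convention change rather than a different proof.
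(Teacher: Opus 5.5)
Your Steps 1--3 follow the paper's proof. The paper also isolates the mixed $(\mu,\zeta^S)$ derivative of the correction term $\zeta^S(X,X')\{Y-\mu(X)\}$ as the only surviving second-order term. It then applies Cauchy--Schwarz, uses Lemma~\ref{lem:mix-L2} to get $\|\Delta_\phi(U_S)\|_{L^2}\le c^{-1/2}\|\Delta_\phi\|_2$, and sums using $\sum_{S:a\notin S}w(S)=1$. Your factor $2\sigma(S)$, versus the paper's unsigned mixed partial, only changes the constant, and yours is the normalization that matches the $\tfrac12 D_g^2$ term in the proof of Lemma~\ref{lem:loss_bound}.

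Step 4 is where you depart from the paper, and your objection is correct. The paper ends by writing $\max_S(\E[\Delta_\mu(X)^2\Delta_{\zeta^S}(X,X')^2])^{1/2}=\|\Delta_g\|_{\calG}^2$. Under its own definition of $\|\cdot\|_{\calG}$, that quantity equals $\|\Delta_g\|_{\calG}$, not its square. Your scaling argument shows the squared inequality is false in general: replace $g$ by $g_0+t\Delta_g$ with $t\in(0,1]$; the left side does not depend on $\wb g$ and scales like $t^2$, while the literal right side scales like $t^4$. So the paper's computation actually proves only $C\|\Delta_\phi\|_2\|\Delta_g\|_{\calG}$.

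Your repair with the $L^4$-max norm is valid and fits the fourth power in Lemma~\ref{lem:loss_bound}. However, the first-power bound you call a fallback is the stronger result, not a concession. By the same H\"older step, the product-form $\|\Delta_g\|_{\calG}$ is at most $\|\Delta_\mu\|_{L^4(P_X)}\max_S\|\Delta_{\zeta^S}\|_{L^4(P_{X,X'})}$, which is at most the square of your max norm.

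Carried through Young's inequality, the first-power bound gives a penalty $\|\Delta_g\|_{\calG}^2$ (product form) in Lemma~\ref{lem:loss_bound}. Theorem~\ref{thm:excess_risk} would then need $\|\wh g-g_0\|_{\calG}=o_\P(r_n)$. That is a product-of-rates condition, which tolerates one slow nuisance when the other is fast. Your reading instead requires both $\|\wh\mu-\mu_0\|_{L^4}$ and $\max_S\|\wh\zeta^S-\zeta_0^S\|_{L^4}$ to be $o_\P(r_n^{1/2})$. Either fix is sound, but keeping the product-form norm with the first power is the sharper correction of the paper's statement.
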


\begin{proof}
A direct differentiation shows the only nonzero second derivative in $g$ of $D_\phi L$ arises from one derivative in
$\mu$ and one derivative in $\zeta^S$ inside the correction term
$\zeta^S(X,X')\{Y-\mu(X)\}$. Indeed, for each $S\subset[d]$, the pointwise linear correction term is
\[
\ell^{\mathrm{lin}}_S(Z,Z';\phi,g)
:= \sigma(S)\,\phi(U_S)\,\zeta^S(X,X')\{Y-\mu(X)\},
\qquad U_S:=(X_S,X'_{-S}),
\]
so its $\phi$-directional derivative at $\phi_a$ in direction $\Delta_\phi$ equals
\[
D_\phi \E[\ell^{\mathrm{lin}}_S(Z,Z';\phi_a,g)](\Delta_\phi)
=\E\big[\sigma(S)\,\Delta_\phi(U_S)\,\zeta^S(X,X')\{Y-\mu(X)\}\big].
\]
Differentiating once in the $\mu$-direction $\Delta_\mu$ gives
\[
D_\mu D_\phi \E[\ell^{\mathrm{lin}}_S (Z,Z';\phi_a,g)](\Delta_\phi, \Delta_\mu)
= -\,\E\big[\sigma(S)\,\Delta_\phi(U_S)\,\zeta^S(X,X')\,\Delta_\mu(X)\big],
\]
and differentiating this in the $\zeta$-direction $\Delta_{\zeta^S}$ yields the mixed second derivative
\[
D_{\zeta}D_{\mu}D_\phi \E[\ell^{\mathrm{lin}}_S(Z,Z';\phi_a,g)](\Delta_\phi, \Delta_{\zeta^S},\Delta_\mu)
= -\,\E\big[\sigma(S)\,\Delta_\phi(U_S)\,\Delta\zeta^S(X,X')\,\Delta_\mu(X)\big].
\]
All other second derivatives in $g$ vanish: the squared part of the loss does not depend on $\zeta$, so any derivative
in $\zeta$ is zero there; and the linear correction is affine in $\mu$ and in $\zeta^S$ separately, so the only
nonzero second derivative is the mixed $(\mu,\zeta^S)$ one above. Summing over the $S$'s appearing in $L$ gives us
\begin{align*}
D_g^2D_\phi L(\phi_a;\wb g)(\Delta_\phi,\Delta_g,\Delta_g)
=
\sum_{S:a\notin S} w(S)\Big(
&\E\big[\Delta_\phi(U_{S\cup a})  \Delta\zeta^{S\cup a}(X,X')  \Delta\mu(X)\big]\\
&+\E\big[\Delta_\phi(U_S)  \Delta\zeta^{S}(X,X')  \Delta\mu(X)\big]
\Big).
\end{align*}

We now apply Cauchy--Schwarz to each expectation and then Lemma~\ref{lem:mix-L2} to bound
$\|\Delta_\phi(U_S)\|_{L^2}\le c^{-1/2}\|\Delta_\phi\|_2$. Using $\sum_{S:a\notin S}w(S)=1$ we get
\[
\big|D_g^2D_\phi L(\phi_a;\wb g)(\Delta_\phi,\Delta_g,\Delta_g)\big|
\le
C  \|\Delta_\phi\|_2  
\max_{S\subset[d]}\Big(\E[\Delta\mu(X)^2  \Delta\zeta^S(X,X')^2]\Big)^{1/2}
=
C  \|\Delta_\phi\|_2  \|\Delta_g\|_{\calG}^2,
\]
absorbing $c^{-1/2}$ into the absolute constant $C$. This proves the lemma.
\end{proof}
With the above lemma, we can now prove our generic bound on the $L^2$ norm of any $\wh{\phi} \in \Phi$. 
\newline
\begin{proof}[Proof of Lemma~\ref{lem:loss_bound}]
Let $g$ and $\phi \in \Phi$ be (potentially random) bounded nuisances and estimates of $\phi_a$, respectively. Again, write $\Delta_{\phi}:=\phi-\phi_a$ and $\Delta_{g}:=g-g_0$.

A second-order Taylor expansion of $\phi\mapsto L(\phi;g_0)$ with mean-value theorem remainder yields that, for some
convex combination $\wb{\phi} \in [\phi_a, \wh{\phi}]$,
\[
L(\phi;g_0)-L(\phi_a;g_0)
=
D_\phi L(\phi_a;g_0)(\Delta_\phi) + \frac12D_\phi^2L(\wb\phi;g_0)(\Delta_{\phi},\Delta_{\phi}).
\]
By Lemma~\ref{lem:assump2-weighted} (first-order optimality), $D_\phi L(\phi_a;g_0)(\wh{\Delta}_{\phi})\ge 0$, and by Lemma~\ref{lem:curvature}, we have 
$D_\phi^2L(\wb\phi;g_0)(\Delta_{\phi},\Delta_{\phi})=\|\wh{\Delta}_{\phi}\|_2^2$. Hence
\begin{equation}\label{eq:l2-basic-app}
\frac12\|\Delta_{\phi}\|_2^2 \le L(\phi;g_0)-L(\phi_a;g_0).
\end{equation}

\noindent Next we obtain an upper bound on $L(\phi; g_0) - L(\phi_a; g_0)$ by adding and subtracting the difference $L(\phi; g) - L(\phi_a; g)$, i.e. we have the equality:
\[
L(\phi;g_0)-L(\phi_a;g_0)
=
\underbrace{L(\phi;g)-L(\phi_a;g)}_{=:A}
+
\underbrace{\big(L(\phi;g_0)-L(\phi_a;g_0)\big)-\big(L(\phi;g)-L(\phi_a;g)\big)}_{=:B}.
\]
For any $\phi', g'$, let $\Delta L(\phi',g'):=L(\phi';g')-L(\phi_a;g')$, so $B=\Delta L(\phi, g_0)-\Delta L( \phi, g)$.

Now we upper bound B using two Taylor expansions. For some convex combination  $\wb{\phi}$ of $\phi_a$ and $\phi$, 
\[
\Delta L(\phi,g_0) =  D_\phi L(\phi_a;g_0)(\Delta_{\phi}) +  \frac12 D_\phi^2 L(\wb\phi;g_0)(\Delta_{\phi},\Delta_{\phi})
\]
Similarly for some convex combination $\wb{\phi}'$ of $\phi_a$ and $\phi$, 
\[
\Delta L(\phi,g) =  D_\phi L(\phi_a;g)(\Delta_{\phi}) +  \frac12 D_\phi^2 L(\wb\phi';g)(\Delta_{\phi},\Delta_{\phi})
\]
Subtracting the two equation we get 
\[
\Delta L(\phi,g_0)-\Delta L(\phi,g)
=
\Big(D_\phi L(\phi_a;g_0)-D_\phi L(\phi_a;g)\Big)(\Delta_{\phi})
+\frac12\Big(D_\phi^2 L(\wb\phi;g_0)-D_\phi^2 L(\wb\phi';g)\Big)(\Delta_{\phi},\Delta_{\phi}).
\]
Since the $\phi$-curvature does not depend on $g$ or $\wb{\phi}$.
The second-order term is zero by Lemma~\ref{lem:curvature}, hence
\[
\Delta L(\phi;g_0)-\Delta L(\phi;g)
=
\Big(D_\phi L(\phi_a;g_0)-D_\phi L(\phi_a;g)\Big)(\Delta_{\phi}).
\]
Now expand the Gateaux derivative in $g$ around $g_0$, for some convex combination $\wb{g} \in [g, g_0]$,
\begin{align*}
\Big(D_\phi L(\phi_a;g)-D_\phi L(\phi_a;g_0)\Big)(\Delta_{\phi})
&=
D_{g, \phi}L(\phi_a;g_0)(\Delta_{\phi},\Delta_{g})
+\frac12 D_g^2D_\phi L(\phi_a;\wb g)(\Delta_{\phi},\Delta_{g},\Delta_{g}).
\end{align*}

By Neyman orthogonality from Proposition~\ref{prop:ortho_loss}, $D_gD_\phi L(\phi_a;g_0)(\Delta_\phi, \Delta_g)=0$, hence the first-order term
vanishes. The remaining term satisfies
\[
|B|
=
\frac12\big|D_g^2D_\phi L(\phi_a;\wb g)(\Delta_{\phi},\Delta_{g},\Delta_{g})\big|
 \le \frac{C}{2}  \|\Delta_{\phi}\|_2  \|\Delta_{g}\|_{\calG}^2,
\]
by Lemma~\ref{lem:assump3b-weighted}. Combining \eqref{eq:l2-basic-app} with the decomposition and the bound on $B$:
\[
\frac12\|\Delta_{\phi}\|_2^2
 \le A+\frac{C}{2}\|\Delta_{\phi}\|_2  \|\Delta_{g}\|_{\calG}^2.
\]
Apply Young's inequality:
\[
uv = \left(\tfrac{u}{\sqrt{2}}\cdot \sqrt{2}v\right) \leq  \tfrac{1}{4}(\tfrac{u}{\sqrt{2}} + \sqrt{2}v)^2 \leq \tfrac12\left(\tfrac{u^2}{2}+2v^2\right) \leq \tfrac{u^2}{4} + v^2,
\]
with $u=\|\Delta_{\phi}\|_2$ and
$v=\tfrac{C}{2}\|\Delta_{g}\|_{\calG}^2$ to get
\[
\frac14\|\Delta_{\phi}\|_2^2 \le A+\frac{C^2}{4}\|\Delta_{g}\|_{\calG}^4,
\]
which is Lemma~\ref{lem:loss_bound} (absorbing constants into the absolute constant $D$ in the statement).
\end{proof}

Now, we conclude by proving our main excess risk guarantee, Theorem~\ref{thm:excess_risk}, which instantiates Lemma~\ref{lem:loss_bound} in terms of bounds on the criticial radius of the underlying function class used for ERM. We begin by stating some empirical process lemmas used in the proof and defining some notions of localized complexity.

For $\calH\subset L^\infty(P_X)$ star-shaped, define the \emph{conditional} localized Rademacher complexity
\[
\calR_n(r,\calH)
:= \E_{\varepsilon}\Big[ \max_{S\subset[d]} \sup_{\Delta\in\calH(r)}\frac1n\sum_{i=1}^n\varepsilon_i  \Delta(X_{i,S},X'_{i,-S})
 \Big| (X_i,X_i')_{i=1}^n\Big],
\]
and the \emph{population} localized complexity
\[
\wt{\calR}_n(r,\calH):=\E[\calR_n(r,\calH)].
\]

The next two lemmas are standard tools in localized empirical process theory.
Lemma~\ref{lem:emp-pop-rad} (a specialization of \citet[Thm.~A.4]{bartlett2005local}) relates the
\emph{empirical} localized Rademacher complexity $\calR_n(r,\calH)$ to its \emph{population} counterpart.
Lemma~\ref{lem:rad-scale} is the elementary scaling property enjoyed by star-shaped classes and is the main step that turns the critical-radius condition $\wt{\calR}_n(r_0,\calH)\le r_0^2$ into the self-bounding inequality
$\wt{\calR}_n(r,\calH)\lesssim r r_0+r_0^2$ for all $r\ge 0$.
Together, these lemmas allow us to control the localized deviation of the empirical process by a quadratic function of the radius, which is exactly what is needed to analyze the ERM via the standard localization argument.

\begin{lemma}[Theorem A.4 in \citet{bartlett2005local}]\label{lem:emp-pop-rad}
Assume \textup{(B2)}. Then for every $\delta\in(0,1)$, with probability at least $1-\delta$,
\[
\calR_n(r,\calH)
 \lesssim 
\wt{\calR}_n(r,\calH) + r\sqrt{\frac{\log(1/\delta)}{n}}+\frac{\log(1/\delta)}{n}
\qquad\forall r\ge 0.
\]
\end{lemma}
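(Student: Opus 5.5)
The plan is to prove the bound first at a single fixed radius via a concentration inequality for the conditional Rademacher average, and then make it uniform in $r$ using monotonicity in $r$ together with a geometric grid. Fix $r\ge 0$ and write $W_i:=(X_i,X_i')$, which are i.i.d. Consider the enlarged class
\[
\calG_r:=\{w=(x,x')\mapsto \Delta(x_S,x'_{-S}) : S\subset[d],\ \Delta\in\calH(r)\}.
\]
Since the maximum over $S$ and the supremum over $\Delta$ combine into one supremum, $\calR_n(r,\calH)$ is the conditional Rademacher average $\E_\varepsilon\sup_{h\in\calG_r}\frac1n\sum_i\varepsilon_i h(W_i)$ of $\calG_r$. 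Moreover $\wt{\calR}_n(r,\calH)$ is exactly its expectation over the $W_i$. By (B2), and because $\calH=\Phi-\phi_a$ with $\phi_a\in\Phi$, every $h\in\calG_r$ satisfies $\|h\|_\infty\le 2C=:b$. This boundedness is the only property needed here; in particular (B1) is not used.

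The key step is the self-bounding property. I would show that $Z:=\frac{n}{b}\calR_n(r,\calH)$ is a self-bounding function of $(W_1,\dots,W_n)$ in the sense of Boucheron--Lugosi--Massart. For this, define $Z_i$ by deleting the $i$th summand. Then $0\le Z-Z_i\le 1$ follows from boundedness after averaging over $\varepsilon_i$, and $\sum_i(Z-Z_i)\le Z$ follows from evaluating at the near-maximizer of $Z$; this is the argument behind Bartlett et al.'s Theorem A.4. The upper-tail inequality for self-bounding functions then gives, with probability at least $1-\delta$,
\[
Z\le \E Z+\sqrt{2\E Z\log(1/\delta)}+\tfrac{2}{3}\log(1/\delta).
\]
Rescaling by $b/n$ and applying $\sqrt{xy}\le x+y$ yields
\[
\calR_n(r,\calH)\lesssim \wt{\calR}_n(r,\calH)+\frac{\log(1/\delta)}{n}
\]
at the fixed $r$. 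This is stronger than the stated bound, since the term $r\sqrt{\log(1/\delta)/n}$ is only slack here. If one prefers a variance-sensitive version, Bousquet's form of Talagrand's inequality would produce that middle term instead, but doing so would require bounding $\sup_{h\in\calG_r}\E h(W)^2\lesssim r^2$, which needs (B1) via Lemma~\ref{lem:mix-L2}. I would avoid that route.

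Uniformity over $r$ is where I expect the real difficulty. Both $r\mapsto\calR_n(r,\calH)$ and $r\mapsto\wt{\calR}_n(r,\calH)$ are nondecreasing. Because $\calH$ is star-shaped, both ratios $\calR_n(r,\calH)/r$ and $\wt{\calR}_n(r,\calH)/r$ are also nonincreasing, since $\Delta\in\calH(r')$ implies $(r/r')\Delta\in\calH(r)$ for $r\le r'$. I would therefore apply the fixed-$r$ bound on the grid $r_k=2^{-k}\cdot 2b$ for $k=0,\dots,K$ with $K\asymp\log n$, using confidence $\delta/(K+1)$ at each point. For $r\in[r_{k+1},r_k]$, monotonicity and the ratio property give $\calR_n(r)\le\calR_n(r_k)\lesssim\wt{\calR}_n(r_k)+\cdots\le 2\,\wt{\calR}_n(r)+\cdots$. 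Radii $r>2b$ are trivial because then $\calH(r)=\calH(2b)$. For $r<r_K\approx 1/n$, the ratio property gives $\calR_n(r)\le (r/r_K)\calR_n(r_K)$, and the leftover contributes at most $O(\log(K/\delta)/n)$.

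The main obstacle is the union-bound cost $\log\log n$ that this grid introduces. I would absorb it into the $\log(1/\delta)/n$ term, or into the condition $r_n\gtrsim\sqrt{\log n/n}$ under which Theorem~\ref{thm:excess_risk} is stated. I would also note that the downstream localization argument only uses the bound at radii of order $r_n$, so a fixed-radius version would already suffice there.
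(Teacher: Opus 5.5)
Your fixed-radius step is correct. It is exactly the mechanism behind the result the paper cites: a single-class concentration bound for the conditional Rademacher average via the self-bounding property. The paper itself only cites that result and gives no further argument. You are also right that the term $r\sqrt{\log(1/\delta)/n}$ is slack and that (B1) is not needed.

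\textbf{The gap is the region $r<r_K$ in your uniformity step.} You assert that the ratio property gives $\calR_n(r,\calH)\le (r/r_K)\,\calR_n(r_K,\calH)$. But the property you established says $r\mapsto \calR_n(r,\calH)/r$ is nonincreasing, and for $r<r_K$ that gives the reverse inequality $\calR_n(r,\calH)\ge (r/r_K)\,\calR_n(r_K,\calH)$. Your version would force $\calR_n(\cdot,\calH)$ to be linear on $(0,r_K]$. That already fails for the star hull of a single function $h$ with $\|h\|_2=\rho<r_K$, where $\calR_n(r,\calH)$ is constant (and generically positive) on $[\rho,r_K]$.

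What monotonicity does give is $\calR_n(r,\calH)\le \calR_n(r_K,\calH)\lesssim \wt{\calR}_n(r_K,\calH)+\cdots$. However, the only way back to radius $r$ is $\wt{\calR}_n(r_K,\calH)\le (r_K/r)\,\wt{\calR}_n(r,\calH)$, which blows up as $r\to 0$. Without complexity assumptions, $\wt{\calR}_n(r_K,\calH)$ need not be $O(1/n)$. Extending the dyadic radius grid downward does not help, because the union-bound cost $\log(k+1)$ is unbounded in $k$. As written, your argument gives the uniform bound only for $r\ge r_K$, and with an extra $\log\log n/n$.

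\textbf{The fix is to grid in the value of $\wt{\calR}_n$ rather than in $r$.} This repairs both defects.
Put $\mu(r):=\frac{n}{b}\wt{\calR}_n(r,\calH)\in[0,n]$. For $j=0,\dots,J:=\lceil \log_2 n\rceil$, let $\calH_j:=\bigcup\{\calH(r):\mu(r)\le 2^j\}$.
\begin{enumerate}
\item Each $\calH_j$ is a fixed class: the intersection of $\calH$ with an open or closed $L^2(P_X)$-ball, or $\calH$ itself.
\item Its normalized conditional Rademacher average $Z_j$ is self-bounding by your argument, and $\E Z_j\le 2^j$ by monotone convergence.
\item Apply your tail bound with $x_j:=\log(1/\delta)+2\log(j+1)+\log(\pi^2/6)$ and take a union bound. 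With probability at least $1-\delta$, simultaneously for all $j$, $Z_j\le 2\cdot 2^j+\frac{5}{3}x_j$.
\item Given any $r\ge 0$, take the least $j$ with $\mu(r)\le 2^j$, so that $\frac{n}{b}\calR_n(r,\calH)\le Z_j$.
\item If $j\ge 1$, then $2^{j-1}<\mu(r)$ and $\log(j+1)\le 2^{j-1}$, so the union-bound cost is absorbed into $\mu(r)$.
\end{enumerate}
This gives $\calR_n(r,\calH)\lesssim \wt{\calR}_n(r,\calH)+b\{1+\log(1/\delta)\}/n$ for all $r\ge 0$ at once, with no $\log\log n$. It uses only that the balls $\calH(r)$ are nested, not star-shapedness, and the additive $b/n$ is harmless for $\delta\le 1/e$.

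Alternatively, as you note, the peeling in the proof of Theorem~\ref{thm:excess_risk} only invokes the bound at finitely many radii $2^k r_n\ge r_n$. On that range your radius-grid argument is valid as written.
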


\begin{lemma}\label{lem:rad-scale}
If $\calH$ is star-shaped and $r_0>0$, then for all $r\ge 0$,
\[
\wt{\calR}_n(r,\calH) \le \Big(\frac{r}{r_0}+1\Big)\wt{\calR}_n(r_0,\calH).
\]
In particular, if $\wt{\calR}_n(r_0,\calH)\le r_0^2$, then for all $r\ge 0$,
\[
\wt{\calR}_n(r,\calH)\ \le\ r\,r_0+r_0^2.
\]
\end{lemma}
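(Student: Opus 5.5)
The plan is the standard star-shaped scaling argument, namely monotonicity of $r \mapsto \wt{\calR}_n(r,\calH)/r$. First I will show that for $0 < r' \le r$ we have $\wt{\calR}_n(r,\calH)/r \le \wt{\calR}_n(r',\calH)/r'$. Fix $h \in \calH(r)$ and put $\lambda := r'/r \in (0,1]$. Star-shapedness around $0$ gives $\lambda h \in \calH$, and $\|\lambda h\|_2 \le r'$, so $\lambda h \in \calH(r')$. Here $\calH = \Phi - \phi_a$ is star-shaped around $0$ by (B3) (since $\phi_a\in\Phi$).

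The map $h \mapsto \frac1n\sum_i \varepsilon_i h(X_{i,S},X'_{i,-S})$ is linear. Hence for every fixed realization of $(\varepsilon_i, X_i, X_i')$ and every $S$,
\[
\sup_{h\in\calH(r)} \tfrac1n\textstyle\sum_i \varepsilon_i h(U_{i,S}) = \tfrac{1}{\lambda}\sup_{h\in\calH(r)} \tfrac1n\textstyle\sum_i \varepsilon_i (\lambda h)(U_{i,S}) \le \tfrac{r}{r'}\sup_{g\in\calH(r')} \tfrac1n\textstyle\sum_i \varepsilon_i g(U_{i,S}).
\]
Taking the max over $S\subset[d]$ and then expectations preserves this inequality, which yields $\wt{\calR}_n(r,\calH) \le \frac{r}{r'}\wt{\calR}_n(r',\calH)$.

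Next I split into cases. If $r \ge r_0$, apply the monotonicity with $r' = r_0$ to get $\wt{\calR}_n(r,\calH) \le \frac{r}{r_0}\wt{\calR}_n(r_0,\calH)$. If $r < r_0$, use the inclusion $\calH(r)\subset\calH(r_0)$, which makes the supremum (hence the complexity) nondecreasing in $r$, to get $\wt{\calR}_n(r,\calH)\le \wt{\calR}_n(r_0,\calH)$. For $r=0$ this bound holds trivially in the same way. In both cases the right-hand side is at most $(\frac{r}{r_0}+1)\wt{\calR}_n(r_0,\calH)$, provided $\wt{\calR}_n(r_0,\calH) \ge 0$.

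The one point needing care is exactly that nonnegativity. It holds because $0\in\calH$, which follows from $\phi_a\in\Phi$. Since $0 \in \calH(r)$, each supremum is at least $0$ pointwise, so the complexities are nonnegative and adding the extra term only loosens the bound.

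Finally, the "in particular" claim follows by substituting $\wt{\calR}_n(r_0,\calH)\le r_0^2$, which gives $(\frac{r}{r_0}+1)r_0^2 = r r_0 + r_0^2$. I expect no real obstacle beyond noting the nonnegativity and the fact that the max over $S$ commutes with the pointwise scaling inequality.
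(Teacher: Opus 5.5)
Your proof is correct and follows the same route as the paper. For $r\ge r_0$ you rescale by $r_0/r$ using star-shapedness, and for $r\le r_0$ you use the inclusion $\calH(r)\subset\calH(r_0)$; the two cases are then combined via $\max\{r/r_0,1\}\le r/r_0+1$. Your explicit check that $\wt{\calR}_n(r_0,\calH)\ge 0$ (via $0\in\calH$) supplies a step the paper uses only implicitly in that last inequality, and your order of operations (max over $S$ pointwise, then expectation) matches the definition of $\wt{\calR}_n$ more precisely than the paper's wording does.
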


\begin{proof}
We consider two cases, 

\emph{Case 1: $r\ge r_0$.}
Take any $ \Delta_\phi\in\calH(r)$, so $\| \Delta_\phi\|_2\le r$.
Define $ \Delta_\phi':=(r_0/r) \Delta_\phi$.
Since $r_0/r\in[0,1]$ and $\calH$ is star-shaped, we have $ \Delta_\phi'\in\calH$.
Moreover $\| \Delta_\phi'\|_2=(r_0/r)\| \Delta_\phi\|_2\le r_0$, hence $ \Delta_\phi'\in\calH(r_0)$.
Thus $ \Delta_\phi=(r/r_0) \Delta_\phi'$ with $ \Delta_\phi'\in\calH(r_0)$, which implies
\[
\sup_{ \Delta_\phi\in\calH(r)}\frac1n\sum_{i=1}^n \varepsilon_i  \Delta_\phi(U_{i,S})
\ \le\
\frac{r}{r_0}\sup_{ \Delta_\phi'\in\calH(r_0)}\frac1n\sum_{i=1}^n \varepsilon_i  \Delta_\phi'(U_{i,S})
\]
Taking expectations and then the maximum over $S$ yields $\wt{\calR}_n(r,\calH)\le (r/r_0)\wt{\calR}_n(r_0,\calH)$.

\emph{Case 2: $r\le r_0$.}
Then $\calH(r)\subset \calH(r_0)$, so the supremum over $\calH(r)$ is at most the supremum over $\calH(r_0)$, and hence $\wt{\calR}_n(r,\calH)\le \wt{\calR}_n(r_0,\calH)$.

Combining the two cases gives
\[
\wt{\calR}_n(r,\calH)\ \le\ \Big(\frac{r}{r_0}\vee 1\Big)\wt{\calR}_n(r_0,\calH)
\ \le\ \Big(\frac{r}{r_0}+1\Big)\wt{\calR}_n(r_0,\calH).
\]
For the final claim, assume $\wt{\calR}_n(r_0,\calH)\le r_0^2$ and plug this into the previous inequality:
\[
\wt{\calR}_n(r,\calH)\ \le\ \Big(\frac{r}{r_0}+1\Big)r_0^2 \;=\; r\,r_0+r_0^2,
\]
which holds for all $r\ge 0$.
\end{proof}

\begin{proof}[Proof of Theorem~\ref{thm:excess_risk}]
For ease of notation, throughout this proof, let $P$ denote expectation over the pair $(Z,Z')$ and $\P_n$ the empirical average over $(Z_i,Z_i')_{i=1}^n$. Further, let $\wh{g} = (\wh{\mu}, \wh{\zeta})$ and $g_0 = (\mu_0, \zeta_0)$.
Define the excess point-wise loss at $\wh{g}$ as:
\[
\Delta\ell(Z,Z';\phi, \wh g)
:=\ell(Z,Z';\phi,\wh{g})-\ell(Z,Z';\phi_a,\wh{g}),
\qquad
A:=P  \Delta\ell(Z,Z';\wh\phi, \wh g)=L(\wh\phi;\wh{g})-L(\phi_a;\wh{g}).
\]
Since $\wh\phi$ is an ERM, $\P_n\Delta\ell(Z,Z';\wh\phi, \wh g)\le 0$, hence
\begin{equation}\label{eq:A-basic}
A \le (P-\P_n)\Delta\ell(Z,Z';\wh\phi, \wh g).
\end{equation}

Recall that the loss has is sum over squared terms and linear correction terms. We will work with each term separately. We define the squared term in the loss by
\[
\small
\ell^{\text{sq}}(Z, Z';\phi, \hat g)=
\sum_{S:a\notin S} w(S)\Big[
\Big(\tfrac{\phi(X)}{2}-\sigma(S\cup a)\hat\mu(X_{S\cup a},X'_{-S\cup a})\Big)^2
+
\Big(\tfrac{\phi(X)}{2}-\sigma(S)\hat\mu(X_{S},X'_{-S})\Big)^2
\Big],
\]

Next we define the linear correction for each $S\in [d]$
\begin{align*}
    \ell^{\text{lin}}_S(Z,Z';\phi, \hat g) = \sigma(S) \phi(X_{S},X'_{-S}) \hat\zeta^{S}(X,X')\{Y-\hat\mu(X)\}
\end{align*}

Hence we can write 
\[
\ell =  \ell^{\text{sq}} + \sum_{S:a\notin S} w(S) \left(\ell^{S} + \ell^{S\cup a}_{\hat g}\right)
\]

so that $\ell= \ell^{\text{sq}}+\ell^{\text{lin}}_{\hat g}$ and hence
\[
\Delta\ell=\Delta \ell^{\text{sq}}+\sum_{S:a\notin S} w(S) \left(\Delta \ell^{S} + \Delta \ell^{S\cup a}_{\hat g}\right),
\]

Recall for some $r\ge 0$ we define $\calH(r):=\{\Delta_\phi = \phi - \phi_a \in\calH:\|\Delta\|_2\le r\}$.
Since $\Delta\ell=\Delta \ell^{\text{sq}}+\sum_{S:a\notin S}w(S)\big(\Delta\ell^{\mathrm{lin}}_{S,\hat g}
+\Delta\ell^{\mathrm{lin}}_{S\cup a}\big)$ and $(P-\P_n)$ is linear, we have, via triangle inequality, for every $\phi$ such that $\Delta_\phi \in\calH(r)$
\begin{align*}
\Big|(P-\P_n)&\Delta\ell(Z,Z';\phi,\hat g)\Big|
\le
\Big|(P-\P_n)\Delta \ell^{\text{sq}}(Z,Z';\phi)\Big|
\\
&+\sum_{S:a\notin S}w(S)\Big(
\Big|(P-\P_n)\Delta\ell^{\mathrm{lin}}_{S,\hat g}(Z,Z';\phi)\Big|
+\Big|(P-\P_n)\Delta\ell^{\mathrm{lin}}_{S\cup a}(Z,Z';\phi)\Big|
\Big).
\end{align*}
Taking $\sup_{\Delta_\phi \in\calH(r)}$ and then conditional expectation gives
\begin{align}
\label{eq:split-emp-proc}
\E&\Big[\sup_{\Delta_\phi \in\calH(r)}\Big|(P-\P_n)\Delta\ell(Z,Z';\phi,\hat g)\Big|
 \Big| (X_i,X_i')_{i=1}^n\Big] \nonumber\\
&\le
\E\Big[\sup_{\Delta_\phi \in\calH(r)}\Big|(P-\P_n)\Delta \ell^{\text{sq}}(Z,Z';\phi)\Big|
 \Big| (X_i,X_i')_{i=1}^n\Big] \nonumber\\
&\quad+
\sum_{S:a\notin S}w(S) 
\E\Big[\sup_{\Delta_\phi \in\calH(r)}\Big|(P-\P_n)\Delta\ell^{\mathrm{lin}}_{S,\hat g}(Z,Z';\phi)\Big|
 \Big| (X_i,X_i')_{i=1}^n\Big] \nonumber\\
&\quad+
\sum_{S:a\notin S}w(S) 
\E\Big[\sup_{\Delta_\phi \in\calH(r)}\Big|(P-\P_n)\Delta\ell^{\mathrm{lin}}_{S\cup a}(Z,Z';\phi)\Big|
 \Big| (X_i,X_i')_{i=1}^n\Big].
\end{align}

For any function class $\mathcal F$ of measurable functions of $(Z,Z')$, standard symmetrization argument \citep{wainwright2019high} gives us
\[
\E\Big[\sup_{f\in\mathcal F}\big|(P-\P_n)f\big| \Big| (X_i,X_i')_{i=1}^n\Big]
\le
2 \E_{\varepsilon}\Big[\sup_{f\in\mathcal F}\Big|\frac1n\sum_{i=1}^n \varepsilon_i f(Z_i,Z_i')\Big|
 \Big| (X_i,X_i')_{i=1}^n\Big].
\]
Apply this with function classes classes indexed by $\Delta_\phi \in\calH(r)$:
\begin{align*}
    \mathcal F_{\mathrm{sq}}(r):=\{\Delta \ell^{\text{sq}}(\cdot;\phi):\Delta_\phi \in\calH(r)\},
&\qquad
\mathcal F^{\text{lin}}_{S}(r):=\{\Delta\ell^{\mathrm{lin}}_{S,\hat g}(\cdot;\phi):\Delta_\phi \in\calH(r)\},
\\
\mathcal F^{\text{lin}}_{S\cup a}(r)=&\{\Delta\ell^{\mathrm{lin}}_{S\cup a}(\cdot;\phi):\Delta_\phi \in\calH(r)\}.
\end{align*}

Applying the Symmetrization argument bound on each term in \eqref{eq:split-emp-proc} we get
\begin{align}
\label{eq:symm-split}
&\E\Big[\sup_{\Delta_\phi \in\calH(r)}\Big|(P-\P_n)\Delta\ell(Z,Z';\phi,\hat g)\Big|
 \Big| (X_i,X_i')_{i=1}^n\Big] \nonumber\\
&\qquad\le
2 \E_{\varepsilon}\Bigg[\sup_{\Delta_\phi \in\calH(r)}\Big|\frac1n\sum_{i=1}^n \varepsilon_i 
\Delta \ell^{\text{sq}}(Z_i,Z_i';\phi)\Big|
 \Bigg| (X_i,X_i')_{i=1}^n\Bigg] \nonumber\\
&\qquad\quad+
2\sum_{S:a\notin S}w(S) 
\E_{\varepsilon}\Bigg[\sup_{\Delta_\phi \in\calH(r)}\Big|\frac1n\sum_{i=1}^n \varepsilon_i 
\Delta\ell^{\mathrm{lin}}_{S}(Z_i,Z_i';\phi)\Big|
 \Bigg| (X_i,X_i')_{i=1}^n\Bigg] \nonumber\\
&\qquad\quad+
2\sum_{S:a\notin S}w(S) 
\E_{\varepsilon}\Bigg[\sup_{\Delta_\phi \in\calH(r)}\Big|\frac1n\sum_{i=1}^n \varepsilon_i 
\Delta\ell^{\mathrm{lin}}_{S\cup a}(Z_i,Z_i';\phi)\Big|
 \Bigg| (X_i,X_i')_{i=1}^n\Bigg].
\end{align}


Under (B2), (B4), (B5) in Assumption \ref{ass:reg_loss}, all quantities $Y$, $\hat\mu$, $\hat\zeta^S$, $\phi_a$, and $\phi\in\Phi$ are uniformly
bounded by $C$, hence the following Lipschitz bounds hold :

\smallskip
\noindent\emph{(i) Squared part.}
For each $i$, $\Delta \ell^{\text{sq}}(Z_i,Z_i';\phi)$ depends on $\Delta$ only through $\Delta(X_i)$.
Moreover, the map $t\mapsto \Delta \ell^{\text{sq}}(Z_i,Z_i';\phi_a+t)$ is $C$-Lipschitz on the range
$|t|\le 2C$. Therefore, by the contraction inequality,
\begin{align}
\label{eq:contr-sq}
\E_{\varepsilon}\Big[\sup_{\Delta_\phi \in\calH(r)}\Big|\frac1n\sum_{i=1}^n \varepsilon_i 
\Delta \ell^{\text{sq}}(Z_i,Z_i';\phi)\Big|
 \Big| (X_i,X_i')\Big]
 \lesssim \
\E_{\varepsilon}\Big[\sup_{\Delta_\phi \in\calH(r)}\Big|\frac1n\sum_{i=1}^n \varepsilon_i \Delta(X_i)\Big|
 \Big| (X_i,X_i')\Big].
\end{align}

\smallskip
\noindent\emph{(ii) Linear part.}
For any fixed $S\subset[d]$,
\[
\Delta\ell^{\text{lin}}_{S}(Z_i,Z_i';\phi)
=
\sigma(T) \Delta(U_{i,S})\cdot \hat\zeta^{S}(X_i,X_i')\{Y_i-\hat\mu(X_i)\},
\qquad U_{i,S}:=(X_{i,S},X'_{i,-S}).
\]
Conditional on the data, the coefficient $\hat\zeta^{S}(X_i,X_i')\{Y_i-\hat\mu(X_i)\}$ is bounded by $C$,
hence
\begin{align}
\label{eq:contr-lin}
\E_{\varepsilon}\Big[\sup_{\Delta_\phi \in\calH(r)}\Big|\frac1n\sum_{i=1}^n \varepsilon_i 
\Delta\ell^{\text{lin}}_{S}(Z_i,Z_i';\phi)\Big|
 \Big| (X_i,X_i')\Big]
 \lesssim \
\E_{\varepsilon}\Big[\sup_{\Delta_\phi \in\calH(r)}\Big|\frac1n\sum_{i=1}^n \varepsilon_i \Delta(U_{i,S})\Big|
 \Big| (X_i,X_i')\Big].
\end{align}

Combining \eqref{eq:symm-split} with \eqref{eq:contr-sq}--\eqref{eq:contr-lin} and using
$\sum_{S:a\notin S}w(S)=1$ and hiding absolute constants we get the following  bound
\begin{align*}
\E\Big[\sup_{\Delta_\phi \in\calH(r)}\Big|(P-\P_n)\Delta\ell(Z,Z';\phi,\hat g)\Big|
 \Big| (X_i,X_i')_{i=1}^n\Big]
 &\lesssim  \max_{S\subset[d]}
\E_{\varepsilon}\Big[\sup_{\Delta_\phi \in\calH(r)}\Big|\frac1n\sum_{i=1}^n \varepsilon_i \Delta(U_{i,S})\Big|
 \Big| (X_i,X_i')_{i=1}^n\Big]\\
 &\leq 
\E_{\varepsilon}\Big[ \max_{S\subset[d]} \sup_{\Delta_\phi \in\calH(r)}\Big|\frac1n\sum_{i=1}^n \varepsilon_i \Delta(U_{i,S})\Big|
 \Big| (X_i,X_i')_{i=1}^n\Big]\\
 & = \calR_n(r,\calH)
\end{align*}

By the same boundedness assumptions (B2), (B4), (B5) used above, the class
$\{\Delta\ell(\cdot;\phi,\hat g):\Delta_\phi \in\calH(r)\}$ is uniformly bounded by $\lesssim r$
and has conditional variance $\lesssim r^2$ (using also Lemma~\ref{lem:mix-L2}).
Hence a standard concentration inequality (Theorem 3.27 \citet{wainwright2019high}) for suprema of bounded empirical processes : for every $\delta\in(0,1)$,
with probability at least $1-\delta$,
\begin{equation}
\label{eq:local-dev-again}
\sup_{\Delta_\phi \in\calH(r)}\Big|(P-\P_n)\Delta\ell(Z,Z';\phi,\hat g)\Big|
 \lesssim\
\calR_n(r,\calH) + r\sqrt{\frac{\log(1/\delta)}{n}} + \frac{\log(1/\delta)}{n},
\qquad \forall r\ge 0.
\end{equation}

Applying Lemma~\ref{lem:emp-pop-rad} we get that  probability at least $1-\delta$ 
\begin{equation}
\label{eq:R-to-Rtilde}
\calR_n(r,\calH)
 \lesssim\
\wt{\calR}_n(r,\calH) + r\sqrt{\frac{\log(1/\delta)}{n}} + \frac{\log(1/\delta)}{n},
\qquad \forall r\ge 0.
\end{equation}
Plugging \eqref{eq:R-to-Rtilde} into \eqref{eq:local-dev-again} yields that, with probability at least $1-\delta$,
\begin{equation}
\label{eq:local-dev-Rtilde}
\sup_{\Delta_\phi \in\calH(r)}\Big|(P-\P_n)\Delta\ell(Z,Z';\phi,\hat g)\Big|
 \lesssim\
\wt{\calR}_n(r,\calH) + r\sqrt{\frac{\log(1/\delta)}{n}} + \frac{\log(1/\delta)}{n},
\qquad \forall r\ge 0.
\end{equation}

The bound \eqref{eq:local-dev-Rtilde} is proved for any \emph{fixed} radius $r$.
To apply it at the random radius $\|\hat\Delta_\phi\|_2$, we first make the bound uniform over $r\ge 0$ via a standard peeling argument.

By (B2), for any $\Delta\in\calH$ we have $\|\Delta\|_\infty \le 2C$, hence also $\|\Delta\|_2\le 2C$.
We define dyadic radii $r_k:=2^k r_n$ and let
\[
K  :=  \min\big\{k\ge 0:\ r_k\ge 2C\big\}
 =  \Big\lceil \log_2\!\Big(\frac{2C}{r_n}\Big)\Big\rceil_+,
\]
so that $\calH(r_k)=\calH$ for all $k\ge K$ and only the finitely many shells $k=0,1,\dots,K$ matter.

Applying \eqref{eq:local-dev-Rtilde} at radius $r_k$ with confidence $\delta_k$ and taking a union bound yields an event
of probability at least $1-\delta=1-n^{-2}$ on which, \emph{simultaneously for all} $k\in\{0,\dots,K\}$,
\begin{equation}
\label{eq:local-dev-shellk}
\sup_{\Delta\in\calH(r_k)}
\Big|(P-\P_n)\Delta\ell(Z,Z';\phi_a+\Delta,\hat g)\Big|
\;\lesssim\;
\wt{\calR}_n(r_k,\calH)
\;+\;
r_k\sqrt{\frac{\log(1/\delta_k)}{n}}
\;+\;
\frac{\log(1/\delta_k)}{n}.
\end{equation}
Note that this choice of $\delta_k$ only changes the logarithmic factors mildly:
\[
\log\Big(\frac{1}{\delta_k}\Big)
=
\log\Big(\frac{1}{\delta}\Big)
+\log\Big(\frac{\pi^2}{6}\Big)
+2\log(k+1)
\;\le\;
2\log n + O\big(\log(K+1)\big).
\]
Since $K=\lceil\log_2(2C/r_n)\rceil_+$, we have $\log(K+1)=O(\log\log(1/r_n))$, and thus
$\sqrt{\log(1/\delta_k)/n}\lesssim \sqrt{\log n/n}$ and $\log(1/\delta_k)/n\lesssim \log n/n$.

Finally, for an arbitrary $r\in[r_n,2C]$, we pick $k$ such that $r\le r_k\le 2r$; then $\calH(r)\subseteq\calH(r_k)$, so we may upper bound the right-hand of the concentration inequality side by replacing $r_k$ with $2r$. 

In case $\|\hat\Delta_\phi\|_2 \le r_n$, the theorem statement holds trivially ($\|\hat\Delta_\phi\|_2 \leq r_n^2$). Now suppose  $\|\hat\Delta_\phi\|_2 \geq r_n$. We substitute $r=\|\hat\Delta_\phi\|_2$ in \eqref{eq:local-dev-Rtilde} and with probability at least $1-n^{-2}$, we get
\begin{align}\label{eq:apply-random-radius}
\Big|(P-\P_n)\Delta\ell(Z,Z';\hat\phi,\hat g)\Big|
&\le
\sup_{\Delta\in\calH(\|\hat\Delta_\phi\|_2)}
\Big|(P-\P_n)\Delta\ell(Z,Z';\phi_a+\Delta,\hat g)\Big| \nonumber \\
&\lesssim 
\wt{\calR}_n(\|\hat\Delta_\phi\|_2,\calH)
+\|\hat\Delta_\phi\|_2\sqrt{\frac{\log(n)}{n}}
+\frac{\log(n)}{n} \nonumber\\
&\lesssim
\|\hat\Delta_\phi\|_2\,r_n^\ast + (r_n^\ast)^2
+\|\hat\Delta_\phi\|_2\sqrt{\frac{\log(n)}{n}}
+\frac{\log(n)}{n} \nonumber\\
&\lesssim
\|\hat\Delta_\phi\|_2\,r_n + r_n^2.
\end{align}

Where the second last inequality follows form Lemma \ref{lem:rad-scale} and the last inequality follows from the defintion of  $r_n = \sqrt{\log n/n} \lor r_n^\ast$.

Substituting  the bound \eqref{eq:apply-random-radius} in \eqref{eq:A-basic} we get, 
\begin{equation}
\label{eq:A-bound-explicit}
A
 = P\,\Delta\ell(Z,Z';\hat\phi,\hat g)
 \le 
(P-\P_n)\Delta\ell(Z,Z';\hat\phi,\hat g)
 \lesssim\
r_n\|\hat\Delta_\phi\|_2+r_n^2 .
\end{equation}

We will now combine the above bound with Lemma~\ref{lem:loss_bound} and absorb the linear term,
\begin{equation}
\label{eq:lossbound-plug}
\|\hat\Delta_\phi\|_2^2
 \lesssim\
A + \|\hat g-g_0\|_{\calG}^4.
\end{equation}
Plugging \eqref{eq:A-bound-explicit} into \eqref{eq:lossbound-plug} yields, with probability at least $1-O(n^{-2})$,
\begin{equation}
\label{eq:pre-young}
\|\hat\Delta_\phi\|_2^2
 \lesssim\
r_n\|\hat\Delta_\phi\|_2+r_n^2+\|\hat g-g_0\|_{\calG}^4.
\end{equation}
Finally, we use $r_n\|\hat\Delta_\phi\|_2 \le \tfrac{1}{2c}\|\hat\Delta_\phi\|_2^2 + \tfrac{c}{2}r_n^2$ (for an appropriate constant $c$) to get
\[
\|\hat\Delta_\phi\|_2^2
 \lesssim\
r_n^2+\|\hat g-g_0\|_{\calG}^4,
\]
i.e.
\[
\|\hat\phi-\phi_a\|_2^2
 \lesssim\
r_n^2+\|\hat g-g_0\|_{\calG}^4,
\]
which is the claimed $O_\P(\cdot)$ rate.

\end{proof}

%% file: appendix/experiments.tex
\section{Experiments}
\label{app:experiments}

\begin{figure}[t]
  \centering
  \includegraphics[width=0.9\linewidth]{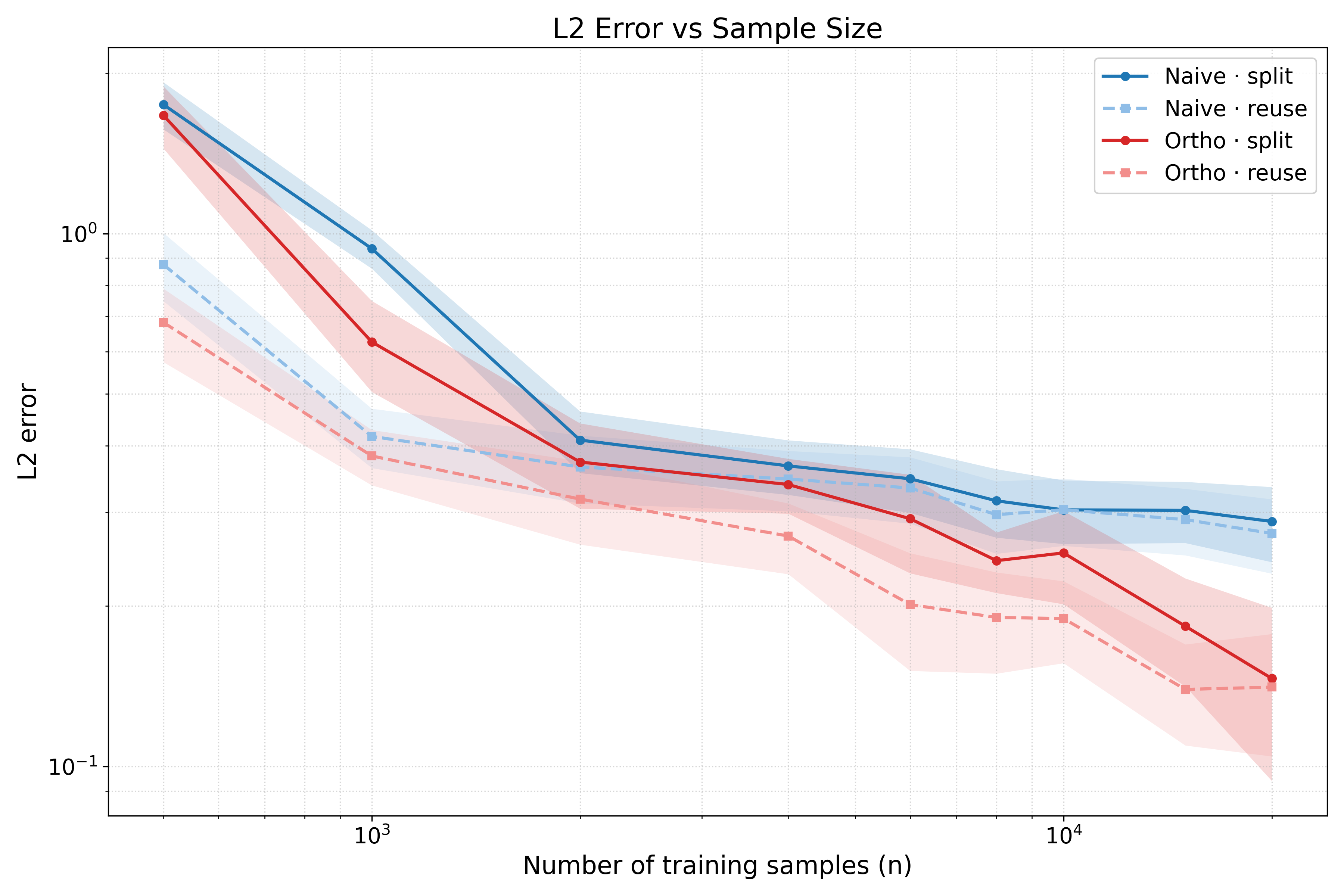}
  \caption{L2 risk versus training sample size $n$ (log--log scale). Curves show the mean across seeds for naive and orthogonal estimators under split vs reuse. Shaded bands indicate $\pm 1$ stderr around the mean.}
  \label{fig:l2-summary}
\end{figure}

\subsection{Data generating process}
We study a synthetic regression problem with $d=5$ covariates. For each run (for a fixed seed), we first generate a seed-specific covariance matrix $\Sigma\in\mathbb{R}^{d\times d}$ by drawing a random orthogonal matrix $Q$ (via QR decomposition of a standard Gaussian matrix) and independent eigenvalues
$\lambda_1,\dots,\lambda_d\sim \mathrm{Unif}(1-\text{spread},\,1+\text{spread})$ with $\text{spread}=0.6$, and setting
\[
\Sigma \;=\; Q\,\mathrm{diag}(\lambda_1,\dots,\lambda_d)\,Q^\top .
\]
This produces a positive-definite covariance with eigenvalues in $[0.4,1.6]$.

Given $\Sigma$, we draw training covariates i.i.d.\ as
\[
X \sim \mathcal{N}(0,\Sigma),
\]
and responses from an additive-noise model
\[
Y \;=\; f^\star(X) + \varepsilon,\qquad \varepsilon\sim\mathcal{N}(0,\sigma_\varepsilon^2),\ \ \sigma_\varepsilon=0.5,
\]
where the ground-truth regression function is the fixed nonlinear map
\[
f^\star(x) =
4.5\sin(1.2x_1) + 3.7\cos(0.8x_2x_3) + 2.0x_1x_2 - 0.9\tanh(x_3)
+ 0.6\exp(-0.5x_2^2) + \sigma\!\bigl(1.5(x_1-x_3)\bigr),
\]
with $\sigma(\cdot)$ the logistic sigmoid.

\subsection{Target: SHAP value of a single feature}
For each seed, we fix an evaluation set of $200$ covariate points drawn from the same distribution $X\sim\mathcal{N}(0,\Sigma)$ and define the target as the SHAP value curve for a single feature (the second coordinate).
Since the exact SHAP values are intractable here, we use a high-accuracy Monte-Carlo approximation as an oracle target: for each evaluation point $x$, we approximate the standard subset-form SHAP definition by averaging over $M=12{,}000$ background samples $X'\sim\mathcal{N}(0,\Sigma)$ and summing over all subsets with Shapley weights.

\subsection{Algorithms evaluated}
We compare two learners for the SHAP curve $\phi(x)$, both of which rely on an estimated nuisance regression function $\mu(x)\approx \mathbb{E}[Y\mid X=x]$.

\paragraph{Naive SHAP-curve learner.}
The naive approach learns $\phi(\cdot)$ by minimizing a squared-error objective derived from the SHAP characterization, but \emph{without} any debiasing/orthogonalization corrections. Conceptually, it plugs in the fitted nuisance $\widehat\mu$ into the SHAP-based loss and trains $\widehat\phi$ directly.

\paragraph{Orthogonal (debiased) SHAP-curve learner.}
The orthogonal approach augments the same core objective with Neyman-orthogonal correction terms that reduce sensitivity to nuisance estimation error. These corrections involve density-ratio factors computed under a Gaussian model for $X$, using an estimated covariance $\widehat\Sigma$ (the sample covariance of the available training covariates, with a small diagonal jitter for numerical stability).

\paragraph{Split vs.\ reuse training regimes.}
For each sample size $n$, we evaluate both algorithms under two data-use protocols:
(i) \textbf{split}, where half of the labeled data are used to fit $\widehat\mu$ and the other half to fit $\widehat\phi$; and
(ii) \textbf{reuse}, where the full labeled dataset is used for both $\widehat\mu$ and $\widehat\phi$ (with independent randomization for optimization).

This yields four methods: naive/split, naive/reuse, orthogonal/split, orthogonal/reuse.

\subsection{Neural network configurations}
Both the nuisance model and SHAP-curve model are multilayer perceptrons (MLPs) with ReLU activations, trained with Adam (learning rate $10^{-3}$) and weight decay $10^{-4}$.

\paragraph{Neural networks for $\widehat\mu$ and $\phi$}
We use a 3-hidden-layer MLP with hidden widths $(128,128,128)$.

When forming mixed inputs that require draws of $X'$, we use bootstrap resampling from the observed training covariates.

\subsection{Experimental grid, seeds, and metric}
We use 6 independent random seeds and the following training sample sizes:
\[
n \in \{500,1000,2000,4000,6000,8000,10000,15000,20000\}.
\]
Performance is measured by mean squared error between the learned SHAP curve and the oracle SHAP curve on the fixed evaluation set of 200 points, averaged over seeds; we also report standard errors across seeds.

\subsection{Results (from the combined run)}
The combined results show that orthogonalization consistently improves accuracy, and reuse improves performance especially at smaller sample sizes. For instance, the mean MSE for the orthogonal+reuse method decreases from $0.681$ (s.e.\ $0.107$) at $n=500$ to $0.141$ (s.e.\ $0.036$) at $n=20000$, whereas the naive+split baseline decreases from $1.745$ (s.e.\ $0.177$) to $0.288$ (s.e.\ $0.046$) over the same range.

%% file: bib.bib
@article{lundberg2017unified,
  title={A unified approach to interpreting model predictions},
  author={Lundberg, Scott M and Lee, Su-In},
  journal={Advances in neural information processing systems},
  volume={30},
  year={2017}
}

@article{lundberg2018consistent,
  title={Consistent individualized feature attribution for tree ensembles},
  author={Lundberg, Scott M and Erion, Gabriel G and Lee, Su-In},
  journal={arXiv preprint arXiv:1802.03888},
  year={2018}
}

@inproceedings{sundararajan2020many,
  title={The many Shapley values for model explanation},
  author={Sundararajan, Mukund and Najmi, Amir},
  booktitle={International conference on machine learning},
  pages={9269--9278},
  year={2020},
  organization={PMLR}
}

@inproceedings{janzing2020feature,
  title={Feature relevance quantification in explainable AI: A causal problem},
  author={Janzing, Dominik and Minorics, Lenon and Bl{\"o}baum, Patrick},
  booktitle={International Conference on artificial intelligence and statistics},
  pages={2907--2916},
  year={2020},
  organization={PMLR}
}

@article{heskes2020causal,
  title={Causal shapley values: Exploiting causal knowledge to explain individual predictions of complex models},
  author={Heskes, Tom and Sijben, Evi and Bucur, Ioan Gabriel and Claassen, Tom},
  journal={Advances in neural information processing systems},
  volume={33},
  pages={4778--4789},
  year={2020}
}

@article{miftachov2024shapley,
  title={Shapley Curves: A Smoothing Perspective},
  author={Miftachov, Ratmir and Keyilbar, Georg and H{\"a}rdle, Wolfgang Karl},
  journal={Journal of Business \& Economic Statistics},
  pages={1--12},
  year={2024},
  publisher={Taylor \& Francis}
}

@article{williamson2021nonparametric,
  title={Nonparametric variable importance assessment using machine learning techniques},
  author={Williamson, Brian D and Gilbert, Peter B and Carone, Marco and Simon, Noah},
  journal={Biometrics},
  volume={77},
  number={1},
  pages={9--22},
  year={2021},
  publisher={Oxford University Press}
}

@article{williamson2023general,
  title={A general framework for inference on algorithm-agnostic variable importance},
  author={Williamson, Brian D and Gilbert, Peter B and Simon, Noah R and Carone, Marco},
  journal={Journal of the American Statistical Association},
  volume={118},
  number={543},
  pages={1645--1658},
  year={2023},
  publisher={Taylor \& Francis}
}

@inproceedings{williamson2020efficient,
  title={Efficient nonparametric statistical inference on population feature importance using Shapley values},
  author={Williamson, Brian and Feng, Jean},
  booktitle={International conference on machine learning},
  pages={10282--10291},
  year={2020},
  organization={PMLR}
}

@article{chernozhukov2022automatic,
  title={Automatic debiased machine learning of causal and structural effects},
  author={Chernozhukov, Victor and Newey, Whitney K and Singh, Rahul},
  journal={Econometrica},
  volume={90},
  number={3},
  pages={967--1027},
  year={2022},
  publisher={Wiley Online Library}
}

@article{chernozhukov2022nested,
  title={Automatic debiased machine learning for dynamic treatment effects and general nested functionals},
  author={Chernozhukov, Victor and Newey, Whitney and Singh, Rahul and Syrgkanis, Vasilis},
  journal={arXiv preprint arXiv:2203.13887},
  year={2022}
}

@misc{chernozhukov2018double,
  title={Double/debiased machine learning for treatment and structural parameters},
  author={Chernozhukov, Victor and Chetverikov, Denis and Demirer, Mert and Duflo, Esther and Hansen, Christian and Newey, Whitney and Robins, James},
  year={2018},
  publisher={Oxford University Press Oxford, UK}
}

@software{lundberg2025shap,
  author       = {Scott M. Lundberg and the SHAP Developers},
  title        = {{SHAP}: {SH}apley {A}dditive {E}xplanations},
  year         = {2025},
  version      = {0.47.2},
  date         = {2025-04-17},
  url          = {https://github.com/shap/shap},
  note         = {Python package},
}

@incollection{shapley1953value,
  author       = {Shapley, Lloyd S.},
  title        = {A Value for n-Person Games},
  booktitle    = {Contributions to the Theory of Games, {V}olume {II}},
  editor       = {Kuhn, Harold W. and Tucker, Albert W.},
  series       = {Annals of Mathematics Studies},
  volume       = {28},
  pages        = {307--317},
  publisher    = {Princeton University Press},
  address      = {Princeton, NJ},
  year         = {1953}
}

@article{lundberg2020local,
  title={From local explanations to global understanding with explainable AI for trees},
  author={Lundberg, Scott M and Erion, Gabriel and Chen, Hugh and DeGrave, Alex and Prutkin, Jordan M and Nair, Bala and Katz, Ronit and Himmelfarb, Jonathan and Bansal, Nisha and Lee, Su-In},
  journal={Nature machine intelligence},
  volume={2},
  number={1},
  pages={56--67},
  year={2020},
  publisher={Nature Publishing Group}
}

@article{chen2023algorithms,
  title={Algorithms to estimate Shapley value feature attributions},
  author={Chen, Hugh and Covert, Ian C and Lundberg, Scott M and Lee, Su-In},
  journal={Nature Machine Intelligence},
  volume={5},
  number={6},
  pages={590--601},
  year={2023},
  publisher={Nature Publishing Group UK London}
}

@article{frye2020asymmetric,
  title={Asymmetric shapley values: incorporating causal knowledge into model-agnostic explainability},
  author={Frye, Christopher and Rowat, Colin and Feige, Ilya},
  journal={Advances in neural information processing systems},
  volume={33},
  pages={1229--1239},
  year={2020}
}

@article{wang2023total,
  title={Total variation floodgate for variable importance inference in classification},
  author={Wang, Wenshuo and Janson, Lucas and Lei, Lihua and Ramdas, Aaditya},
  journal={arXiv preprint arXiv:2309.04002},
  year={2023}
}

@software{R-shapley,
  author       = {Efthimios F. Haghish},
  title        = {{shapley}: Weighted Mean SHAP and Confidence Intervals for Robust Feature Assessment in Machine Learning},
  version      = {0.5},
  date         = {2025},
  year = {2025},
  url          = {https://CRAN.R-project.org/package=shapley},
  organization = {Comprehensive R Archive Network (CRAN)},
  type         = {R package}
}

@article{breiman2001random,
  title={Random Forests},
  author={Breiman, L},
  journal={Machine learning},
  volume={45},
  pages={5--32},
  year={2001}
}

@article{fisher2019all,
  title={All models are wrong, but many are useful: Learning a variable's importance by studying an entire class of prediction models simultaneously},
  author={Fisher, Aaron and Rudin, Cynthia and Dominici, Francesca},
  journal={Journal of Machine Learning Research},
  volume={20},
  number={177},
  pages={1--81},
  year={2019}
}

@article{van2006statistical,
  title={Statistical inference for variable importance},
  author={Van der Laan, Mark J},
  journal={The International Journal of Biostatistics},
  volume={2},
  number={1},
  year={2006},
  publisher={De Gruyter}
}

@inproceedings{ribeiro2016should,
  title={" Why should i trust you?" Explaining the predictions of any classifier},
  author={Ribeiro, Marco Tulio and Singh, Sameer and Guestrin, Carlos},
  booktitle={Proceedings of the 22nd ACM SIGKDD international conference on knowledge discovery and data mining},
  pages={1135--1144},
  year={2016}
}

@article{guidotti2018survey,
  title={A survey of methods for explaining black box models},
  author={Guidotti, Riccardo and Monreale, Anna and Ruggieri, Salvatore and Turini, Franco and Giannotti, Fosca and Pedreschi, Dino},
  journal={ACM computing surveys (CSUR)},
  volume={51},
  number={5},
  pages={1--42},
  year={2018},
  publisher={ACM New York, NY, USA}
}

@article{zhang2020floodgate,
  title={Floodgate: inference for model-free variable importance},
  author={Zhang, Lu and Janson, Lucas},
  journal={arXiv preprint arXiv:2007.01283},
  year={2020}
}

@article{bach2015pixel,
  title={On pixel-wise explanations for non-linear classifier decisions by layer-wise relevance propagation},
  author={Bach, Sebastian and Binder, Alexander and Montavon, Gr{\'e}goire and Klauschen, Frederick and M{\"u}ller, Klaus-Robert and Samek, Wojciech},
  journal={PloS one},
  volume={10},
  number={7},
  pages={e0130140},
  year={2015},
  publisher={Public Library of Science San Francisco, CA USA}
}

@inproceedings{shrikumar2017learning,
  title={Learning important features through propagating activation differences},
  author={Shrikumar, Avanti and Greenside, Peyton and Kundaje, Anshul},
  booktitle={International conference on machine learning},
  pages={3145--3153},
  year={2017},
  organization={PMlR}
}

@article{lei2018distribution,
  title={Distribution-free predictive inference for regression},
  author={Lei, Jing and G’Sell, Max and Rinaldo, Alessandro and Tibshirani, Ryan J and Wasserman, Larry},
  journal={Journal of the American Statistical Association},
  volume={113},
  number={523},
  pages={1094--1111},
  year={2018},
  publisher={Taylor \& Francis}
}

@book{bickel1993efficient,
  title={Efficient and adaptive estimation for semiparametric models},
  author={Bickel, Peter J and Klaassen, Chris AJ and Bickel, Peter J and Ritov, Ya’acov and Klaassen, J and Wellner, Jon A and Ritov, YA'Acov},
  volume={4},
  year={1993},
  publisher={Johns Hopkins University Press Baltimore}
}

@article{kennedy2016semiparametric,
  title={Semiparametric theory and empirical processes in causal inference},
  author={Kennedy, Edward H},
  journal={Statistical causal inferences and their applications in public health research},
  pages={141--167},
  year={2016},
  publisher={Springer}
}

@book{kosorok2008introduction,
  title={Introduction to empirical processes and semiparametric inference},
  author={Kosorok, Michael R},
  volume={61},
  year={2008},
  publisher={Springer}
}

@book{tsiatis2006semiparametric,
  title={Semiparametric theory and missing data},
  author={Tsiatis, Anastasios A},
  volume={4},
  year={2006},
  publisher={Springer}
}

@article{van2006targeted,
  title={Targeted maximum likelihood learning},
  author={Van Der Laan, Mark J and Rubin, Daniel},
  journal={The international journal of biostatistics},
  volume={2},
  number={1},
  year={2006},
  publisher={De Gruyter}
}

@inproceedings{chernozhukov2022riesznet,
  title={Riesznet and forestriesz: Automatic debiased machine learning with neural nets and random forests},
  author={Chernozhukov, Victor and Newey, Whitney and Quintas-Mart{\i}nez, V{\i}ctor M and Syrgkanis, Vasilis},
  booktitle={International Conference on Machine Learning},
  pages={3901--3914},
  year={2022},
  organization={PMLR}
}

@article{chernozhukov2023automatic,
  title={Automatic debiased machine learning for covariate shifts},
  author={Chernozhukov, Victor and Newey, Michael and Newey, Whitney K and Singh, Rahul and Srygkanis, Vasilis},
  journal={arXiv preprint arXiv:2307.04527},
  year={2023}
}

@article{foster2023orthogonal,
  title={Orthogonal statistical learning},
  author={Foster, Dylan J and Syrgkanis, Vasilis},
  journal={The Annals of Statistics},
  volume={51},
  number={3},
  pages={879--908},
  year={2023},
  publisher={Institute of Mathematical Statistics}
}

@article{whitehouse2024orthogonal,
  title={Orthogonal causal calibration},
  author={Whitehouse, Justin and Jung, Christopher and Syrgkanis, Vasilis and Wilder, Bryan and Wu, Zhiwei Steven},
  journal={arXiv preprint arXiv:2406.01933},
  year={2024}
}

@inproceedings{oprescu2019orthogonal,
  title={Orthogonal random forest for causal inference},
  author={Oprescu, Miruna and Syrgkanis, Vasilis and Wu, Zhiwei Steven},
  booktitle={International Conference on Machine Learning},
  pages={4932--4941},
  year={2019},
  organization={PMLR}
}

@article{wager2018estimation,
  title={Estimation and inference of heterogeneous treatment effects using random forests},
  author={Wager, Stefan and Athey, Susan},
  journal={Journal of the American Statistical Association},
  volume={113},
  number={523},
  pages={1228--1242},
  year={2018},
  publisher={Taylor \& Francis}
}

@article{luedtke2016statistical,
  title={Statistical inference for the mean outcome under a possibly non-unique optimal treatment strategy},
  author={Luedtke, Alexander R and Van Der Laan, Mark J},
  journal={Annals of statistics},
  volume={44},
  number={2},
  pages={713},
  year={2016}
}

@article{goldberg2014comment,
  title={Comment on “Dynamic treatment regimes: Technical challenges and applications”},
  author={Goldberg, Yair and Song, Rui and Zeng, Donglin and Kosorok, Michael R},
  journal={Electronic journal of statistics},
  volume={8},
  number={1},
  pages={1290},
  year={2014}
}

@article{chen2023inference,
  title={Inference on Optimal Dynamic Policies via Softmax Approximation},
  author={Chen, Qizhao and Austern, Morgane and Syrgkanis, Vasilis},
  journal={arXiv preprint arXiv:2303.04416},
  year={2023}
}

@article{levis2023covariate,
  title={Covariate-assisted bounds on causal effects with instrumental variables},
  author={Levis, Alexander W and Bonvini, Matteo and Zeng, Zhenghao and Keele, Luke and Kennedy, Edward H},
  journal={arXiv preprint arXiv:2301.12106},
  year={2023}
}

@inproceedings{balke1994counterfactual,
  title={Counterfactual probabilities: Computational methods, bounds and applications},
  author={Balke, Alexander and Pearl, Judea},
  booktitle={Uncertainty in artificial intelligence},
  pages={46--54},
  year={1994},
  organization={Elsevier}
}

@article{balke1997bounds,
  title={Bounds on treatment effects from studies with imperfect compliance},
  author={Balke, Alexander and Pearl, Judea},
  journal={Journal of the American statistical Association},
  volume={92},
  number={439},
  pages={1171--1176},
  year={1997},
  publisher={Taylor \& Francis}
}

@article{covert2020understanding,
  title={Understanding global feature contributions with additive importance measures},
  author={Covert, Ian and Lundberg, Scott M and Lee, Su-In},
  journal={Advances in Neural Information Processing Systems},
  volume={33},
  pages={17212--17223},
  year={2020}
}

@article{owen2017shapley,
  title={On Shapley value for measuring importance of dependent inputs},
  author={Owen, Art B and Prieur, Cl{\'e}mentine},
  journal={SIAM/ASA Journal on Uncertainty Quantification},
  volume={5},
  number={1},
  pages={986--1002},
  year={2017},
  publisher={SIAM}
}

@inproceedings{verhaeghe2023powershap,
  author    = {Verhaeghe, Jarne and Van Der Donckt, Jeroen and Ongenae, Femke and Van Hoecke, Sofie},
  title     = {{PowerShap}: A Power-Full {Shapley} Feature Selection Method},
  booktitle = {Machine Learning and Knowledge Discovery in Databases},
  year      = {2023},
  publisher = {Springer International Publishing},
  address   = {Cham},
  pages     = {71--87},
  isbn      = {978-3-031-26387-3},
}

@book{van2000asymptotic,
  title={Asymptotic statistics},
  author={Van der Vaart, Aad W},
  volume={3},
  year={2000},
  publisher={Cambridge university press}
}

@book{wainwright2019high,
  title={High-dimensional statistics: A non-asymptotic viewpoint},
  author={Wainwright, Martin J},
  volume={48},
  year={2019},
  publisher={Cambridge university press}
}

@inproceedings{hardt2021amazon,
  title={Amazon sagemaker clarify: Machine learning bias detection and explainability in the cloud},
  author={Hardt, Michaela and Chen, Xiaoguang and Cheng, Xiaoyi and Donini, Michele and Gelman, Jason and Gollaprolu, Satish and He, John and Larroy, Pedro and Liu, Xinyu and McCarthy, Nick and others},
  booktitle={Proceedings of the 27th ACM SIGKDD conference on knowledge discovery \& data mining},
  pages={2974--2983},
  year={2021}
}

@article{liu2022diagnosis,
  title={Diagnosis of Parkinson's disease based on SHAP value feature selection},
  author={Liu, Yuchun and Liu, Zhihui and Luo, Xue and Zhao, Hongjingtian},
  journal={Biocybernetics and Biomedical Engineering},
  volume={42},
  number={3},
  pages={856--869},
  year={2022},
  publisher={Elsevier}
}

@article{sebastian2024feature,
  title={A feature selection method based on Shapley values robust for concept shift in regression},
  author={Sebasti{\'a}n, Carlos and Gonz{\'a}lez-Guill{\'e}n, Carlos E},
  journal={Neural Computing and Applications},
  volume={36},
  number={23},
  pages={14575--14597},
  year={2024},
  publisher={Springer}
}

@article{dibaeinia2025interpretable,
  title={Interpretable AI for inference of causal molecular relationships from omics data},
  author={Dibaeinia, Payam and Ojha, Abhishek and Sinha, Saurabh},
  journal={Science Advances},
  volume={11},
  number={7},
  pages={eadk0837},
  year={2025},
  publisher={American Association for the Advancement of Science}
}

@inproceedings{marcilio2020explanations,
  title={From explanations to feature selection: assessing SHAP values as feature selection mechanism},
  author={Marc{\'\i}lio, Wilson E and Eler, Danilo M},
  booktitle={2020 33rd SIBGRAPI conference on Graphics, Patterns and Images (SIBGRAPI)},
  pages={340--347},
  year={2020},
  organization={Ieee}
}

@article{kraev2024shap,
  title={Shap-Select: Lightweight Feature Selection Using SHAP Values and Regression},
  author={Kraev, Egor and Koseoglu, Baran and Traverso, Luca and Topiwalla, Mohammed},
  journal={arXiv preprint arXiv:2410.06815},
  year={2024}
}

@article{keany2020borutashap,
  title={BorutaShap: A wrapper feature selection method which combines the Boruta feature selection algorithm with Shapley values.},
  author={Keany, Eoghan},
  journal={Zenodo},
  year={2020}
}

@article{bartlett2005local,
  title={Local rademacher complexities},
  author={Bartlett, Peter L and Bousquet, Olivier and Mendelson, Shahar},
  year={2005}
}

@article{roth1988introduction,
  title={Introduction to the Shapley value},
  author={Roth, Alvin E},
  journal={The Shapley value},
  volume={1},
  year={1988},
  publisher={University of Cambridge Press, Cambridge}
}

@article{winter2002shapley,
  title={The shapley value},
  author={Winter, Eyal},
  journal={Handbook of game theory with economic applications},
  volume={3},
  pages={2025--2054},
  year={2002},
  publisher={Elsevier}
}

@article{morzywolek2025inference,
  title={Inference on Local Variable Importance Measures for Heterogeneous Treatment Effects},
  author={Morzywolek, Pawel and Gilbert, Peter B and Luedtke, Alex},
  journal={arXiv preprint arXiv:2510.18843},
  year={2025}
}

@article{herbinger2024decomposing,
  title={Decomposing global feature effects based on feature interactions},
  author={Herbinger, Julia and Wright, Marvin N and Nagler, Thomas and Bischl, Bernd and Casalicchio, Giuseppe},
  journal={Journal of Machine Learning Research},
  volume={25},
  number={381},
  pages={1--65},
  year={2024}
}

@article{poudel2024explaining,
  title={Explaining customer churn prediction in telecom industry using tabular machine learning models},
  author={Poudel, Sumana Sharma and Pokharel, Suresh and Timilsina, Mohan},
  journal={Machine Learning with Applications},
  volume={17},
  pages={100567},
  year={2024},
  publisher={Elsevier}
}

@article{peng2023research,
  title={Research on customer churn prediction and model interpretability analysis},
  author={Peng, Ke and Peng, Yan and Li, Wenguang},
  journal={Plos one},
  volume={18},
  number={12},
  pages={e0289724},
  year={2023},
  publisher={Public Library of Science San Francisco, CA USA}
}

@article{aas2021explaining,
  title={Explaining individual predictions when features are dependent: More accurate approximations to Shapley values},
  author={Aas, Kjersti and Jullum, Martin and L{\o}land, Anders},
  journal={Artificial Intelligence},
  volume={298},
  pages={103502},
  year={2021},
  publisher={Elsevier}
}

@article{mahajan2023development,
  title={Development and validation of a machine learning model to identify patients before surgery at high risk for postoperative adverse events},
  author={Mahajan, Aman and Esper, Stephen and Oo, Thien Htay and McKibben, Jeffery and Garver, Michael and Artman, Jamie and Klahre, Cynthia and Ryan, John and Sadhasivam, Senthilkumar and Holder-Murray, Jennifer and others},
  journal={JAMA Network Open},
  volume={6},
  number={7},
  pages={e2322285--e2322285},
  year={2023},
  publisher={American Medical Association}
}

@article{yap2021verifying,
  title={Verifying explainability of a deep learning tissue classifier trained on RNA-seq data},
  author={Yap, Melvyn and Johnston, Rebecca L and Foley, Helena and MacDonald, Samual and Kondrashova, Olga and Tran, Khoa A and Nones, Katia and Koufariotis, Lambros T and Bean, Cameron and Pearson, John V and others},
  journal={Scientific reports},
  volume={11},
  number={1},
  pages={2641},
  year={2021},
  publisher={Nature Publishing Group UK London}
}

@article{whitehouse2025inference,
  title={Inference on optimal policy values and other irregular functionals via smoothing},
  author={Whitehouse, Justin and Austern, Morgane and Syrgkanis, Vasilis},
  journal={arXiv preprint arXiv:2507.11780},
  year={2025}
}

@book{pearl2014probabilistic,
  title={Probabilistic reasoning in intelligent systems: networks of plausible inference},
  author={Pearl, Judea},
  year={2014},
  publisher={Elsevier}
}

@book{bogachev2007measure,
  title={Measure theory},
  author={Bogachev, Vladimir Igorevich and Ruas, Maria Aparecida Soares},
  volume={1},
  number={1},
  year={2007},
  publisher={Springer}
}

@book{dellacherie2011probabilities,
  title={Probabilities and potential, c: potential theory for discrete and continuous semigroups},
  author={Dellacherie, Claude and Meyer, P-A},
  volume={151},
  year={2011},
  publisher={Elsevier}
}

@article{poussin1915integrale,
  title={Sur l'int{\'e}grale de Lebesgue},
  author={Poussin, C De La Vall{\'e}e},
  journal={Transactions of the American Mathematical Society},
  pages={435--501},
  year={1915},
  publisher={JSTOR}
}
